\documentclass{siamart251216}

\usepackage{amsmath,amssymb,amsfonts,bm}
\usepackage{mathtools}
\usepackage{graphicx}
\usepackage{booktabs}
\usepackage{algorithm}
\usepackage{algpseudocode}
\usepackage[many]{tcolorbox}
\usepackage{xcolor}
\usepackage{hyperref}
\usepackage{verbatim}

\usepackage{amssymb} 



\hypersetup{colorlinks=true, linkcolor=blue, citecolor=blue, urlcolor=blue}
\raggedbottom

\newsiamremark{remark}{Remark}


\newcommand{\E}{\mathbb{E}}
\newcommand{\N}{\mathcal{N}}
\newcommand{\R}{\mathbb{R}}
\newcommand{\TSI}{\textsc{TSI}}
\newcommand{\sTSI}{\hat{s}_{\TSI}}

\newcommand{\sTWD}{\hat{s}_{\textsc{TWD}}}
\newcommand{\sBLND}{\hat{s}_{\textsc{BLEND}}}
\newcommand{\Var}{\mathrm{Var}}
\newcommand{\Cov}{\mathrm{Cov}}
\newcommand{\dd}{\mathrm{d}}
\newcommand{\tr}{\mathrm{Tr}}

\renewcommand{\epsilon}{\varepsilon}
\newcommand{\LRp}[1]{\left( #1 \right)}
\newcommand{\LRs}[1]{\left[ #1 \right]}
\newcommand{\LRc}[1]{\left\{ #1 \right\}}
\newcommand{\nor}[1]{\left\| #1 \right\|}

\newcommand{\by}{y}


\DeclareFontShape{OT1}{cmr}{m}{scit}{<->ssub*cmr/m/sc}{}

\newtcolorbox{identitybox}{
 colback=gray!5!white,
 colframe=gray!75!black,
 fonttitle=\bfseries,
 arc=2mm,
 boxrule=1pt
}


\title{Variance Reduced Diffusion Sampling via Target Score Identity}

\author{%
 Alois Duston%
 \thanks{The Oden Institute for Computational Engineering and Sciences,
 The University of Texas at Austin (\email{alois.duston@utexas.edu}).}%
 \and
 Tan Bui Thanh%
 \thanks{The Oden Institute for Computational Engineering and Sciences, and
 the Department of Aerospace Engineering \& Engineering Mechanics, 
 The University of Texas at Austin (\email{tanbui@oden.utexas.edu}).}%
}

\date{July 2025}

\usepackage[numbers,sort&compress]{natbib}
\usepackage{algorithm}
\usepackage{algpseudocode}
\usepackage{tikz}
\usepackage{pgfplots}
\pgfplotsset{compat=1.18}
\usepackage[T1]{fontenc}
\usepackage{lmodern} 

\usepackage{cleveref}
\DeclareMathOperator{\diag}{diag}

\begin{document}
\DeclareFontShape{OT1}{cmr}{m}{scit}{<->ssub*cmr/m/sc}{}
\newcommand{\EE}{\mathbb{E}}
\newcommand{\mc}[1]{\mathcal{#1}}
\newcommand{\mb}[1]{\mathbf{#1}}
\newcommand{\mbb}[1]{\mathbb{#1}}
\newcommand{\mi}[1]{\mathit{#1}}

\newcommand{\bs}[1]{\boldsymbol{#1}}

\maketitle

\begin{abstract}
We study variance reduction for score estimation and diffusion based sampling in settings where the clean (target) score is available or can be approximated.
We start from the Target Score Identity (TSI), which expresses the noisy marginal score as a conditional expectation under the forward diffusion kernel. Building on this, we develop:
(i) a nonparametric estimator based on self normalized importance sampling that can be used directly with standard solvers
(ii) a variance-minimizing \emph{state and time dependent} blending rule between Tweedie type and TSI estimators together with an anticorrelation analysis,
(iii) a data-only extension based on locally fitted proxy scores, and
(iv) a 
likelihood informed extension to Bayesian inverse problems.
Experiments on synthetic targets and PDE governed inverse problems demonstrate improved sample quality for a fixed simulation budget.
\end{abstract}

 
\vspace{-1.0em}
\section{Introduction}
\begin{table}[H]
\centering
\label{tab:notation}
\begin{tabular}{ll}
\toprule
Symbol & Meaning \\
\midrule
$d$ & ambient dimension of data\\
$x_0 \in \R^d$ & clean state with density $p_0$ \\
$x_t \in \R^d$ & forward-diffused state at time $t$ with density $p_t$ \\
$y \in \R^d$ & generic evaluation point for the time-$t$ state (often $y=x_t$) \\
$p_{t|0}(x_t\mid x_0)$ & OU forward transition density (Mehler kernel) \\
$s_t(z) := \nabla_z \log p_t(z)$ & time-$t$ score; in particular $s_0(x_0)=\nabla_{x_0}\log p_0(x_0)$ \\
$N_{\text{ref}}$ & reference-set, size $N_{\text{ref}}$\\
$\{x_0^i\}_{i=1}^{N_{\text{ref}}}$ & reference set (typically i.i.d.\ samples from $p_0$) \\
$\sTWD(y,t)$ & Tweedie score estimator, formed from $\{x_0^i\}_{i=1}^{N_{\text{ref}}}$
\\
$\sTSI(y,t)$ & \TSI $ $ score estimator, formed from  $\{x_0^i, s_0(x_0^i)\}_{i=1}^{N_{\text{ref}}}$  \\
$\lambda(y,t)\in[0,1]$ & state- and time-dependent blending weight \\
$\sBLND(y,t)$ & blended score estimator, given $\lambda\sTWD + (1 - \lambda)\sTSI$\\
\bottomrule
\end{tabular}
\end{table}

Diffusion and flow models have achieved strong empirical performance across modalities by learning the \emph{score function} $s_t = \nabla\log p_t$ of a probability density $p_t$ along a decreasing noise trajectory and integrating a reverse dynamics to synthesize samples \citep{ho2020denoising,song2021score,dhariwal2021diffusion,lipman2022flow,liu2023flow,song2023consistency}.
Despite rapid progress in output realism and generation speed from score based generative models,  a central challenge is \emph{sampling fidelity}: the ability of a sampler to faithfully resolve fine scale geometric structure (e.g., thin manifolds) and to correctly represent separated modes with the right relative weights \citep{ho2020denoising,song2021score,dhariwal2021diffusion,karras2022elucidating}. In practice, these fine scale density features are concentrated at small diffusion times, precisely where the standard mechanism for score estimation (e.g., Tweedie type denoising estimators) becomes ill conditioned and high variance. This can manifest as oversmoothing/mode imbalance for learned denoisers, or as sample memorization and local density fragmentation for nonparametric SNIS baselines \citep{song2021score,karras2022elucidating}.

Most existing strategies for managing score estimator variance fall into two categories: architectural methods that embed inductive biases into network designs \citep{ho2020denoising,nichol2021improved,dhariwal2021diffusion}, and sampler specific accelerations such as DDIM \citep{song2021ddim} and high order ODE solvers \citep{lu2022dpmsolver,karras2022elucidating}. 
Our perspective is orthogonal to these directions: we improve the statistical estimator of the score field itself, pointwise at the state $y$ and time $t$, so that any downstream sampler or distillation scheme inherits lower variance and achieves higher sampling fidelity.

A key challenge is that an \emph{unbiased identity for the score does not by itself yield a low-variance sampling algorithm}.
In practice, common unbiased estimators fail in complementary regimes: the Tweedie estimator can be noisy at small diffusion times, while TSI-based estimators \cite{debortoli2024tsm} can be noisy at large diffusion times or when the target density is nearly singular.
This motivates an \emph{adaptive blending} viewpoint: we treat each identity as producing an inference-time score estimate and combine them with a state-dependent weight $\lambda(y,t)$ chosen to reduce variance through error cancellation.
In contrast to \citet{debortoli2024tsm}, who observes that convex combinations of unbiased identities remain valid and uses such combinations to construct training objectives, we develop and justify a principled inference-time framework that \emph{selects} $\lambda(y,t)$ to approximately minimize estimator variance by exploiting the estimators' error structure.
The resulting blended score is designed to drop into standard reverse-time ODE/SDE solvers (and related distillation procedures) as a plug-in replacement, without changing the model architecture or the integrator. All of our nonparametric inference-time score estimators are implemented via \emph{self-normalized importance sampling (SNIS)} over a fixed reference set.

Concretely our main contributions in this work are as follows:
\begin{itemize}
 \item First, in \cref{sect:CSE,subsec:optimal_blending}, we introduce a blending framework that operates directly on score estimators for diffusion sampling: a nonparametric TSI score estimator computed from a fixed reference set, together with a state- and time-dependent convex combination with the Tweedie score estimator chosen by conditional variance minimization from plug-in SNIS (co)variance estimates at $(y,t)$.

 \item Second, in \cref{sect:negativeCorrelation}, we analyze the error structure underlying this blend and prove exact anticorrelation of the Monte Carlo errors in the linear--Gaussian case; more generally, we establish negative correlation for small diffusion time under regularity conditions on the target distribution, explaining when variance-minimizing blending yields the largest error reduction.

 \item Third, in \cref{sec:learned_proxy} and \cref{app:score_proxy_details}, we generalize the method to data-only settings by introducing a learned local Gaussian proxy for the unavailable initial score and showing that the same variance-minimizing blending machinery applies with this proxy.

 \item Finally, in \cref{sec:bayes-inverse}, we extend the framework to Bayesian inverse problems by incorporating the likelihood into the SNIS weights, yielding posterior versions of the score estimators that integrate into the same reverse-time solvers without modifying the sampler.


\end{itemize}

\subsection*{Note on Concurrent work}
After the publication of the first draft \cite{DustonBuiThanh2026} of the current manuscript, an author of \cite{kahouli2025cvsm} brought to our attention their concurrent work, which also addresses variance reduction via blending unbiased score identities.
While both works exploit complementary variance profiles, they diverge in methodology and scope:
the work in \cite{kahouli2025cvsm} derives an optimal \emph{time dependent} control coefficient (in expectation), whereas we derive a \emph{state and time dependent} blending weight estimated from plug-in SNIS (co)variances at $(y,t)$.
This state-time dependence allows the estimator to adapt locally to geometry of the distribution rather than applying a single global correction per noise level.
Preliminary versions of our blending rule and variance analysis were presented publicly prior to the appearance of \cite{kahouli2025cvsm}: see the dated forum record \cite{duston2025studentforum_event} and accompanying slides \cite{duston2025studentforum_slides}.

\section{Relation to Prior Work}

Diffusion and score based generative models have become a dominant approach for sampling by learning noise conditional scores and reversing a corruption process. Denoising diffusion probabilistic models (DDPM) introduced the modern denoising formulation \cite{ho2020ddpm}, while the SDE view unified score based diffusion with reverse time dynamics and predictor corrector 
samplers \cite{song2021score}. Subsequent work improved performance and scalability through architecture and training refinements \cite{nichol2021improved,dhariwal2021diffusion}.

Building on this foundation, a major line of work focuses on reducing the number of function evaluations by accelerating integration or imposing consistency across noise levels. Training free or post hoc acceleration includes Denoising Diffusion \emph{Implicit} Models (DDIM) \cite{song2021ddim} and high order ODE solvers such as the Diffusion Probabilistic Model Solver (DPM Solver) \cite{lu2022dpmsolver}. Alternative training paradigms learn vector fields directly via Flow Matching \cite{lipman2022flow} or Rectified Flow \cite{liu2022rectifiedflow}, and Consistency Models impose algebraic relations across noise levels to enable one- or few step generation \cite{song2023consistency}.
\emph{Our approach is complementary and orthogonal:} rather than proposing a new solver or a heuristic consistency constraint, we improve the \emph{statistical estimator of the score field itself} at a fixed $(y,t)$. This estimator drops into any standard reverse Stochastic Differential Equation (SDE), Ordinary Differential Equation (ODE) integrator or consistency/distillation pipeline and, by provably lowering pointwise variance through optimal blending, can improve sample quality for a fixed simulation budget.

Our framework is based on the TSI identity. First established  in the Target Score Matching (TSM) literature, \cite{debortoli2024tsm}, TSI  relates the noisy marginal score $s_t$ to a conditional expectation of the clean/target score $s_0$ under the forward diffusion. This identity motivates TSM losses for score matching when the initial score $s_0$ is available.
In contrast, we study TSI through the lens of inference-time variance: we characterize the variability of nonparametric TSI estimators when they are evaluated repeatedly inside reverse-time integrators.
Several recent works use TSI style objectives to train diffusion samplers from unnormalized densities, including Particle Denoising Diffusion Samplers (PDDS) \cite{phillips2024pdds} and Iterated Denoising Energy Matching (iDEM) \cite{akhound2024idem}, with scalable variants such as Adjoint Sampling \cite{havens2025adjoint}.
Very recent work has cast Tweedie's Identity and TSI as a \emph{control variate} family (CVSI) and derived an optimal \emph{time dependent} control coefficient that minimizes variance in expectation \cite{kahouli2025cvsm}.
Our work is complementary: we focus on \emph{state and time dependent} variance minimization, provide an explicit anticorrelation mechanism and diagnostics, and develop extensions to data-only score proxies, Bayesian inverse problems, and efficient neural distillation of the blended score estimator.

Another active direction of research seeks improvements not from SDE solvers but from the structure of the governing equations, using the score's Fokker-Planck equation as a source of regularization or supervision \cite{lai2023fpdiffusion,hu2024scorepinn,zhou2025mfcontrol}.
Several works use the score Fokker-Planck (FP) equation to regularize denoising score matching
 (e.g., FP Diffusion) \cite{lai2023fpdiffusion}, and Score PINNs minimize the residual of the score PDE directly \cite{hu2024scorepinn}. Mean field/control formulations similarly connect sampling to forward PDEs \cite{zhou2025mfcontrol}. In contrast, we derive exact finite-time identities for affine diffusion processes by exploiting the closed-form transition kernels of these SDEs. These identities yield well-conditioned supervision at small times. We focus on estimator variance, proving that the Monte Carlo errors of the TSI and Tweedie estimators are negatively correlated under regularity conditions on $p_0$. Empirically, when the same reverse integrator is used, our variance-minimizing blend attains higher sample quality because the local score estimates have lower error at the points where they are evaluated.

Separate from score PDE theory, nonparametric score estimation has deep connections to Reproducing Kernel Hilbert Space (RKHS) methods, including kernel exponential families and kernelized score matching \cite{hyvarinen2005score,sriperumbudur2017infinite,arbel2018kcef,%
wenliang2019deepkef,zhou2020nonparametricscore}. 
Kernelized samplers such as Stein Variational Gradient Descent (SVGD) and Kernel Stein Discrepancy (KSD) or Maximum Mean Discrepancy (MMD) flows transport particles using functionals of the \emph{target} score or discrepancy \cite{liu2016svgd,liu2016ksd,%
chwialkowski2016kernelgof,korba2021ksddescent,arbel2019mmdflow}. 
In this work, we do not use kernels to fit a parametric density, score or transport map, nor do we assume access to the exact \emph{time marginal} score $\nabla \log p_t$. Instead, we construct \emph{kernel weighted, nonparametric} estimators of the time marginal score that are PDE exact for any affine forward diffusion processes (via Tweedie and TSI) and then combine them by variance optimal blending. The benefit is statistical, lower estimator error at the query $(y,t)$, and thus generically compatible with all standard samplers.

Finally, these score based tools have increasingly been deployed beyond unconditional generation, serving as priors for posterior inference in imaging and scientific inverse problems \cite{feng2023scoreprior,adam2022lensing,legin2023cosmic}. Our framework contributes an estimator that reduces variance by exploiting the closed-form transition kernels of affine diffusions. By reweighting the SNIS weights with the likelihood, this estimator converts prior score estimates into posterior ones without altering the reverse integrator. This preserves the efficiency gains of the blended estimator $\sBLND$ \cref{eq:snis_blended_score} while changing only the weighting.

\section{Theory: From Exact Identities to Optimal Estimators}

\subsection{Score Based Sampling with the Ornstein Uhlenbeck Process}
\label{subsec:ou_sampling}
In the following, we use the typical notation in that random variables are denoted by capital letters, while lowercase letters are for their values.

Score based generative models first define a ``forward process'' that corrupts data with noise over a pseudo time variable $t$ \cite{song2021score,ho2020ddpm}.
 We focus on the Ornstein--Uhlenbeck (OU) process as a canonical worked example for convenience, and because it admits easy closed form transitions \cite{Oksendal98,Gardiner04}. However, the key identities and estimator formulas used below extend to general (time inhomogeneous) affine diffusion processes, with the corresponding derivations collected in \cref{app:affine-CSE}. 
 
 The Ornstein--Uhlenbeck (OU) process is defined by the following Stochastic Differential Equation (SDE):
\begin{equation}
\dd X_t = -X_t \dd t + \sqrt{2} \dd W_t, \quad X_0 \sim p_0,
\label{eq:OUprocess}
\end{equation}
where $X_0 := X_{t=0}$ is distributed according to the data distribution $p_0$.
The OU SDE in \cref{eq:OUprocess} has the closed form forward update \cite{Oksendal98,Gardiner04}
\begin{equation*}
x_t = e^{-t}x_0 + \sqrt{1 - e^{-2t}} \epsilon, \quad \epsilon \sim \N(0, I).
\end{equation*}
We denote the (Gaussian) transition kernel by
\begin{equation}
p_{t|0}(x_t\mid x_0) = \N\!\big(x_t;\, e^{-t}x_0,\,(1 - e^{-2t})I\big).
\label{eq:OU_kernel}
\end{equation}
The time-$t$ marginal is then given by the convolution
\begin{equation}
p_t(x_t) = \int p_{t|0}(x_t\mid x_0)\,p_0(x_0)\,\dd x_0.
\label{eq:OU_marginal}
\end{equation}
We define the time-$t$ \emph{score function} by
\begin{equation}
s(x,t) \coloneqq \nabla_x \log p_t(x).
\label{eq:OU_score_def}
\end{equation}
The corresponding \emph{OU posterior} of the earlier state $x$ given the latter state $y$ is
\begin{equation}
p_{t|0}(x\mid y)\;=\;\frac{p_0(x)\,p_{t|0}(y\mid x)}{p_t(y)}.
\label{eq:OU_posterior}
\end{equation}
In particular, for any test function $f$ we have
$\E[f(X_0)\mid X_t{=}y]=\int f(x)\,p_{t|0}(x\mid y)\,dx$.
We will use the shorthand ``$p_{t|0}(x_0\mid y)$'' throughout to denote the OU posterior.

As $t$ increases, the (marginal) distribution of $X_t$, denoted by $p_t(x)$, smoothly approaches a standard normal distribution \cite{song2021score}.
The generative task is to reverse this process. This is possible by solving the corresponding time reversal SDE \cite{song2021score}:
\begin{equation*}
\dd X_t = [X_t + 2s(X_t,t)]\dd t + \sqrt{2}\dd\bar{W}_t,
\end{equation*}
where $\dd t$ is a positive time step for the backward process. If we can accurately estimate the score function $s_t := s(\cdot,t)$, we can reverse the diffusion to generate new data. This is the premise of all Denoising Score Matching (DSM) models \cite{hyvarinen2005score,vincent2011connection,song2021score,ho2020ddpm}.

\subsection{The Tweedie Identity and Denoising Score Matching}
\label{sect:tweedie}
Denoising score matching \cite{vincent2011connection,hyvarinen2005score} connects the problem of learning a score function to that of denoising corrupted data. The main tool used for this approach is Tweedie's formula \cite{efron2011tweedie,robbins1956empirical}, which provides an exact expression for the score of a noise corrupted density in terms of a conditional expectation. For OU, this identity takes the form:
\begin{equation*}
 s(y, t) = -\frac{1}{1 - e^{-2t}}\mathbb{E}_{x_0 \sim p_{t|0}(\cdot|y)}\big[y - e^{-t}x_0\big],
\end{equation*}
where the conditional expectation is taken with respect to the OU posterior $p_{t|0}(x_0\mid y)$ defined in \cref{eq:OU_posterior}.
To be concrete,  $p_0$ denotes the data distribution at time $0$, the OU forward transition admits the Gaussian transition kernel 
\begin{equation}
p_{t|0}(y\mid x_0)
= \mathcal{N}\!\big(y;\, e^{-t}x_0,\,(1-e^{-2t})I\big)
\propto \exp\!\left(-\frac{\|y-e^{-t}x_0\|^2}{2(1-e^{-2t})}\right).
\label{eq:OU_transition_kernel}
\end{equation}

Given a reference set of particles $\{x_0^i\}_{i=1}^{N_{\text{ref}}} \sim p_{0}$, we can form a nonparametric Tweedie estimator for the score using self normalized importance sampling (SNIS) \cite{owen2013mc,robert2004montecarlo}, as follows
\begin{equation}
\sTWD(y, t) = -\frac{1}{1 - e^{-2t}} \sum_{i=1}^{N_{\text{ref}}} \tilde{w}_{i}(y, t)\big(y - e^{-t}x_0^i\big),
\label{eq:nonParametricTweedie}
\end{equation}
where the (unnormalized) importance weights are defined by evaluating the OU transition kernel at the reference particles,
\begin{equation}
w_i(y,t) := p_{t|0}(y\mid x_0^i),
\qquad
\tilde{w}_i(y,t) := \frac{w_i(y,t)}{\sum_{j=1}^{N_{\text{ref}}} w_j(y,t)}.
\label{eq:OU_transition_weights}
\end{equation}
For the remainder of the paper, any importance weights denoted $w_i$ (and their normalized versions $\tilde w_i$) refer to the OU transition weights \cref{eq:OU_transition_weights} unless explicitly stated otherwise.

\subsection{The Target Score Identity (TSI)}
\label{sect:CSE}
In addition to Tweedie's identity, the denoising score matching literature provides a second unbiased route to the time-$t$ score based on conditional expectations under the forward diffusion. In the Target Score Matching (TSM) framework \cite{debortoli2024tsm}, this appears as the \emph{Target Score Identity (TSI)}, which expresses the score of the time-$t$ marginal as a scaled conditional expectation of the clean score under the forward diffusion posterior. 

We use TSI as a complementary score estimator: it avoids the $\sigma_t^{-2}$ amplification that makes Tweedie unstable as $t \to 0$, so it is better conditioned at low noise.

\begin{lemma}[Target Score Identity (TSI \cite{debortoli2024tsm})]
\label{lem:CSE}
Let $p_0$ be a distribution on $\R^d$, let $p_{t|0}(\cdot\mid y)$ denote the OU posterior defined in \cref{eq:OU_posterior}, and let $s(\cdot,t)$ denote the score of the time-$t$ marginal $p_t$ defined in \cref{eq:OU_score_def}. Then, for any $t>0$,
\begin{equation}
\boxed{
s(y,t) \;=\; e^{t}\,\E_{x\sim p_{t|0}(\cdot\mid y)}\!\big[s_0(x)\big].
}
\label{eq:CSEidentity}
\end{equation}
\end{lemma}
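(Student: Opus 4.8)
The plan is to prove the identity by differentiating the convolution formula for $p_t$ and converting the gradient that lands on $p_0$ into a score $s_0$ via an integration-by-parts / ``differentiate under the integral'' argument, then re-expressing the result as a conditional expectation under the OU posterior. Concretely, I would start from $p_t(y) = \int p_{t|0}(y\mid x_0)\,p_0(x_0)\,\dd x_0$ in \cref{eq:OU_marginal} and compute $\nabla_y \log p_t(y) = \nabla_y p_t(y)/p_t(y)$.

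**Key steps.** First, observe the crucial structural fact about the Gaussian kernel \cref{eq:OU_kernel}: since $p_{t|0}(y\mid x_0)$ depends on $y$ and $x_0$ only through $y - e^{-t}x_0$, we have $\nabla_y p_{t|0}(y\mid x_0) = -e^{t}\,\nabla_{x_0} p_{t|0}(y\mid x_0)$ (the factor $e^t$ coming from the chain rule, since $\partial(y-e^{-t}x_0)/\partial x_0 = -e^{-t}I$ while $\partial(y-e^{-t}x_0)/\partial y = I$). Hence
\begin{equation*}
\nabla_y p_t(y) = \int \nabla_y p_{t|0}(y\mid x_0)\,p_0(x_0)\,\dd x_0 = -e^{t}\int \nabla_{x_0} p_{t|0}(y\mid x_0)\,p_0(x_0)\,\dd x_0.
\end{equation*}
Second, integrate by parts in $x_0$ (the boundary terms vanish under mild decay/regularity assumptions on $p_0$, since the Gaussian kernel decays rapidly): the right-hand side becomes $e^{t}\int p_{t|0}(y\mid x_0)\,\nabla_{x_0} p_0(x_0)\,\dd x_0 = e^{t}\int p_{t|0}(y\mid x_0)\,s_0(x_0)\,p_0(x_0)\,\dd x_0$, where I have used $\nabla_{x_0} p_0 = s_0\, p_0$. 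Third, divide by $p_t(y)$ and recognize the OU posterior $p_{t|0}(x_0\mid y) = p_0(x_0)p_{t|0}(y\mid x_0)/p_t(y)$ from \cref{eq:OU_posterior}, which yields
\begin{equation*}
s(y,t) = \frac{\nabla_y p_t(y)}{p_t(y)} = e^{t}\int s_0(x_0)\,p_{t|0}(x_0\mid y)\,\dd x_0 = e^{t}\,\E_{x\sim p_{t|0}(\cdot\mid y)}[s_0(x)],
\end{equation*}
which is exactly \cref{eq:CSEidentity}.

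**Main obstacle and alternative.** The delicate step is the integration by parts: it requires enough regularity on $p_0$ (e.g.\ $p_0$ differentiable a.e.\ with $\nabla p_0$ integrable, or $p_0$ in a suitable Sobolev class) and decay at infinity so that $p_{t|0}(y\mid x_0)\,p_0(x_0) \to 0$ as $\|x_0\|\to\infty$ and the score $s_0$ is well defined $p_0$-a.e. For a distribution $p_0$ with no density (or a singular one), the statement must be interpreted in a distributional sense, or $s_0$ replaced by a weak gradient; I would either impose the standing regularity assumption explicitly or, to sidestep boundary-term bookkeeping entirely, give an alternative derivation: start from the Tweedie identity of \cref{sect:tweedie}, write $\E_{x_0\sim p_{t|0}(\cdot\mid y)}[y-e^{-t}x_0] = -(1-e^{-2t})\,\nabla_y\log p_{t|0}(y\mid x_0)$ averaged appropriately, and relate the two conditional expectations via the identity $\nabla_y \log p_t(y) = \E[\nabla_y\log p_{t|0}(y\mid X_0)\mid X_t=y] + e^t\,\E[\nabla_{x_0}\log p_0(X_0)\mid X_t=y]$ obtained by differentiating the log-posterior normalization. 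Either route is short; the only real content is the chain-rule identity $\nabla_y p_{t|0} = -e^t \nabla_{x_0} p_{t|0}$ together with one integration by parts, so I expect the proof to be a half-page once the regularity hypotheses on $p_0$ are fixed.
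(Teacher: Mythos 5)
Your proof is correct and follows essentially the same route as the paper's own derivation in \cref{app:affine-CSE}: the cross-derivative identity $\nabla_y p_{t|0}(y\mid x_0) = -e^{t}\nabla_{x_0}p_{t|0}(y\mid x_0)$ (the OU specialization of the paper's $\nabla_y p_{t|0} = -\Phi(t,0)^{-\top}\nabla_x p_{t|0}$), one integration by parts with vanishing boundary terms, and division by $p_t(y)$ to expose the OU posterior. The only difference is that the paper carries this out for general affine SDEs and then specializes to OU, while you work with the OU kernel directly; your explicit attention to the regularity hypotheses matches the paper's standing assumption that boundary terms vanish.
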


\begin{proof}
A self-contained derivation of the generalized Target Score Identity for general (time-inhomogeneous) linear/affine SDEs of the form
\[
dX_t = A(t)X_t\,dt + b(t)\,dt + G(t)\,dW_t
\]
is given in \cref{app:affine-CSE}. 
This Appendix also presents the gradient--semigroup commutation perspective on the TSI identity and specializes TSI to the canonical OU, Variance Preserving (VP), and Variance Exploding (VE) forward diffusion processes  as special cases.
\end{proof}

The Target Score Identity \cref{eq:CSEidentity} is constructive: given reference particles and scores $\{x_0^i, s_0(x_0^i)\}_{i=1}^{N_{\text{ref}}}$, we approximate the conditional expectation with a self normalized importance sampling (SNIS) average, which yields the nonparametric TSI estimator
\begin{equation} 
\sTSI(y, t) = e^{t} \sum_{i=1}^{N_{\text{ref}}} \tilde{w}_{i}(y, t)\,s_{0}(x_0^i).
\label{eq:nonParametricCSE}
\end{equation} 
This estimator applies when the initial score $s_0(x)=\nabla_x\log p_0(x)$ is either known analytically or can be accurately approximated, and is sufficiently regular that the estimator variance is controlled. This regime covers many problems in scientific computing. For instance, in molecular dynamics $s_0$ can be computed from a known potential function \citep{allen2017computer,frenkel2001understanding}, and in PDE constrained inverse problems it can be computed using adjoint methods \cite{Bui-ThanhGirolami14, LanEtAl2016, Bui-ThanhNguyen2016, Bui-ThanhGhattas12a, Bui-ThanhGhattas12,Bui-ThanhGhattas12f, Bui-ThanhGhattasMartinEtAl13, Bui-ThanhBursteddeGhattasEtAl12_gbfinalist, Bui-ThanhGhattas15}.

In data-only diffusion models, where $p_0$ is available only through samples, one cannot directly exploit access to $s_0$ and the standard route is through denoising identities and their Tweedie type reparameterizations \cite{song2021score}. When $s_0$ \emph{is} available, the TSI/TSM literature explicitly leverages it, particularly to improve conditioning at low noise \cite{debortoli2024tsm}. Importance weighted TSI estimators like \cref{eq:nonParametricCSE} can exhibit large variance at high noise levels, which makes it difficult to obtain competitive samplers without combining them with other estimators at large noise regimes. Indeed, prior work discusses convex combinations of denoising and target score identities as a practical mechanism for constructing alternative score identities and objectives \cite{debortoli2024tsm}. In this work we treat this combination as a conditional error minimization problem and derive a state and time dependent blending rule from plug-in SNIS (co)variances, which yields a principled estimator level blending principle. This viewpoint is complementary to the contemporaneous control variate formulation in \cite{kahouli2025cvsm}.


\subsection{Optimal Blending of Complementary Estimators}
\label{subsec:optimal_blending}

The Tweedie estimator \cref{eq:nonParametricTweedie} and the TSI estimator \cref{eq:nonParametricCSE} converge to the same true score but have two important complementary finite sample properties. In particular, \cref{sect:oppositeScaling} discusses their opposite variance growth and decay with pseudo time $t$, and \cref{sect:negativeCorrelation} shows that their finite sample errors are negatively correlated. In \cref{sect:optimalBlend}, we exploit the negative correlation in their sample errors to provide a variance minimal optimal convex blending of the two estimators.

\subsubsection{Opposite growth and decay of the two estimators}
\label{sect:oppositeScaling}
The Monte Carlo variances of the TSI \cref{eq:nonParametricCSE} and Tweedie \cref{eq:nonParametricTweedie} score estimators scale in opposite directions with diffusion time $t$: TSI is best conditioned at small $t$, while Tweedie is best conditioned at large $t$.
From \cref{eq:nonParametricCSE,eq:nonParametricTweedie} we obtain the variance scalings
\begin{equation}
\Var[\hat{s}_{\TSI}] \;\propto\; \frac{e^{2t}}{N_{\text{ref}}},
\qquad
\Var[\hat{s}_{\text{TWD}}] \;\propto\; \frac{e^{-2t}}{N_{\text{ref}}(1-e^{-2t})^2}.
\label{eq:variance_scalings}
\end{equation}
These rates directly imply complementary time regime behavior. As $t\to 0$, we have
$1-e^{-2t}\sim 2t$, so
\[
\Var[\hat{s}_{\text{TWD}}] \;\propto\; \frac{e^{-2t}}{N_{\text{ref}}(1-e^{-2t})^2}
\;\sim\; \frac{1}{4N_{\text{ref}}\,t^2},
\]
which diverges, whereas $\Var[\hat{s}_{\TSI}]\propto e^{2t}/N_{\text{ref}}\to 1/N_{\text{ref}}$ remains bounded.
Conversely, as $t$ increases, $\Var[\hat{s}_{\TSI}]$ grows like $e^{2t}$, while $\Var[\hat{s}_{\text{TWD}}]$ decays like $e^{-2t}$ (since $1-e^{-2t}\to 1$), yielding a stable large-$t$ Tweedie estimate.

\subsubsection{Negative correlation of the two estimators}
\label{sect:negativeCorrelation}
Beyond their opposite variance scaling in $t$ (\cref{sect:oppositeScaling}), the TSI and Tweedie estimators also exhibit \emph{negatively aligned} Monte Carlo errors. We now formalize this phenomenon. In the \emph{linear--Gaussian} case, the anticorrelation is \emph{exact} and purely algebraic.

\begin{proposition}[Gaussian case: exact anticorrelation]
\label{prop:negcorr-gaussian}
Assume that $p_0=\mathcal N(\mu_0,\Sigma)$ with $\Sigma\succ0$. For a given $(y,t)$, let
\[
  \hat{\mu}_{\text{SNIS}} \;:=\; \sum_{i=1}^{N_{\text{ref}}} \tilde w_i\,x_0^i,\qquad
  \mu \;:=\; \E_{p_0}\LRs{\widehat{\mu}_{\text{SNIS}}}, \qquad 
  \Delta \;:=\; \hat{\mu}_{\text{SNIS}} - \mu.
\]
Let $\widehat s_{\mathrm{TSI}}(y,t)$ and $\widehat s_{\mathrm{TWD}}(y,t)$ be the nonparametric TSI and Tweedie estimators, and let $s(y,t)$ denote the true time-$t$ score. Their errors
\begin{align*}
\varepsilon_{\mathrm{C}} := \widehat s_{\mathrm{TSI}} - \E_{p_0}\LRs{\widehat s_{\mathrm{TSI}}} = -\,e^{t}\,\Sigma^{-1}\,\Delta, 
  \varepsilon_{\mathrm{T}} := \widehat s_{\mathrm{TWD}} - \E_{p_0}\LRs{\widehat s_{\mathrm{TWD}}} = \frac{e^{-t}}{\,1-e^{-2t}\,}\,\Delta.
\end{align*}
and hence the trace of their covariance $\operatorname{Tr}\LRs{\Cov(\varepsilon_{\mathrm C},\varepsilon_{\mathrm T})}$ is given by
\begin{equation}
 \operatorname{Tr}\LRs{\Cov_{p_0}(\varepsilon_{\mathrm C},\varepsilon_{\mathrm T})} = \E_{p_0}\!\big[\varepsilon_{\mathrm T}^{\!\top}\varepsilon_{\mathrm C}\big]
  \;=\;
  -\,\frac{1}{\,1-e^{-2t}\,}\;\E_{p_0}\!\big[\Delta^{\!\top}\Sigma^{-1}\Delta\big]
  \;\le 0.
  \label{eq:inner-product-negative}
\end{equation}
with equality iff $\Delta=0$. 
\end{proposition}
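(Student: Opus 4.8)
The plan is to exploit the fact that for a Gaussian target the clean score is affine in $x$, so \emph{both} estimators collapse to affine functions of the single SNIS statistic $\hat\mu_{\text{SNIS}}$, and the self-normalization $\sum_i \tilde w_i = 1$ routes all of the Monte Carlo randomness through the common deviation $\Delta$. Once both errors are written as deterministic linear images of $\Delta$, the trace of their cross-covariance is a single quadratic form whose sign is immediate.

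First I would substitute the Gaussian score $s_0(x)=\nabla_x\log p_0(x)=-\Sigma^{-1}(x-\mu_0)$ into the nonparametric TSI estimator \cref{eq:nonParametricCSE}. Using $\sum_i\tilde w_i=1$, the constant $\mu_0$ factors out and $\hat s_{\mathrm{TSI}}(y,t) = -e^{t}\Sigma^{-1}\bigl(\hat\mu_{\text{SNIS}}-\mu_0\bigr)$. Taking $\E_{p_0}$ (which acts only through $\E_{p_0}[\hat\mu_{\text{SNIS}}]=\mu$, since $y$ and $t$ are fixed) and subtracting gives $\varepsilon_{\mathrm C} = -e^{t}\Sigma^{-1}\Delta$, the first claimed identity. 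I would then do the analogous computation for Tweedie: pulling the SNIS average inside \cref{eq:nonParametricTweedie} yields $\hat s_{\mathrm{TWD}}(y,t) = -\frac{1}{1-e^{-2t}}\bigl(y-e^{-t}\hat\mu_{\text{SNIS}}\bigr)$, and subtracting its $p_0$-expectation cancels the deterministic term $y$ and leaves $\varepsilon_{\mathrm T} = \frac{e^{-t}}{1-e^{-2t}}\Delta$, the second claimed identity. Note that Gaussianity is used only in the TSI step; the Tweedie identity is already affine in $\hat\mu_{\text{SNIS}}$ for arbitrary $p_0$.

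With both errors expressed through the same random vector $\Delta$, I would form the scalar $\varepsilon_{\mathrm T}^{\top}\varepsilon_{\mathrm C} = -\frac{e^{-t}e^{t}}{1-e^{-2t}}\,\Delta^{\top}\Sigma^{-1}\Delta = -\frac{1}{1-e^{-2t}}\,\Delta^{\top}\Sigma^{-1}\Delta$, where the cancellation $e^{-t}e^{t}=1$ is the crucial algebraic fact. Since $\varepsilon_{\mathrm C}$ and $\varepsilon_{\mathrm T}$ are centered by construction, $\operatorname{Tr}\bigl[\Cov_{p_0}(\varepsilon_{\mathrm C},\varepsilon_{\mathrm T})\bigr] = \operatorname{Tr}\E_{p_0}[\varepsilon_{\mathrm C}\varepsilon_{\mathrm T}^{\top}] = \E_{p_0}[\varepsilon_{\mathrm T}^{\top}\varepsilon_{\mathrm C}]$, which is exactly \cref{eq:inner-product-negative}. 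The sign and equality condition then follow because $\Sigma\succ0$ forces $\Sigma^{-1}\succ0$, so $\Delta^{\top}\Sigma^{-1}\Delta\ge0$ with equality iff $\Delta=0$, while $1-e^{-2t}>0$ for $t>0$.

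There is no genuine obstacle in this argument: it is a short chain of substitutions. The only point to watch is the bookkeeping around the self-normalization identity $\sum_i\tilde w_i=1$ — this is precisely what makes the constants $\mu_0$ and $y$ drop out so that both estimator errors reduce to scalar multiples of the single shared noise vector $\Delta$, which is what produces the exact anticorrelation rather than merely a negative-in-expectation statement.
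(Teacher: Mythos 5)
Your proposal is correct and follows essentially the same route as the paper's proof: substitute the affine Gaussian score $s_0(x)=-\Sigma^{-1}(x-\mu_0)$ into both SNIS estimators, use $\sum_i\tilde w_i=1$ to reduce each error to a linear image of the common deviation $\Delta$, and read off the sign of the resulting quadratic form. The paper's version is merely terser; your write-up fills in the same steps it leaves implicit.
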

\begin{proof}
See \cref{app:proof:negcorr-gaussian}.
\end{proof}

\begin{remark}
The scalar correlation \cref{eq:inner-product-negative} provides a theoretical justification explaining why the optimal blending reduces the variances, as it appears as a critical component in the variances of the blend (see \cref{prop:OptimalVariance} ). We further note that we have used the means $\E_{p_0}\LRs{\widehat s_{\mathrm{TSI}}}$ and $\E_{p_0}\LRs{\widehat s_{\mathrm{TWD}}}$ to compute the deviations $\varepsilon_{\TSI}$ and $\varepsilon_T$ both TSI and Tweedie and show that the deviations are anti correlated. We can replace SNIS mean with the exact mean $\mu = \E\LRs{X_0|X_t = y}$, and \cref{prop:negcorr-gaussian} still holds. In this case, the result says that errors in TSI and Tweedie estimators are anti correlated.
For sufficient large sample size $N_{\text{ref}}$, the biases (due to SNIS) in both TSI and Tweedie estimators are small 
, using the exact mean (the exact score, respectively) or SNIS mean (SNIS score mean, respectively) are thus asymptotically the same. 
\end{remark}

A general extension of \cref{prop:negcorr-gaussian} to non-Gaussian $p_0$ and arbitrary $t$ is currently not available. That said, for sufficiently small $t$, the negative-correlation property can be established under standard regularity assumptions on $p_0$, as stated in \cref{thm:negcorrGeneral}. 

\begin{theorem}[Negative correlation for small time $t$ and large $N_{\text{ref}}$]
\label{thm:negcorrGeneral}
 Suppose the operator norm of the Hessian and the third order derivative tensor of $\log p_0(x)$, the derivative of the score of $p_0$, is bounded as follows:
 \[
 m\LRp{y} I \preceq -\nabla^2_{x}[\log p_0(x)], \quad \nor{\nabla^3_{x}[\log p_0(x)]}_{op} \le c < \infty, \quad \forall x \in \text{supp}\LRp{p_{t|0}},
    \]
 where $\nor{\cdot}_{op}$ is the corresponding operator norm. Assume that 
 $\Sigma_{\text{eff}}^{-1} = -\nabla^2\log p_0\LRp{\mu} \succeq 0$ and that the importance weights are uniformly bounded as $0 < w_{\text{min}} \le w_i \le w_{\text{max}}$.
 Then there exists $N_{\text{ref}}^*$ and $t^*$ such that 
 \[
    \E_{p_0}\!\big[\varepsilon_{\mathrm T}^{\!\top}\LRp{y,t}\varepsilon_{\mathrm C}\LRp{y,t}\big] < 0, \quad N_{\text{ref}} \ge N_{\text{ref}}^* \text{ and } t \le t^*.
    \]
\end{theorem}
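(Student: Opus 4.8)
The plan is to reduce the statement to a scalar inequality that reproduces the Gaussian computation of \cref{prop:negcorr-gaussian} with the global covariance $\Sigma$ replaced by the local Hessian $\Sigma_{\text{eff}}$, and then to argue that the non-Gaussian correction is subdominant for small $t$ and large $N_{\text{ref}}$. First I would record the exact structural identities. Writing $\widehat{S}(y,t) := \sum_{i=1}^{N_{\text{ref}}} \tilde w_i(y,t)\, s_0(x_0^i)$, so that $\sTSI = e^{t}\widehat{S}$, the Tweedie estimator is affine in $\hat\mu_{\text{SNIS}}$, namely $\sTWD = -\tfrac{1}{1-e^{-2t}}\LRp{y - e^{-t}\hat\mu_{\text{SNIS}}}$. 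Hence, exactly as in \cref{prop:negcorr-gaussian},
\begin{equation*}
\varepsilon_{\mathrm T} = \frac{e^{-t}}{1-e^{-2t}}\,\Delta,
\qquad
\varepsilon_{\mathrm C} = e^{t}\LRp{\widehat{S} - \E_{p_0}[\widehat{S}]},
\qquad
\E_{p_0}\LRs{\varepsilon_{\mathrm T}^{\!\top}\varepsilon_{\mathrm C}} = \frac{1}{1-e^{-2t}}\,\E_{p_0}\LRs{\Delta^{\!\top}\LRp{\widehat{S} - \E_{p_0}[\widehat{S}]}},
\end{equation*}
so it suffices to prove the inner product on the right is negative.

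Second, I would Taylor expand the clean score about $\mu$. Since $s_0 = \nabla\log p_0$, for every reference particle $x_0^i \in \text{supp}(p_{t|0})$ the third-derivative bound yields $s_0(x_0^i) = s_0(\mu) + \nabla^2\log p_0(\mu)\LRp{x_0^i - \mu} + R_i$ with $\nor{R_i} \le \tfrac{c}{2}\nor{x_0^i - \mu}^2$. Averaging with the normalized weights and using $\Sigma_{\text{eff}}^{-1} = -\nabla^2\log p_0(\mu)$ gives $\widehat{S} = s_0(\mu) - \Sigma_{\text{eff}}^{-1}\Delta + \widehat{R}$ with $\widehat{R} := \sum_i \tilde w_i R_i$, hence $\widehat{S} - \E_{p_0}[\widehat{S}] = -\Sigma_{\text{eff}}^{-1}\Delta + \LRp{\widehat{R} - \E_{p_0}[\widehat{R}]}$ and
\begin{equation*}
\E_{p_0}\LRs{\varepsilon_{\mathrm T}^{\!\top}\varepsilon_{\mathrm C}} = \frac{1}{1-e^{-2t}}\LRp{-\,\E_{p_0}\LRs{\Delta^{\!\top}\Sigma_{\text{eff}}^{-1}\Delta} + \E_{p_0}\LRs{\Delta^{\!\top}\LRp{\widehat{R} - \E_{p_0}[\widehat{R}]}}}.
\end{equation*}
This is the Gaussian expression \cref{eq:inner-product-negative} with $\Sigma^{-1}$ replaced by the local $\Sigma_{\text{eff}}^{-1}$, plus a remainder built from the cubic-and-higher part of $\log p_0$. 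The leading term is strictly negative and bounded away from zero: the curvature bound gives $\E_{p_0}\LRs{\Delta^{\!\top}\Sigma_{\text{eff}}^{-1}\Delta} \ge m(y)\,\E_{p_0}\nor{\Delta}^2 > 0$, the strict positivity because $N_{\text{ref}} < \infty$ and the weights are nondegenerate, so $\hat\mu_{\text{SNIS}}$ genuinely fluctuates under $p_0$.

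Third, I would bound the remainder by Cauchy--Schwarz,
\begin{equation*}
\Big|\E_{p_0}\!\LRs{\Delta^{\!\top}\LRp{\widehat{R} - \E_{p_0}[\widehat{R}]}}\Big| \;\le\; \sqrt{\E_{p_0}\nor{\Delta}^2}\;\sqrt{\E_{p_0}\nor{\widehat{R}}^2},
\qquad \nor{\widehat{R}} \le \tfrac{c}{2}\sum_i \tilde w_i \nor{x_0^i - \mu}^2,
\end{equation*}
so the theorem follows once $\E_{p_0}\nor{\widehat{R}}^2 < m(y)^2\,\E_{p_0}\nor{\Delta}^2$. The mechanism is that the SNIS-weighted reference mass contracts around a point near $y$: as $t\to 0$ the OU posterior $p_{t|0}(\cdot\mid y)$ concentrates at scale $\sqrt{1-e^{-2t}}\sim\sqrt{2t}$, and for $N_{\text{ref}}$ large and bounded weights the effective sample size is $\Theta(N_{\text{ref}})$, so the weighted empirical measure reproduces that posterior up to $O(1/N_{\text{ref}})$-type errors (in particular the SNIS bias $\mu - \E[X_0\mid X_t{=}y]$ and the fluctuations of $\sum_i\tilde w_i\nor{x_0^i-\mu}^2$ are controlled). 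Over such a neighborhood the quadratic Taylor term is of strictly higher order in the contraction scale than the linear one, so $\E_{p_0}\nor{\widehat{R}}^2 = o\LRp{\E_{p_0}\nor{\Delta}^2}$; choosing $N_{\text{ref}} \ge N_{\text{ref}}^*$ first and then $t \le t^*$ makes the displayed inequality hold, and hence $\E_{p_0}\LRs{\varepsilon_{\mathrm T}^{\!\top}(y,t)\,\varepsilon_{\mathrm C}(y,t)} < 0$.

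The main obstacle is precisely this last, quantitative step: turning ``posterior concentration plus SNIS averaging'' into a nonasymptotic bound $\E_{p_0}\nor{\widehat{R}}^2 = o\LRp{\E_{p_0}\nor{\Delta}^2}$ jointly in $(t,N_{\text{ref}})$, where the right-hand side is a weighted (co)variance and the left-hand side a weighted fourth moment of the reference particles under the SNIS law. Disentangling the contraction scaling from the Monte Carlo averaging is where the hypotheses $0 < w_{\text{min}} \le w_i \le w_{\text{max}}$ and the uniform curvature control on $\log p_0$ are essential; without them the weights can collapse onto a single particle and the two order estimates cease to separate. This interplay also fixes the order of the quantifiers ($N_{\text{ref}}^*$ before $t^*$), since the remainder pits a higher-moment term against a variance term and the admissible $t^*$ depends on how well the finite-sample SNIS average already tracks the posterior. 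A minor technical point is that $\mu$ must lie where the curvature bounds apply --- automatic if $\text{supp}(p_0)$ is convex, or if the bounds are assumed on a neighborhood --- so that the segment Taylor expansion along $[\mu, x_0^i]$ is legitimate.
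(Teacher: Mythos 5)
Your reduction is exactly the one the paper uses: the same error identities $\varepsilon_{\mathrm T}\propto\Delta$ and $\varepsilon_{\mathrm C}=e^{t}(\widehat S-\E\widehat S)$, the same second-order Taylor expansion of $s_0$ about $\mu$ producing the decomposition into a negative quadratic form in $\Sigma_{\text{eff}}^{-1}$ plus a cross term with the cubic remainder (this is \cref{eq:CandTcorrelation} with $D$ and $E$), and the same Cauchy--Schwarz bound on the cross term. The issue is that the step you yourself flag as ``the main obstacle'' --- showing $\E_{p_0}\nor{\widehat R}^2$ is dominated by the variance term jointly in $(t,N_{\text{ref}})$ --- is precisely where the entire content of the paper's proof lives, and your proposal replaces it with a heuristic about posterior contraction. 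Concretely, the paper supplies three quantitative ingredients you do not: (i) strong log-concavity of $p_{t|0}(\cdot\mid y)$ for $t<\tfrac12\log(1-1/m(y))$ (\cref{lem:logconcavity}), which via Brascamp--Lieb gives $\Sigma\preceq\kappa^{-1}I$ with $\kappa\asymp(2t)^{-1}$ and, via sub-Gaussianity, the fourth-moment bound $\E\|X_0-\mu\|^4\lesssim d^2/\kappa^2$ needed to control $\E\nor{\widehat R}^2$; (ii) the SNIS delta-method CLT giving $\Cov_{p_0}(\widehat\mu)=\Omega_\mu/N_{\text{ref}}+o(1/N_{\text{ref}})$ and the analogous expansion for $\widehat f$, so that \emph{both} $D$ and $E$ carry an explicit $1/N_{\text{ref}}$ prefactor (\cref{lem:D,lem:E}); and (iii) the resulting comparison $E\lesssim \frac{w_{\max}}{N_{\text{ref}}}cd^{3/2}\kappa^{-3/2}$ versus $D\gtrsim\frac{w_{\min}}{N_{\text{ref}}}\lambda_{\min}(\Sigma_{\text{eff}}^{-1})\nor{\Sigma}_{op}$, i.e.\ order $t^{3/2}$ against order $t$, which is what actually certifies subdominance for small $t$.

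A consequence you miss by not carrying out (ii): because the $1/N_{\text{ref}}$ factors cancel on both sides, the threshold $\overline t$ in the paper is \emph{independent} of $N_{\text{ref}}$; the role of $N_{\text{ref}}^*$ is only to make the leading-order CLT expansions valid, not to win the competition between the two terms. Your proposed quantifier ordering (``choose $N_{\text{ref}}^*$ first, then a $t^*$ depending on it'') is therefore not how the argument closes --- and more importantly, without the explicit moment bounds there is no way to verify that \emph{any} ordering works, since $\E_{p_0}\nor{\Delta}^2$ itself shrinks with both $t$ and $N_{\text{ref}}$ and you must show the remainder shrinks strictly faster in $t$ at \emph{matched} order in $N_{\text{ref}}$. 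One smaller caveat: your lower bound $\E\LRs{\Delta^{\!\top}\Sigma_{\text{eff}}^{-1}\Delta}\ge m(y)\,\E\nor{\Delta}^2$ is vacuous when $m(y)\le 0$, a case the paper explicitly entertains (\cref{lem:logconcavity} is stated for $m(y)<0$); the paper's \cref{lem:D} instead anchors the lower bound on $\lambda_{\min}(\Sigma_{\text{eff}}^{-1})$ evaluated at $\mu$, which is the quantity the hypothesis $\Sigma_{\text{eff}}^{-1}\succeq 0$ is meant to control. So: right road map, but the proof is not complete without the concentration and SNIS-variance lemmas.
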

\begin{proof}
 From \cref{eq:CandTcorrelation} together with \cref{lem:D} and \cref{lem:E}, we need to find $\overline{t}$ such that 
 \begin{equation*}
    \label{eq:negCondition}
        \frac{w_{\max}}{N_{\text{ref}}}\; \frac{cd^{3/2}}{\kappa
        ^{3/2}} < \tr\LRp{\Sigma_{\text{eff}}^{-1}\mathrm{Cov}_{p_0}\!\left(\widehat{\mu}\right)}.
    \end{equation*}
From the definition of $\kappa$ in \cref{lem:logconcavity},
\[
\kappa = \frac{1}{1-e^{-2t}} = \frac{1}{2t}\,(1+O(t)) \qquad (t\to 0),
\]
so for sufficiently small $t$ we use the scaling $\kappa \asymp (2t)^{-1}$.
 \[
    \sqrt{\overline{t}^3} = \frac{N_{\text{ref}}}{\sqrt{8}} \frac{\tr\LRp{\Sigma_{\text{eff}}^{-1}\mathrm{Cov}_{p_0}\!\left(\widehat{\mu}\right)}}{w_{\max} c\sqrt{d^3}},
    \]
 and if we use the lower bound in \cref{lem:D}, we need to find $\overline{t}$ such that
 \[
        \frac{w_{\max}}{N_{\text{ref}}}\; \frac{cd^{3/2}}{\kappa
        ^{3/2}} < \frac{w_{\text{min}}}{N_{\text{ref}}} \lambda_{\text{min}}\LRp{\Sigma_{\text{eff}}^{-1}} \nor{\Sigma}_{op},
    \]
 and thus we can eliminate $N_{\text{ref}}$ entirely in the expression of $\overline{t}$ as
 \[
    \sqrt{\overline{t}^3} = \frac{1}{\sqrt{8}}\frac{w_{\min}}{w_{\max}} \frac{\lambda_{\text{min}}\LRp{\Sigma_{\text{eff}}^{-1}} \nor{\Sigma}_{op}}{ c\sqrt{d^3}}.
    \]
 The proof ends by taking
 \(
    t^* = \min\LRc{\frac{1}{2}\log\LRp{1 - \frac{1}{m(y)}}, \overline{t}}.
    \)
\end{proof}

Supplementary empirical validation of this phenomenon in the setting of Gaussian Mixtures (GMMs), including correlation curves across time and the
time dependent variance/bias behavior of the two estimators, are deferred to \cref{app:suppres:cor-var}.

\subsubsection{Optimal blending as variance minimization}
\label{sect:optimalBlend}

Given the complementary growth/decay of the variance profiles and, more importantly, the negative correlation between the TSI \cref{eq:nonParametricCSE} and Tweedie \cref{eq:nonParametricTweedie} estimators, we consider a linear blend. Specifically, for scalar function $\lambda = \lambda(y,t)$ we define
\begin{equation}
\sBLND(\lambda)=\lambda\,\sTWD+(1-\lambda)\,\sTSI,
\label{eq:nonParametricBlend}
\end{equation}
and note that $\sBLND(\lambda)$ is unbiased for the true score $s$, since both $\sTWD$ and $\sTSI$ are unbiased estimators for the true score $s$. The question is how to choose $\lambda$ so that the blend $\sBLND(\lambda)$ retains the complementary strengths of both estimators. Since the variance and correlation depend on $(y,t)$, we choose $\lambda(y,t)$ to minimize the conditional variance of the blended error at $(y,t)$:
\begin{equation}
\lambda^*(y,t)\ \in\ \arg\min_{\lambda\in\R}\ J(\lambda;y,t),
\qquad
J(\lambda;y,t):=\E\!\left[\big\|\lambda\,\varepsilon_T+(1-\lambda)\,\varepsilon_{\TSI}\big\|^2\right],
\label{eq:optimalBlend}
\end{equation}
where $\varepsilon_T:=\sTWD-\E\LRs{\sTWD}$ and $\varepsilon_{\TSI}:=\sTSI-\E\LRs{\sTSI}$.

\begin{proposition}[Variance optimal blending weight]
\label{prop:OptimalVariance}
Define
\[
\sigma_T^2:=\E\|\varepsilon_T\|^2,\qquad 
\sigma_C^2:=\E\|\varepsilon_{\TSI}\|^2,\qquad 
\rho:=\E\langle \varepsilon_T,\varepsilon_{\TSI}\rangle.
\]
If $\sigma_T^2+\sigma_C^2-2\rho\neq 0$, then the minimizer of \cref{eq:optimalBlend} is unique and is given by
\begin{equation}
\lambda^{*}(y,t)
=\frac{\sigma_C^2-\rho}{\sigma_T^2+\sigma_C^2-2\rho},
\qquad
J(\lambda^{*};y,t)
=\frac{\sigma_T^2\,\sigma_C^2-\rho^2}{\sigma_T^2+\sigma_C^2-2\rho}.
\label{eq:lambda-optimal}
\end{equation}
Moreover, when $\rho<0$ (negative alignment of errors), the optimal weight satisfies $0<\lambda^*(y,t)<1$ and the variance reduction is amplified as $\rho$ becomes more negative.
\end{proposition}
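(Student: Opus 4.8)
The plan is to treat \cref{eq:optimalBlend} as an elementary one–dimensional quadratic minimization in $\lambda$ and then read off the sign and magnitude claims. First I would expand the objective using bilinearity of the inner product, obtaining $J(\lambda;y,t) = \lambda^2\sigma_T^2 + (1-\lambda)^2\sigma_C^2 + 2\lambda(1-\lambda)\rho$, a quadratic in $\lambda$ with leading coefficient $a := \sigma_T^2 + \sigma_C^2 - 2\rho$, linear coefficient $b := -2(\sigma_C^2 - \rho)$, and constant term $\sigma_C^2$. The first key step is to certify $a \ge 0$: by Cauchy--Schwarz, $\rho = \E\langle\varepsilon_T,\varepsilon_{\TSI}\rangle \le \sigma_T\sigma_C$, hence $a \ge \sigma_T^2 + \sigma_C^2 - 2\sigma_T\sigma_C = (\sigma_T-\sigma_C)^2 \ge 0$. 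Under the stated nondegeneracy hypothesis $a \ne 0$ we therefore get $a > 0$, so $J(\cdot;y,t)$ is strictly convex with a unique global minimizer determined by $J'(\lambda) = 2a\lambda + b = 0$, i.e. $\lambda^*(y,t) = (\sigma_C^2 - \rho)/(\sigma_T^2 + \sigma_C^2 - 2\rho)$, which is the claimed formula.

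Next I would substitute $\lambda^*$ back into $J$ via the standard identity $J(\lambda^*) = \sigma_C^2 - b^2/(4a)$. Plugging in $b = -2(\sigma_C^2-\rho)$ and $a = \sigma_T^2+\sigma_C^2-2\rho$ and simplifying the numerator $\sigma_C^2(\sigma_T^2+\sigma_C^2-2\rho) - (\sigma_C^2-\rho)^2 = \sigma_T^2\sigma_C^2 - \rho^2$ yields $J(\lambda^*;y,t) = (\sigma_T^2\sigma_C^2 - \rho^2)/(\sigma_T^2+\sigma_C^2-2\rho)$ as stated; its nonnegativity is again Cauchy--Schwarz, $\sigma_T^2\sigma_C^2 \ge \rho^2$, consistent with $J \ge 0$.

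For the final clause, assume $\rho < 0$. Then the numerator of $\lambda^*$ is $\sigma_C^2 + |\rho| > 0$ and the denominator is $\sigma_T^2 + \sigma_C^2 + 2|\rho| > 0$, so $\lambda^* > 0$; and $\lambda^* < 1$ is equivalent to $\sigma_C^2 - \rho < \sigma_T^2 + \sigma_C^2 - 2\rho$, i.e. $0 < \sigma_T^2 + |\rho|$, which holds. To make ``variance reduction is amplified'' precise I would establish two facts. First, blending never hurts relative to either pure estimator: $\sigma_C^2 - J(\lambda^*) = b^2/(4a) = (\sigma_C^2-\rho)^2/a \ge 0$, and by the symmetric computation (swap the roles of $T$ and $C$ and $\lambda\mapsto 1-\lambda$) also $\sigma_T^2 - J(\lambda^*) = (\sigma_T^2-\rho)^2/a \ge 0$, so $J(\lambda^*) \le \min\{\sigma_T^2,\sigma_C^2\}$. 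Second, monotonicity in $\rho$: writing $N=\sigma_T^2\sigma_C^2-\rho^2$ and $D=\sigma_T^2+\sigma_C^2-2\rho$, one computes $\tfrac{d}{d\rho}(N/D) = 2(N-\rho D)/D^2$ with $N-\rho D = (\sigma_T^2-\rho)(\sigma_C^2-\rho) > 0$ whenever $\rho < 0$, so $J(\lambda^*)$ is strictly increasing in $\rho$; hence as $\rho$ decreases further below zero the achieved variance strictly decreases, which is the claimed amplification.

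I do not expect a genuine obstacle here: the whole statement is a finite-dimensional quadratic optimization and the sign analysis reduces to two applications of Cauchy--Schwarz plus one derivative. The only points requiring a little care are (i) upgrading the hypothesis $a \ne 0$ to $a > 0$ so that the critical point is a minimizer rather than a maximizer or inflection point --- precisely where Cauchy--Schwarz enters --- and (ii) choosing a clean formalization of the informally stated last clause, for which comparing $J(\lambda^*)$ both against the pure-estimator variances and against its own $\rho$-dependence suffices.
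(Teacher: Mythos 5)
Your proof is correct and follows essentially the same route as the paper's: a one-dimensional quadratic minimization solved by the first-order condition $\partial J/\partial\lambda=0$, followed by back-substitution into $J$. You are in fact more complete than the paper's two-line argument, since you certify via Cauchy--Schwarz that the leading coefficient $\sigma_T^2+\sigma_C^2-2\rho$ is strictly positive (so the critical point is the unique \emph{minimizer}), and you make the informal ``amplification'' clause precise by computing $\tfrac{d}{d\rho}J(\lambda^*)=2(\sigma_T^2-\rho)(\sigma_C^2-\rho)/(\sigma_T^2+\sigma_C^2-2\rho)^2>0$ for $\rho<0$ --- steps the paper leaves as ``direct algebraic manipulations.''
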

\begin{proof}
The proof is straightforward, and is deferred to \cref{app:proof:OptimalVariance}.
\end{proof}

In practice, we do not have access to $\sigma_T$, $\sigma_C$, and $\rho$, at inference time, only to their SNIS plug-in approximations.
Specifically, with SNIS weights $\tilde w_i$ we define
\begin{gather*}
a_i := e^{t} s_0(x_0^i), \qquad 
b_i := -\,\frac{1}{1-e^{-2t}}\,(y-e^{-t}x_0^i), \\
\sTSI = \sum_i \tilde w_i a_i, \qquad 
\sTWD = \sum_i \tilde w_i b_i.
\end{gather*}
We also define centered contributions $\delta a_i=a_i-\sTSI$ and $\delta b_i=b_i-\sTWD$.
The standard SNIS plug-in estimates are given
\begin{equation}
\hat \sigma_C^2=\frac{\sum_i \tilde w_i^{\,2}\|\delta a_i\|^2}{1-\sum_i \tilde w_i^{\,2}},\quad
\hat \sigma_T^2=\frac{\sum_i \tilde w_i^{\,2}\|\delta b_i\|^2}{1-\sum_i \tilde w_i^{\,2}},\quad
\hat \rho=\frac{\sum_i \tilde w_i^{\,2}\langle \delta a_i,\delta b_i\rangle}{1-\sum_i \tilde w_i^{\,2}}\frac{1}{\hat{\sigma}_T\hat{\sigma}_C}.
\label{eq:var_weights}
\end{equation}
Plugging $\hat \sigma_C^2,\hat \sigma_T^2$ and $\hat \rho$ into \cref{eq:lambda-optimal}  yields the approximate blend weight $\widehat{\lambda}\LRp{y,t}$.

\noindent We then define the corresponding SNIS plug-in blended score estimator using $\widehat{\lambda}$:
\begin{equation}
\sBLND(y,t)
\;:=\;
\sBLND\!\Big(\widehat{\lambda}\LRp{y,t}\Big)
\;=\;
\Bigl(1-\widehat{\lambda}\LRp{y,t}\Bigr)\,\sTSI
\;+\;
\widehat{\lambda}\LRp{y,t}\,\sTWD.
\label{eq:snis_blended_score}
\end{equation}

This blended score estimator forms the core of our nonparametric  variance minimizing sampling procedure in \cref{alg:CSE_blend_sampling}.

\begin{algorithm}[H]
\caption{Reverse Sampling optimal blend score }
\label{alg:CSE_blend_sampling}
\begin{algorithmic}[1]
\State \textbf{Input:} Initial sampling particles $\{y_{j}(T)\}_{j=1}^{M} \sim \N(0, I_d)$, time grid $T=t_K > \dots > t_0=0$, reference data $\{x_0^i, s_0(x_0^i)\}_{i=1}^{N_{\text{ref}}}$, with $x_0^i \sim p_0$.
\For{$k=K-1, \dots, 0$}
 \State Let current time be $t_{k+1}$ and target time be $t_k$.
 \For{$j=1, \dots, M$}
 \State Compute $\sBLND\LRp{\widehat{\lambda}\LRp{y_j(t_{k+1}), t_{k+1}}}$ in \cref{eq:snis_blended_score} for particle $y_j$. 
 \State Update particle $y_j(t_k)$ using SDE integrator with $\sBLND\LRp{\widehat{\lambda}\LRp{y_j(t_{k+1})}}$.
 \EndFor
\EndFor
\State \textbf{Output:} Final samples $\{y_j(0)\}_{j=1}^{M}$.
\end{algorithmic}
\end{algorithm}

\subsection{A learned proxy for the initial score}
\label{sec:learned_proxy}

When only i.i.d.\ samples \(X=\{x_0^i\}_{i=1}^{N_{\text{ref}}}\sim p_0\) are available, \cref{alg:CSE_blend_sampling} requires an approximation of the unknown initial score \(s_0(x)=\nabla_x\log p_0(x)\).
We construct a \emph{local Gaussian score proxy} \(\hat s_0(x_0^i)\) at each anchor \(x_0^i\) by fitting a kernel weighted Gaussian to its \(k\) nearest neighbors, a standard local nonparametric construction \citep{silverman1986density,mack1979multivariate,wasserman2006all}.
Let \(\mathcal{N}_k(i)\) be the indices of the \(k\) nearest neighbors of \(x_0^i\) in \(X\), set
\(
  h_i^2 := \max_{j\in\mc{N}_k(i)} \|x_0^i - x_0^j\|^2,
\)
define weights
\(
  k_{ij} \propto \exp\!\big(-\|x_0^i-x_0^j\|^2/(2h_i^2)\big),
\)
and normalize \(\tilde k_{ij} := k_{ij}\big/\sum_{\ell\in\mc{N}_k(i)} k_{i\ell}\).
With the weighted mean \(\mu_i := \sum_{j\in\mc{N}_k(i)} \tilde k_{ij}\, x_0^j\) and a local covariance model \(\Sigma_i\) (below), we set
\vspace{-1.0em}
\begin{equation}
  \hat s_0(x_0^i) \;:=\; \Sigma_i^{-1}(\mu_i-x_0^i).
  \label{eq:proxy_gaussian_score}
\end{equation}

We use two covariance families (and label experiments accordingly):

\begin{enumerate}
\item \textbf{Diagonal proxy (\textsc{Diag}).}
\(
  \Sigma_i^{\textsc{Diag}}
  :=
  \mathrm{diag}\!\big(v_{i,1},\dots,v_{i,d}\big) + \tau_i I
\),
giving
\begin{equation}
  \hat s_0^{\textsc{Diag}}(x_0^i)
  := \big(\Sigma_i^{\textsc{Diag}}\big)^{-1}(\mu_i-x_0^i).
  \label{eq:proxy_diag}
\end{equation}
Here \(v_{i,\ell}\) are local per coordinate variances estimated from the \(k\)NN cloud and \(\tau_i>0\) is a ridge/noise floor parameter.

\item \textbf{Low rank plus diagonal tail proxy (\textsc{LR{+}D}).}
\(
  \Sigma_i^{\textsc{LR{+}D}}
  :=
  V_i\Lambda_i V_i^\top + \mathrm{diag}\!\big(\tau_{i,1},\dots,\tau_{i,d}\big)
\),
giving
\begin{equation}
  \hat s_0^{\textsc{LR{+}D}}(x_0^i)
  := \big(\Sigma_i^{\textsc{LR{+}D}}\big)^{-1}(\mu_i-x_0^i).
  \label{eq:proxy_lrd}
\end{equation}
Here \(V_i\in\R^{d\times r}\) and \(\Lambda_i\in\R^{r\times r}\) capture the leading \(r\)-dimensional local principal subspace (e.g., via weighted Principal Component Analysis(PCA)), and the diagonal tail ensures invertibility.
\end{enumerate}

All further implementation details (weighting/bandwidth choices, optional query time recomputation, and complexity considerations) are deferred to Appendix \ref{app:score_proxy_details}.

\subsection{Application to Bayesian Inverse Problems}
\label{sec:bayes-inverse}
We adapt our framework to posterior sampling in inverse problems. Given a prior $p_0(x)$ and likelihood $p(\by\mid x)$ for an observation $\by$, the posterior is given by
\begin{equation*}
p^{\text{post}}(x\mid \by)\ \propto\ p_0(x)\,p(\by\mid x)\;=:\;p_0(x)\,\mathcal{L}(x).
\end{equation*}
As is standard in inverse problems \cite{Stuart10,KaipioSomersalo05}, it is typically straightforward to sample the prior $p_0$ but not the posterior. We therefore reuse the same prior reference set $\{x_0^i\}_{i=1}^{N_{\text{ref}}}$ and incorporate the likelihood by multiplying each prior weight by the likelihood factor and renormalizing.

We assume that the observation $\by$ depends on the unknown $X_0$ but is independent of the forward OU corruption noise. This includes the common linear--Gaussian case $\by=HX_0+\varepsilon$ with $\varepsilon\sim\mathcal N(0,\Sigma_y)$, and more generally any $\mathcal{L}(x_0)$ that does not involve the OU noise used to generate $X_t$. If this assumption were violated, the likelihood would depend on the diffusion path. This would prevent the factorization below and require joint path space inference.

For a query point $(x,t)$, we incorporate the likelihood into the OU transition weights and renormalize, obtaining posterior normalized importance weights
\vspace{-1.0em}
\cite{owen2013mc,robert2004montecarlo}
\begin{equation}
\label{eq:posterior_alpha_main}
\alpha_i(y,t;\by)
\;:=\;
\frac{w_i(y,t)\,\mathcal{L}(x_0^i)}
{\sum_{j=1}^{N_{\text{ref}}} w_j(y,t)\,\mathcal{L}(x_0^j)},
\qquad
w_i(y,t)=p_{t|0}(y\mid x_0^i).
\end{equation}
Equivalently, $\{\alpha_i(y,t;\by)\}$ are the SNIS weights for expectations under the
likelihood weighted OU posterior $p_{t|0}^{\text{post}}(x_0\mid y; \by)\propto p_0(x_0)\,\mathcal{L}(x_0)\,p_{t|0}(y\mid x_0)$,
approximated using the fixed prior reference set.

Using \cref{eq:posterior_alpha_main} together with the OU transition, we obtain
a family of \emph{posterior} score estimators. First, the posterior initial score
decomposes as
\begin{align*}
s_0^{\text{post}}(x)
:= \nabla_x\log p^{\text{post}}(x\mid \by)
= s_0(x) + \nabla_x\log \mathcal{L}(x).
\end{align*}
This leads to two natural estimators at time $(y,t)$. The Tweedie type estimator is
\begin{align*}
\sTWD^{\text{post}}(y,t)
= -\frac{1}{1-e^{-2t}}\sum_{i=1}^{N_{\text{ref}}} \alpha_i(y,t;\by)\,\bigl(y-e^{-t}x_0^i\bigr),
\end{align*}
while the TSI type estimator replaces the explicit OU drift term by a weighted
average of the posterior initial scores:
\vspace{-1.0em}
\begin{align*}
\sTSI^{\text{post}}(y,t)
= e^{t}\sum_{i=1}^{N_{\text{ref}}} \alpha_i(y,t;\by)\,s_0^{\text{post}}(x_0^i).
\end{align*}

As in the prior (likelihood free) case, the two estimators typically exhibit
negatively correlated Monte Carlo fluctuations when computed from the same SNIS batch.
We therefore form a convex combination with a batch estimated weight chosen to
minimize the plug-in variance using the same coefficients $\alpha_i(y,t;\by)$:
\[
\sBLND^{\text{post}}(y,t)
=(1-\lambda_{\text{snis}}^{\text{post}})\,\sTSI^{\text{post}}(y,t)
+\lambda_{\text{snis}}^{\text{post}}\,\sTWD^{\text{post}}(y,t).
\]
Empirically, the anticorrelation mechanism persists and
is often strongest at intermediate diffusion times (see \cref{app:suppres:cor-var}).


\section{Results}
\label{sec:quant-results}
We present numerical experiments that (i) validate the statistical claims underpinning our framework and (ii) demonstrate how variance reduction translates into improved downstream sampling fidelity. The experiments are organized to move from fully controlled settings, where ground truth scores and errors are accessible, to more challenging inverse problems.

We begin with \emph{results on a low dimensional manifold} (\cref{sec:prior-9d}), using closed form Gaussian mixture models where ground truth scores are available. In this setting we compare Tweedie, TSI, and Blend across quantitative divergence metrics and qualitative PCA projections (2D histograms of samples projected onto fixed principal directions).
 We directly measure Monte Carlo errors as a function of the reference set size $N_{\text{ref}}$ (\cref{fig:rmse_vs_n}, \cref{fig:mmd_ksd_vs_n}). We then conduct the regime sweep in \cref{sec:regime_study} to map out the ``advantage regime'', the combinations of dimension and noise level where the blended score estimator $\sBLND$ \cref{eq:snis_blended_score} yields the largest improvements in curvature and mass fidelity.
 Finally, we turn to inverse problems (\cref{subsec:ns_inverse}, \cref{subsec:mnist}), where we evaluate posterior sampling fidelity under (i) scientific forward operators with exact priors and (ii) image inverse problems with learned score proxies. Full details on hyperparameters, architectures, metric definitions, and experimental setups are provided in \cref{sec:repro}.

Unless stated otherwise, in \emph{all} sampling tests we integrate the reverse-time dynamics with the second order Heun predictor corrector (PC) solver \cite{ascher1997implicit}, a standard choice in score based SDE samplers (see, e.g., \cite{song2021score}); see \cref{sec:repro}, Def.~\ref{def:heun_pc} for the precise update rule. The same solver and time grid are used for Tweedie, TSI, and Blend to ensure comparability.

In our experiments, samplers are evaluated along a log-spaced diffusion time grid $t\in[t_{\min},t_{\max}] = [5{\times}10^{-4},\,1.5]$. The lower bound $t_{\min}$ ensures that the OU noise level $\sigma_t = \sqrt{1-e^{-2t}}$ remains small enough to resolve fine structure while avoiding catastrophic importance weight collapse at small $t$. The upper bound $t_{\max}$ is large enough that $p_t$ is close to the standard normal prior. The log spacing in $t$ allocates more grid points to the small-$t$ regime (small $\sigma_t$), where discretization and score-estimation errors tend to have the largest impact on final sample quality \cite{karras2022elucidating}.
 For TSI (and hence Blend) we estimate conditional expectations via SNIS with an effective sample size(ESS) threshold\footnote{\label{fn:ess_threshold}We quantify importance-sampling quality via the effective sample size $\mathrm{ESS}=1/\sum_i \tilde w_i^2$, where $\tilde w_i$ are normalized SNIS weights. We drop time points with $\mathrm{ESS}<\tau_{\mathrm{ESS}}$, and in all experiments we set $\tau_{\mathrm{ESS}}:=0.05\,N_{\mathrm{ref}}$.}.

\subsection{Moderate dimensional manifold: 9D Helix GMM}
\label{sec:prior-9d}
We test the ability of the samplers to capture a complex, low dimensional manifold embedded in a higher dimensional space. The target is a 9D Gaussian Mixture Model (GMM) whose intrinsic structure is a 3D helix, shown in \cref{fig:prior-9d}. Unless stated otherwise, all quantitative loss curves in this section are computed on this \emph{same 9D Helix GMM}. In the qualitative panels (e.g., \cref{fig:prior-9d}), the point clouds represent samples drawn from the corresponding method (or from the ground truth density) and projected onto the indicated principal directions. Note that we never visualize score vectors directly. For visualization, we project samples onto two orthogonal planes \((d_1,d_2)\), \((d_3,d_4)\), where \(d_1,\ldots,d_4\) denote the first four principal directions obtained by PCA fit to the \emph{target} distribution (fixed once for all methods). This axis selection highlights high variance structure and does not \emph{a priori} favor Blend over Tweedie (or vice versa). All concrete values (number of components, helix pitch/radius, covariance anisotropy, bandwidth grids, SNIS batch sizes, and the $t$-grid) are provided in  \cref{sec:repro}.

\subsubsection*{Quantitative comparisons}
To compare these estimators quantitatively, we use three complementary metrics that emphasize global mass placement, score based discrepancy, and pointwise score error on the \emph{same 9D Helix GMM}: (i) MMD (see Appendix 
\cref{sec:repro}, Def.~\ref{def:mmd}) with an RBF kernel, which primarily reflects \emph{global mass placement and coverage}; (ii) KSD (see Appendix
\cref{sec:repro}, Def.~\ref{def:ksd}) with an inverse multiquadric kernel, a \emph{score based discrepancy} that is sensitive to both location and local geometry through the target score; and (iii) time averaged score root mean squared error (RMSE), (see Appendix
\cref{sec:repro}, Def.~\ref{def:score_rmse}) along the sampling $t$-grid, which measures \emph{pointwise score error} along the diffusion path (ground truth $s$ is available for the GMM). We vary the number of reference samples $N_{\text{ref}}$ to produce the curves in \cref{fig:mmd_ksd_vs_n} and \cref{fig:rmse_vs_n}. Full definitions, estimator details, and kernel/bandwidth choices are deferred to the appendix; implementation details are provided in \cref{sec:repro}.

In the following, by \emph{Blend} (or \emph{Blend Score}, solid blue curves), we mean the results obtained with
the blended score estimator $\sBLND$ \cref{eq:snis_blended_score} using the exact initial score $s_0$; this serves as the oracle reference for the practical \emph{Blend (proxy)} (dashed blue curves), which replaces $s_0$ by the diagonal (Diag) learned local score proxy from \cref{sec:learned_proxy}. We also include the pure TSI estimator (green curves) in \cref{fig:mmd_ksd_vs_n} and \cref{fig:rmse_vs_n}. In \cref{fig:mmd_ksd_vs_n} (left), Blend (proxy) (dashed blue curves) preserves Tweedie's global mass placement accuracy, while in \cref{fig:mmd_ksd_vs_n} (right) it achieves lower KSD than Tweedie, thanks to the local gradient information captured by TSI. In \cref{fig:rmse_vs_n} the RMSE Blend (proxy) (dashed blue) significantly smaller than that of Tweedie. Due to its inability to resolve global structure, the pure TSI score estimator generally performs the worst on transport metrics like MMD. However, its strong performance on local metrics (KSD and score RMSE) confirms that it provides a valid local gradient signal. Unlike Tweedie, which tends to memorize reference samples and fragment the local density, TSI resolves the smooth geometric variations of the underlying manifold, even if it fails to coordinate global mass placement.

\vspace{-.5em}
\begin{figure}[H]
 \centering
 \includegraphics[width=\linewidth]{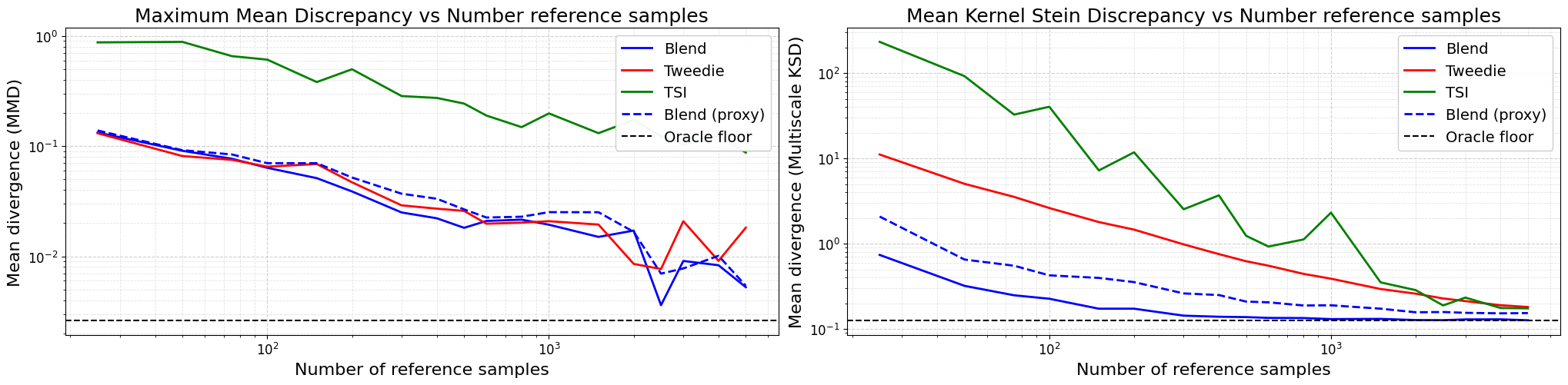}
 \vspace{-1.5em}
 \caption{\textbf{MMD and KSD vs.\ number of references (lower is better)} on the \textbf{9D Helix GMM}.
 \textbf{Left:} MMD with an RBF kernel, reflecting global mass placement and coverage.
 \textbf{Right:} KSD with an inverse multiquadric kernel, a score based discrepancy sensitive to local geometry through the target score.
 Blend (proxy), (dashed blue, using the diagonal learned score proxy from \cref{sec:learned_proxy}) is comparable with Tweedie in terms of global mass placement (left, MMD), while correcting its local score errors (right, KSD). Tweedie's high KSD reflects its tendency to fragment the manifold structure, whereas Blend resolves the local geometry without sacrificing global coverage.
 The Blend (solid blue, using exact $s_0$) further approaches the ground truth floor. While TSI (green) shows high variance, the blended estimators $\sBLND$ \cref{eq:snis_blended_score} stabilize it.}
 \label{fig:mmd_ksd_vs_n}
\end{figure}
\vspace{-1.0em}
\begin{figure}[H]
 \centering
\includegraphics[width=.5\linewidth]{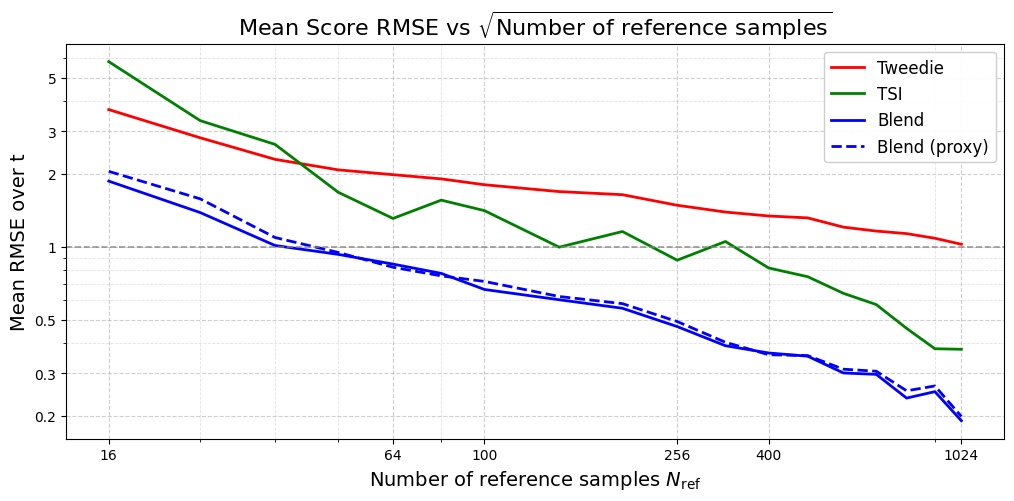}
\caption{\textbf{Time-averaged score RMSE vs.\ $N_{\mathrm{ref}}$} (x-axis uses $\sqrt{N_{\mathrm{ref}}}$ spacing) on the \textbf{9D Helix GMM}.The RMSE of Blend (proxy) (dashed blue; diagonal learned score proxy from \cref{sec:learned_proxy}) closely matches oracle Blend (solid blue; exact $s_0$) and is significantly smaller than the RMSE of Tweedie (red) for all reference sizes.
Tweedie’s elevated RMSE is consistent with overfitting to the reference bank at small diffusion times.
Pure TSI (green) provides an estimator for local manifold curvature (low RMSE) but lacks a global signal to transport mass correctly (see \cref{fig:mmd_ksd_vs_n}).}
\label{fig:rmse_vs_n}
\end{figure}

These three metrics paint a consistent picture
that supports our blending strategy in \cref{sect:optimalBlend}: variance minimal
blending inherits the strengths of both estimators. Most of these gains are preserved even when the TSI term uses a score proxy fitted only to data, indicating that data dependent curvature information extracted from raw samples is sufficient to deliver measurable improvements over Tweedie alone.

\subsubsection*{Qualitative comparison}
We compare five columns: \emph{True} (target samples), \emph{Blend}, which uses the exact target score ($s_0$) inside the variance optimal blend, \emph{Blend (proxy)}, which replaces \(s_0\) by the \textsc{LR{+}D} local Gaussian score proxy from \cref{sec:learned_proxy} fit directly to the raw reference data, \emph{Tweedie} (standard nonparametric baseline), and \emph{TSI} in isolation.
The results in \cref{fig:prior-9d} show that Blend (proxy) closely matches Blend, and both are nearly indistinguishable from the localized, complex ground truth across both PCA marginals.
In contrast, Tweedie typically resolves global scale position information accurately but locally collapses generated samples onto a neighborhood around the reference samples, failing to capture the actual smooth local manifold structure. The TSI estimator, by directly leveraging $s_0$, captures the smooth local variation in the density accurately but its higher variance at larger diffusion times can distort the distribution in noise space, leading to misplaced probability mass after pushing back to $t=0$. The variance optimal Blend resolves these complementary failure modes by combining Tweedie's stable global mass placement with TSI's local smooth geometric fidelity, yielding high quality sampling that neither estimator achieves alone per visual inspection.

\vspace{-1.0em}
\begin{figure}[H]
 \centering
 \includegraphics[width=\linewidth]{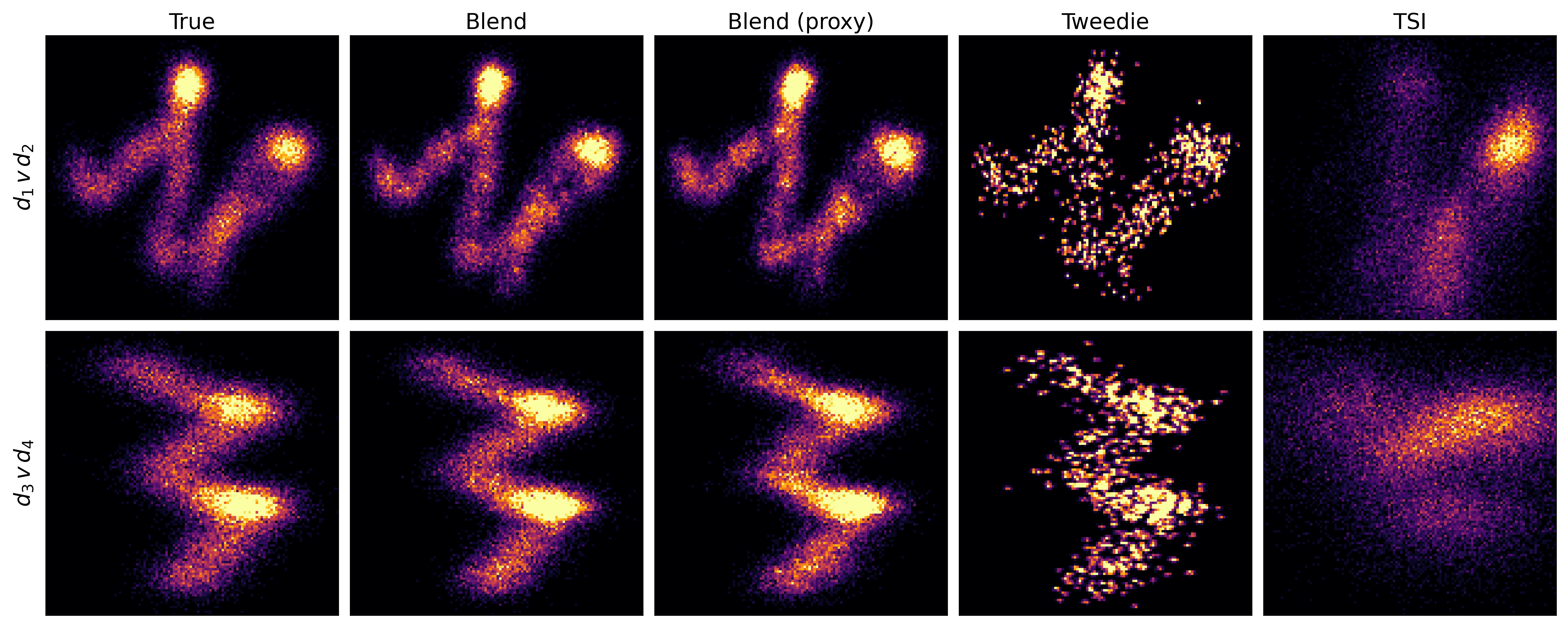}
 \vspace{-.5em}
 \vspace{-1.0em}
 \caption{\textbf{Qualitative comparison on the \emph{9D Helix GMM} (N = 750).}
 Each panel displays a 2D histogram of samples projected onto the principal directions $(d_1,d_2)$ (top row) and $(d_3,d_4)$ (bottom row), with PCA fitted to the \emph{target} distribution and held fixed across methods.
 Columns show the \emph{True} distribution, \emph{Blend}, \emph{Blend (proxy)} using the \textsc{LR{+}D} proxy from \cref{sec:learned_proxy}, \emph{Tweedie}, and \emph{TSI}. Tweedie (4th column) correctly identifies the global region but fragments the manifold, collapsing samples onto the reference particles (memorization). TSI (5th column) captures local curvature but scatters mass due to high variance. The Blend estimators (2nd \& 3rd columns) combine stable global positioning with smooth local reconstruction.}
 \label{fig:prior-9d}
\end{figure}


\subsection{Characterizing the Posterior Sampling Advantage Regime}
\label{sec:regime_study}

Before turning to the PDE and imaging inverse problems (\cref{subsec:ns_inverse,subsec:mnist}), we study the signal to noise ratio (SNR) regimes in which the variance-minimizing blend improves over the Tweedie baseline in a fully controlled setting with an \emph{exact posterior} and \emph{exact score}.
In the regime sweep in ~\cref{fig:regime_sweep},  we find that:
(i) both methods struggle at high SNR (sharply concentrated likelihood) due to weight degeneracy;
(ii) both methods become comparable when data are noisy (posterior close to the prior); and
(iii) there is an intermediate noise, moderate dimension regime where the blend achieves appreciably lower sampling error than the tweedie based sampler.

Importantly, we also expose a \emph{high dimensional failure mode}. In this regime, both approaches weaken, but the blended score estimator $\sBLND$ \cref{eq:snis_blended_score} degrades further than the baseline. This is because the blend is ``double leveraged'' on quantities that become unstable in high dimensions, specifically, the collapse of SNIS weights and the difficulty of resolving optimal variance minimization weights \cref{eq:var_weights}. This finding motivates truncating to the problem to a moderate number dimension in our subsequent experiments to remain in the regime where blending is most advantageous.

\subsection*{Synthetic inverse problem family (linear--Gaussian likelihood with a GMM prior)}
For each dimension $d \in \{3,6,12,24\}$ we consider
\begin{equation}
x \sim p_0(x), 
\qquad 
y_{\mathrm{obs}} \mid x \sim \mathcal N(Ax,\sigma^2 I),
\label{eq:toy_inverse}
\end{equation}
where $p_0$ is a Gaussian mixture prior (GMM) in $\mathbb R^d$, and $A:\mathbb R^d \to \mathbb R^d$ is a fixed linear ``forward operator'' with a non trivial spectrum (chosen to mimic the anisotropy/ill conditioning typical of inverse problems; details are fixed in the sweep script and held constant across the sweep). For each trial we draw a fresh latent truth $x^\star \sim p_0$ and observation noise $\varepsilon\sim\mathcal N(0,I)$ and set
\(
y_{\mathrm{obs}} = A x^\star + \sigma \varepsilon.
\)
The exact inverse problem setup details are laid out in \cref{sec:repro}.
Because the prior is a GMM and the likelihood is Gaussian, the posterior $p(x \mid y_{\mathrm{obs}})$ is again a (renormalized) GMM with the same number of components. This allows us to (i) draw \emph{exact} posterior samples and (ii) evaluate the \emph{exact posterior score}
\begin{equation}
\label{eq:true_score}
\begin{split}
s^\star(x) &= \nabla_x \log p(x \mid y_{\mathrm{obs}}) = \nabla_x \log p_0(x) + \nabla_x \log p(y_{\mathrm{obs}}\mid x), \\
\nabla_x \log p(y_{\mathrm{obs}}\mid x) &= \frac{1}{\sigma^2}A^\top (y_{\mathrm{obs}}-Ax).
\end{split}
\end{equation}

\subsection*{Dimension coherent noise normalization (inverse SNR coordinate)}
A recurring ambiguity in regime plots is that the \emph{meaning} of ``$\sigma$'' changes with dimension and with the operator $A$ (since $\|Ax\|$ is dimension- and spectrum dependent). To make the horizontal axis comparable across $d$, we sweep a dimensionless, \emph{dimension coherent} inverse SNR parameter
\(
\sigma_{\mathrm{rel}}
\;:=\;
\frac{\sigma}{\sqrt{\mathbb E_{x\sim p_0}\|Ax\|^2}}.
\label{eq:sigma_rel}
\)
The denominator is a signal scale induced by the prior and the forward map. In practice we estimate $\sqrt{\mathbb E\|Ax\|^2}$ once per dimension by Monte Carlo under $p_0$ (and keep it fixed throughout the sweep). We then sweep $\sigma_{\mathrm{rel}} \in [0.025,1.0]$ on a log grid. Smaller $\sigma_{\mathrm{rel}}$ corresponds to higher SNR (sharper likelihood); larger $\sigma_{\mathrm{rel}}$ corresponds to a weaker data term (posterior closer to the prior).

We benchmark our proposed variance-minimizing blend score against a standard SNIS Tweedie baseline. To ensure a direct comparison, both estimators operate on the same reference set and utilize identical SNIS posterior weights. To standardize the generation process across runs, we use fixed reference budget at $N_{\mathrm{ref}}=4,000$ and employ a shared Heun predictor--corrector sampler with matched time grids and sample counts ($N_{\mathrm{gen}}$). This setup isolates the score estimator as the only variable, ensuring that any performance differences are attributable strictly to the blend's statistical properties rather than discrepancies in samplers or compute budgets.

We report the Gaussian kernel MMD between generated and exact posterior samples in log scale, normalized by a per setting \emph{floor}:
\(
\log\!\big(\mathrm{MMD}/\mathrm{floor}\big).
\)
The floor is the MMD between two independent sets of exact posterior samples. It captures the intrinsic finite sample resolution of the metric in that setting; normalizing by it factors out dimension- and sample size effects that would otherwise obscure regime transitions. Metric definitions and kernel choices are deferred to 
\cref{sec:repro}.
\vspace{-1.0em}
\begin{figure}[H]
 \centering
 \includegraphics[width=\linewidth]{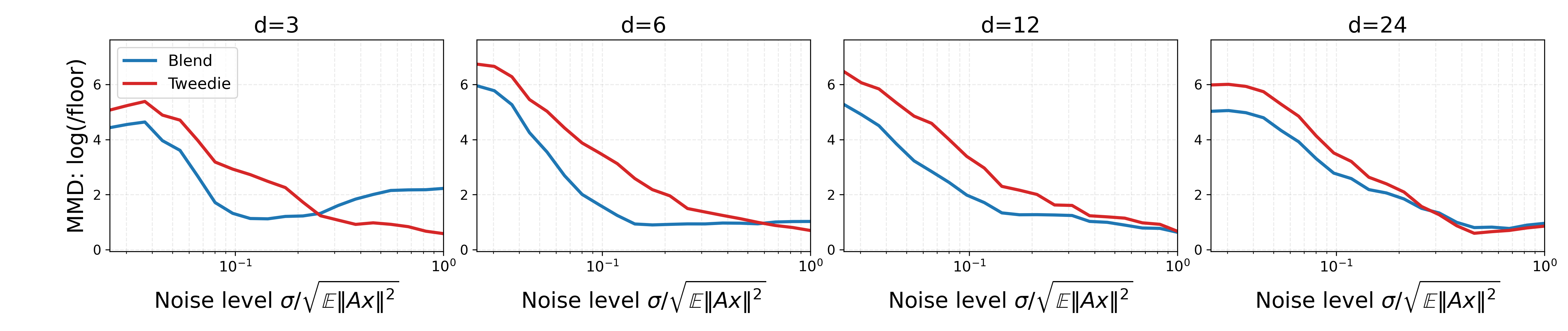}
 \vspace{-1.5em}
 \caption{\textbf{Blend advantage regimes.}
 Sweep of normalized MMD vs.\ inverse SNR ($\sigma_{\mathrm{rel}}$).
 While both estimators suffer weight collapse at high SNR (low $\sigma_{\mathrm{rel}}$, left), \textsc{Blend} (blue) significantly outperforms Tweedie (Red) at intermediate noise levels and dimensions ($d \le 12$).
 The advantage narrows at $d=24$ and in the high-noise limit (right), where the posterior approaches the prior.}
 \label{fig:regime_sweep}
\end{figure}
\vspace{-.5em}
~\cref{fig:regime_sweep} cleanly separates three behaviors that mirror what we observe in the downstream inverse problems:
\begin{enumerate}
\item \textbf{High SNR breakdown (small $\sigma_{\mathrm{rel}}$).}
When the likelihood is sharply concentrated relative to the forward signal scale, both methods suffer (large $\log(\mathrm{MMD}/\mathrm{floor})$). This is the controlled analogue of the ``small noise'' instability observed in the real inverse problems, where importance weighting and finite reference budgets lead to weight degeneracy and high variance.

\item \textbf{Intermediate noise advantage window (moderate $\sigma_{\mathrm{rel}}$).}
For moderate dimensions ($d=3,6,$ and $12$), there is a visibly intermediate band of $\sigma_{\mathrm{rel}}$ where the Blend curve lies below Tweedie on MMD. This is the regime in which the posterior is informative enough that Tweedie only bias/variance is exposed, but not so sharp that all reference based estimators collapse.

\item \textbf{High noise saturation / crossover (large $\sigma_{\mathrm{rel}}$).}
As $\sigma_{\mathrm{rel}}$ increases and the posterior becomes less informative, the curves approach the MMD floor and (in the lowest dimensional case) may cross, indicating that the benefit of blending is concentrated in the intermediate noise window rather than in the prior dominated limit.
\end{enumerate}

The regime characterization in ~\cref{fig:regime_sweep} directly guides the parameter selection for the scientific and imaging inverse problems presented in subsequent sections. We specifically prioritize practical intermediate noise regimes, as this window, situated between the extremes of likelihood dominated collapse and prior dominated equivalence, is where the blended score estimator $\sBLND$ \cref{eq:snis_blended_score} demonstrates the largest gains in our sweep. We also  mitigate the observed sensitivity of reference based estimation to increasing dimension at fixed computational budgets. In our PDE and imaging experiments, we truncate the solution space to a moderate number of principal modes (e.g., 8--24), ensuring the solver operates within the stable capabilities identified in our sweep rather than in a regime dominated by sampling error.

\subsection{Inverse problems}
\label{sec:inverse_problems}

We conclude with inverse problems that probe posterior sampling fidelity in settings with (i) a \emph{white box} prior/likelihood, where the posterior density and score are available (up to normalizing constants), and (ii) a \emph{black box} prior, where the prior is accessible only through samples and must be represented by score proxies. These two regimes naturally support different diagnostics: in the white box setting we can directly evaluate score based discrepancies (e.g., KSD) and density based discrepancies ( e.g., Kullback-Leibler divergence ($\widetilde{\text{KL}}$)). In black box settings we rely on reference posteriors estimated from samples and distributional comparisons (e.g., MMD).

For both problems we work in a reduced coordinate representation $\alpha$ (Karhunen Lo\`eve (KL) coefficients for Navier--Stokes and PCA coefficients for MNIST), and map posterior samples back to the ambient space to evaluate reconstructed fields/images. We compare the Tweedie only posterior sampler ($\sTWD$) against the variance minimized blended posterior sampler ($\sBLND$), formed by posterior tilting as in \cref{sec:bayes-inverse}. Reference posteriors are obtained by Metropolis-Adjusted Langevin Algorithm (MALA) in the white box setting (exact posterior target), and by importance sampling (IS) on a large held out pool in the black box setting. Full metric definitions are deferred to the appendix.
Additional posterior sampling diagnostics, including the inverse heat equation experiment and extra MNIST visualizations,
are deferred to \cref{app:suppres:post,app:suppres:heat}.

To make the two inverse problems as comparable as possible, we report a common core of metrics in both cases: (i) the coefficient-space mean error $\mathrm{RMSE}_\alpha$ (root mean squared error of the posterior mean in coefficient space; see \cref{sec:repro}, Def.~\ref{def:rmse_coeff}); (ii) the ambient-space mean error $\mathrm{RMSE}_{\mathrm{amb}}$ (the same error after mapping samples back to the full field/image; see \cref{sec:repro}, Def.~\ref{def:rmse_ambient}); and (iii) MMD to a reference posterior proxy (MALA for Navier--Stokes and IS for MNIST), which reflects global distributional mismatch to a high-quality baseline. In addition, we report a forward/data-fit error (measuring how well the posterior mean explains the noiseless observation through the forward operator; see \cref{sec:repro}, Def.~\ref{def:fwd_err}). Precise definitions and normalization conventions are again deferred to  \cref{sec:repro}.

\subsubsection{Navier--Stokes inverse problem (white box posterior)}
\label{subsec:ns_inverse}

We evaluate the proposed posterior score estimators in a non linear setting using the 2D Navier--Stokes equations on the torus $\mathbb{T}^2 = [0, 2\pi]^2$. Here we test the upper bound of the advantage regime by increasing the observation noise to $\sigma_{\mathrm{obs}}=0.3$, while maintaining a moderate latent dimension ($d=24$ eigenmodes). The system governs the evolution of the vorticity field $w(x,t)$ according to
\begin{equation*}
\partial_t w + u \cdot \nabla w = \nu \Delta w + f, \qquad
-\Delta \psi = w, \qquad
u = \nabla^\perp \psi,
\end{equation*}
where $\nu$ is the viscosity, $f$ is a forcing term, and $u$ is the incompressible velocity field derived from the streamfunction $\psi$. The parameter of interest is the initial vorticity field $w_0(x)$. We assume a sparse observation model where we measure the velocity field at $25$ spatial locations (yielding $50$ scalar observations) at a final time $T$. The observations $\by \in \R^{50}$ are given by $\by = \mathcal{O}(u(\cdot, T)) + \eta$, with Gaussian noise

\noindent $\eta \sim \mathcal{N}(0, \sigma_{\mathrm{obs}}^2 I)$ where $\sigma_{\mathrm{obs}}=0.3$.

We work in a reduced Karhunen--Lo\`eve parameterization of the initial vorticity,
\begin{equation*}
w_0(x;\alpha)=\sum_{i=1}^{q}\sqrt{\lambda_i}\,\phi_i(x)\,\alpha_i,\qquad \alpha\in\R^{q},\ \ q=24,
\end{equation*}
where $(\lambda_i,\phi_i)$ are the leading eigenpairs of the prior covariance kernel. This yields a Gaussian prior on coefficients
\begin{equation*}
p_0(\alpha)=\mathcal{N}(0,I),\qquad s_0(\alpha) := \nabla_{\alpha}\log p_0(\alpha) = -\alpha,
\end{equation*}
and a Gaussian likelihood induced by the (differentiable) forward operator
\begin{equation*}
F(\alpha) := \mathcal{O}\big(u(\cdot,T;w_0(\cdot;\alpha))\big)\in\R^{50}.
\end{equation*}
The target posterior distribution is defined as
\begin{equation*}
p^{\mathrm{post}}(\alpha\mid \by)\ \propto\ p_0(\alpha)\,p(\by\mid \alpha),
\end{equation*}
with posterior score at $t=0$ given by $s^{\mathrm{post}}_0(\alpha)=s_0(\alpha)+\nabla_{\alpha}\log p(\by\mid \alpha)$, where the likelihood gradient is obtained via the adjoint method (differentiable physics). 

\subsubsection*{Blended posterior score estimation}
Following \cref{sec:bayes-inverse}, we work directly in coefficient space and identify the diffusion state with the KL coefficients, i.e., $x\equiv \alpha$ (so $x_0^{(i)}=\alpha_0^{(i)}$).
We form posterior versions of the Tweedie and TSI estimators by tilting the SNIS logits by $\log \mathcal{L}(\alpha_0^{(i)})$ and using $s^{\mathrm{post}}_0(\alpha_0^{(i)})$ in the posterior correction.
The proposed sampler uses the variance--optimal convex blend of these two estimators ($\sBLND$) along the reverse trajectory, while $\sTWD$ uses the Tweedie term alone.
We also compare against MALA targeting the exact posterior $p^{\mathrm{post}}(\alpha\mid \by)$ as a reference baseline.

\subsubsection*{Experimental setup and metrics}
We compare three sampling strategies: (i) the proposed blended posterior sampler ($\sBLND$), (ii) the Tweedie only posterior sampler ($\sTWD$), and (iii) MALA targeting $p^{\mathrm{post}}(\alpha\mid \by)$ as a reference baseline.
For $\sBLND$ and $\sTWD$, we use $N_{\text{ref}} = 20{,}000$ reference coefficients $\{\alpha_0^{(i)}\}_{i=1}^{N_{\text{ref}}}\sim p_0$,
together with $\{\log \mathcal{L}(\alpha_0^{(i)}),\, s^{\mathrm{post}}_0(\alpha_0^{(i)})\}$, and generate samples with a Heun predictor--corrector integrator using 60 steps.
For MALA, we run chains of 2{,}000 iterations with a burn in of 500 steps.
We report the shared metrics (MMD$\to$MALA, $\mathrm{RMSE}_\alpha$, $\mathrm{RMSE}_{\mathrm{amb}}$, and 
forward/data fit error), and additionally report KSD and $\widetilde{\mathrm{KL}}$ (see \cref{sec:repro}, Def.~\ref{def:kl_tilde}) in this white box setting. 
Here, $\widetilde{\mathrm{KL}}$ acts as a proxy for the forward KL divergence, calculated as the negative differential entropy (estimated via $k$-nearest neighbors) minus the expected unnormalized log-posterior. This metric quantifies distributional discrepancy up to the unknown log-normalizer constant.
\vspace{-.5em}
\begin{table}[H]
 \centering
 \caption{\textbf{Navier--Stokes quantitative results.} Shared metrics (MMD$\to$MALA, mean errors in coefficient/ambient space, and forward/data fit error) are reported alongside KSD and $\widetilde{\mathrm{KL}}$. Arrows indicate the preferred direction: $\downarrow$ means lower values are better (all metrics reported here are minimized), with MALA serving as the reference for MMD$\to$MALA.}
 \label{tab:ns_results}
 \resizebox{\textwidth}{!}{
 \begin{tabular}{lcccccc}
 \toprule
 Method
 & MMD$\to$MALA $\downarrow$
 & $\mathrm{RMSE}_\alpha$ $\downarrow$
 & $\mathrm{RMSE}_{\mathrm{amb}}$ $\downarrow$
 & Fwd Err $\downarrow$
 & KSD $\downarrow$
 & $\widetilde{\mathrm{KL}}$ $\downarrow$ \\
 \midrule
 Tweedie ($\sTWD$)
 & 0.1262
 & 0.5819
 & 0.1201
 & 0.09847
 & 15.80
 & 94.68 \\
 Blend Posterior ($\sBLND$)
 & 0.09022
 & 0.5108
 & 0.1114
 & 0.1029
 & 2.011
 & 50.95 \\
 MALA (Reference)
 & 0.0000
 & 0.4776
 & 0.1012
 & 0.09550
 & 1.774
 & 42.25 \\
 \bottomrule
 \end{tabular}
 }
\end{table}

We compare the Tweedie estimator ($\sTWD$), the Blend estimator ($\sBLND$) and a standard MALA baseline. The results in ~\cref{tab:ns_results} show that $\sBLND$ corrects the approximation error of the pure Tweedie method: it matches the MALA level KSD while maintaining the low computational cost of the Tweedie based method.

 Visually, \cref{fig:ns_hists,fig:ns_recon} tell the same story: $\sBLND $ yields posterior marginals and posterior mean reconstructions and uncertainty maps that are substantially more consistent with those produced by the MALA reference.

\vspace{-1.0em}
\begin{figure}[H]
 \centering
 \includegraphics[width=.75\textwidth]{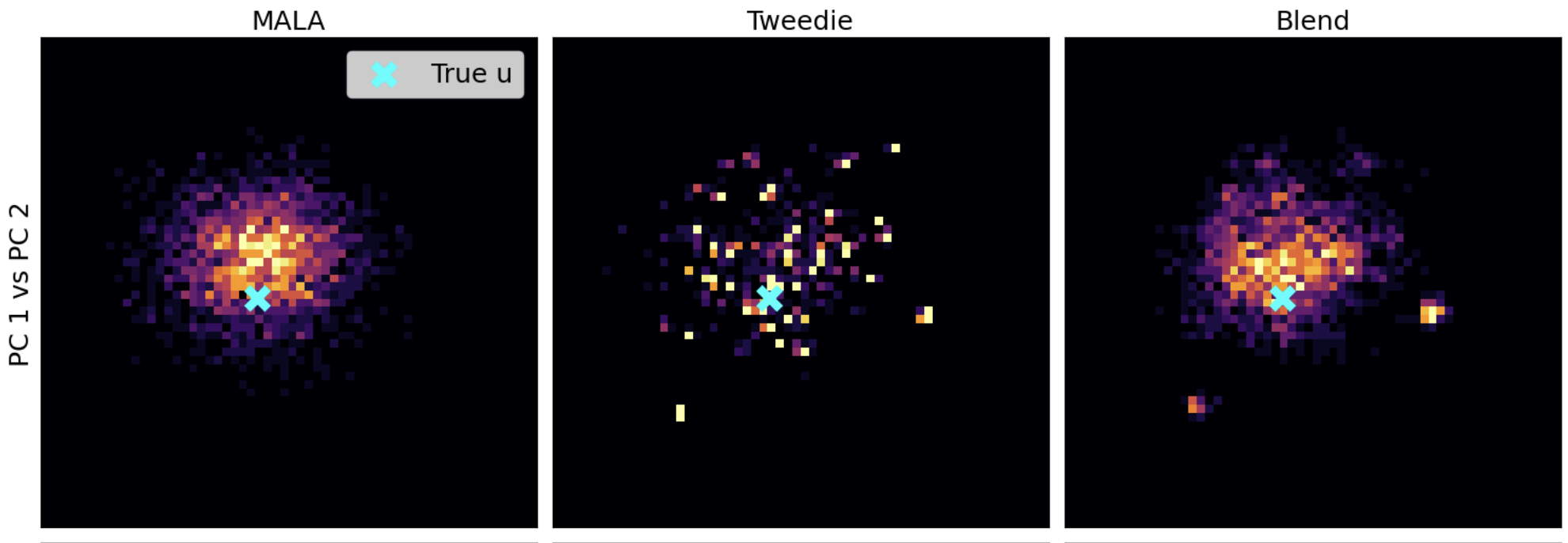}
 \vspace{-.65em}
 \caption{\textbf{Heatmap of Navier--Stokes coefficient posterior histograms.} 
2D marginal histograms of posterior KL coefficients projected onto leading principal components. The Tweedie estimator exhibits severe fragmentation, collapsing probability mass onto a sparse set of reference samples. 
 The \textsc{Blend} estimator uses local geometric information to fill these gaps, restoring the continuous posterior geometry and coverage observed in the MALA ground truth.}
 \label{fig:ns_hists}
\end{figure}

\begin{figure}[H]
 \centering
 \includegraphics[width=.85\textwidth]{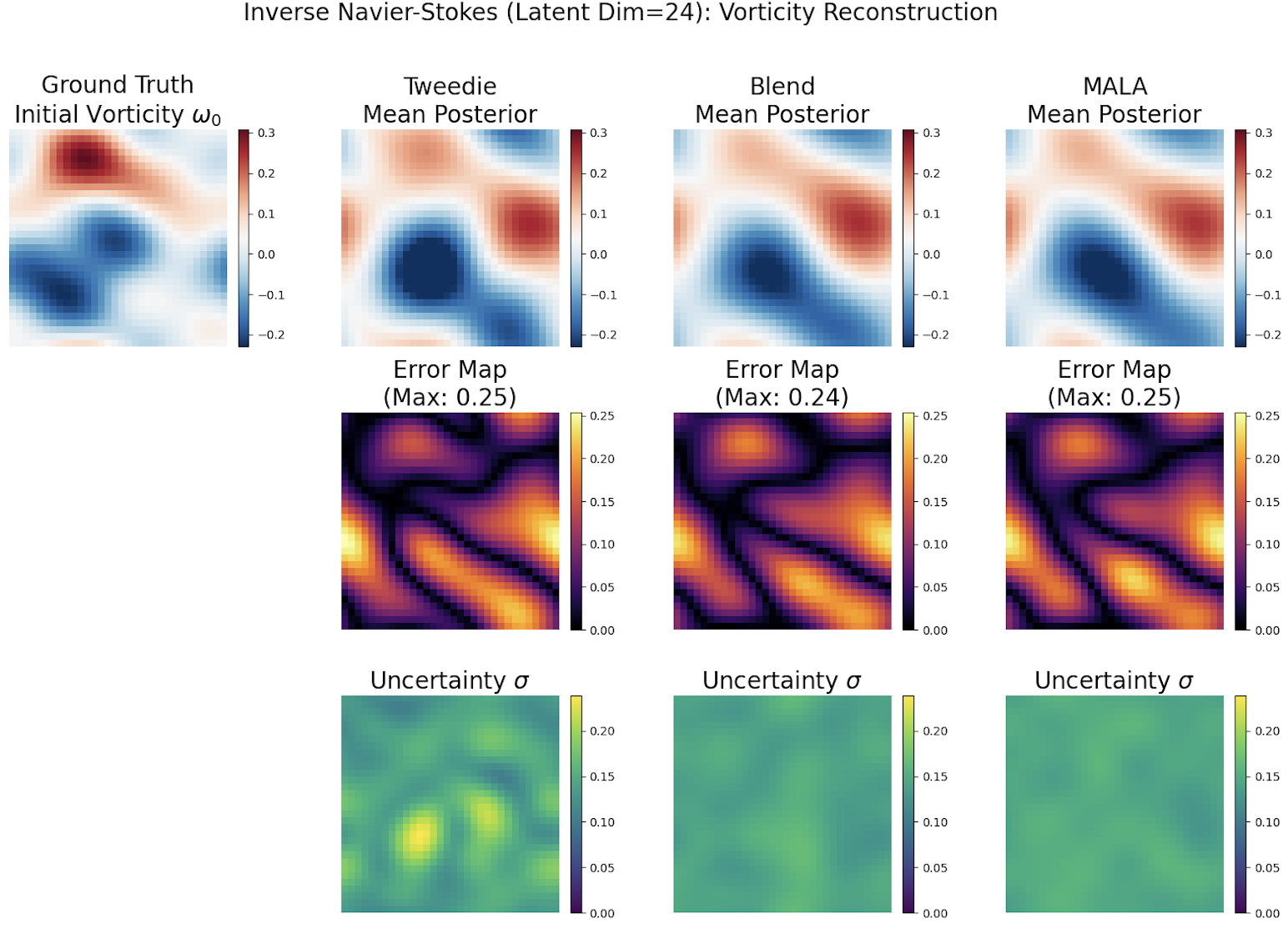}
 \vspace{-.75em}
 \caption{\textbf{Navier--Stokes vorticity field  reconstructions.} Visual comparison of the posterior mean reconstruction of the initial vorticity \(w_0\). The \textsc{Blend} estimator exhibits strong consistency with the MALA ground truth, correctly resolving the intensity and location of vorticity features. 
 In contrast, the pure Tweedie estimator produces a diffuse approximation, failing to resolve the structural uncertainty captured by the reference sampler.}
 \label{fig:ns_recon}
\end{figure}

\subsubsection{MNIST deblurring inverse problem (black box prior)}
\label{subsec:mnist}

We conclude with a practical inverse problem: linear deblurring of MNIST digits. Unlike the Navier--Stokes inverse problem in \cref{subsec:ns_inverse}, where the prior score is known analytically, here the prior distribution $p_0$ is unknown and accessible only through a finite set of samples. We define the problem in a reduced order PCA latent space and estimate the score from these samples using the proposed blended proxy.

\subsubsection*{Setup and latent space}
We utilize the MNIST training set (scaled to $[0,1]$) to compute a Principal Component Analysis (PCA) basis. We retain the top $D=15$ principal directions $U_{15}\in\R^{784\times 15}$ and mean $\mu \in \R^{784}$. Images are projected into this latent space via coefficients $\alpha = U_{15}^\top (x-\mu)\in\R^{15}$.
The prior $p_0(\alpha)$ is the implicit distribution of these MNSIT PCA coefficients. For score estimation, we use a reference set of $N=10{,}000$ such coefficients, $\{\alpha_0^{(i)}\}_{i=1}^{N_{\text{ref}}}\sim p_0$,
and fit the \emph{LR{+}D score proxy} (~\cref{sec:learned_proxy}) directly in this 15-dimensional space.

\subsubsection*{Observation model}
Observations $y_{\text{obs}}$ are generated in the full image space by applying a $9{\times}9$ Gaussian blur kernel $H$ with standard deviation $\sigma_{\text{blur}}{=}2.5$, followed by additive white Gaussian noise $\eta \sim \mathcal{N}(0, \sigma_{\text{obs}}^2 I)$:
\(
y_{\text{obs}} \;=\; H(\mu + U_{15}\alpha) + \eta.
\)

\noindent This configuration targets the intermediate regime identified in ~\cref{sec:regime_study} where variance reduction is most critical.

\subsubsection*{Posterior sampling (implicit prior score + deblurring likelihood)}
To obtain posterior samples for the MNIST deblurring problem, we follow the posterior weight tilting construction in \cref{sec:bayes-inverse}, but with an \emph{implicit} prior in the PCA coefficient space. Concretely, the prior $p_0(\alpha)$ is represented by the reference set of MNIST PCA coefficients, and its score is provided either by the Tweedie estimator $\sTWD$ or by the SNIS plug-in blended estimator $\hat s_{\text{BLEND}}$ defined in \cref{eq:snis_blended_score}, where the TSI term uses the LR{+}D proxy fit to the raw MNIST data (\cref{sec:learned_proxy}).

The observation model induces a linear--Gaussian likelihood
\[
\mathcal{L}(\alpha)
:=p(y_{\text{obs}}\mid \alpha)
=\mathcal{N}\!\big(y_{\text{obs}};\,H(\mu+U_{15}\alpha),\,\sigma_{\text{obs}}^2 I\big),
\]
with log likelihood gradient (in $\alpha$-space)
\[
\nabla_{\alpha}\log \mathcal{L}(\alpha)
=\frac{1}{\sigma_{\text{obs}}^2}\,(HU_{15})^\top\!\Big(y_{\text{obs}}-H(\mu+U_{15}\alpha)\Big).
\]
At sampling time, we perform diffusion and posterior sampling in coefficient space, i.e., the reverse trajectory evolves $\alpha_t\in\R^{15}$.
We form the posterior score estimator exactly as in \cref{sec:bayes-inverse}: we tilt the SNIS weights by $\mathcal{L}(\alpha_0^{(i)})$ and use the resulting posterior version of the score estimator along the reverse trajectory.
Final samples are mapped back to image space by $x=\mu+U_{15}\alpha$.

\subsubsection*{Reference posterior and MALA baseline}
Since the true posterior is intractable, we construct a ``gold standard'' reference distribution using importance sampling (IS) on a large held out pool of prior samples. We also compare against a gradient based MALA baseline targeting an approximate posterior built from a differentiable GMM surrogate prior in latent space (details as in \cref{sec:repro}).

\subsubsection*{Quantitative results}
We report the shared metrics (MMD$\to$IS, $\mathrm{RMSE}_\alpha$,

\noindent $\mathrm{RMSE}_{\mathrm{amb}}$, and forward/data fit error) and, specific to the image setting, PSNR (see Appendix 
\cref{sec:repro}, Def.~
\ref{def:psnr}) and Coverage (see Appendix 
\cref{sec:repro}, Def.~
\ref{def:coverage}). Here Peak Signal-to-Noise Ratio (PSNR) reflects image space reconstruction quality (a monotone transform of pixel space RMSE), while Coverage measures whether generated samples fall within high probability regions of the IS reference posterior. Precise definitions and normalization conventions are deferred to \cref{sec:repro}.

\vspace{-.5em}
\begin{table}[H]
 \centering
 \footnotesize
 \setlength{\tabcolsep}{3pt}
 \caption{\textbf{MNIST deblurring metrics.}
$\uparrow$ means higher value is better (PSNR, Coverage), and $\downarrow$ means lower is better ($\mathrm{RMSE}_\alpha$, $\mathrm{RMSE}_{\mathrm{amb}}$, Fwd Err, and MMD$\to$IS). Blend (proxy) improves posterior fidelity (Coverage, MMD) and image quality (PSNR); shared mean error and forward error metrics are reported for direct comparison to the Navier--Stokes inverse problem.}
 \begin{tabular}{lrrrrrr}
 \toprule
 Method
 & PSNR (dB) $\uparrow$
 & Coverage (\%) $\uparrow$
 & $\mathrm{RMSE}_\alpha$ $\downarrow$
 & $\mathrm{RMSE}_{\mathrm{amb}}$ $\downarrow$
 & Fwd Err $\downarrow$
 & MMD$\to$IS $\downarrow$ \\
 \midrule
 Blend (proxy)
 & 28.02
 & 100.0
 & 0.1550
 & 0.0397
 & 0.1152
 & 0.1086 \\
 Tweedie only
 & 26.98
 & 92.6
 & 0.1843
 & 0.0448
 & 0.1662
 & 0.1876 \\
 MALA GMM
 & 25.99
 & 100.0
 & 0.1898
 & 0.0502
 & 0.1307
 & 0.1324 \\
 \bottomrule
 \end{tabular}
 \label{tab:mnist_combined}
\end{table}
\vspace{-1.0em}
This difference is visually apparent in \cref{fig:mnist_hist}. The blended posterior aligns closely with the IS support, while Tweedie collapses onto a sparse set of reference training samples.

\begin{figure}[H]
 \centering
 \includegraphics[width=.9\linewidth]{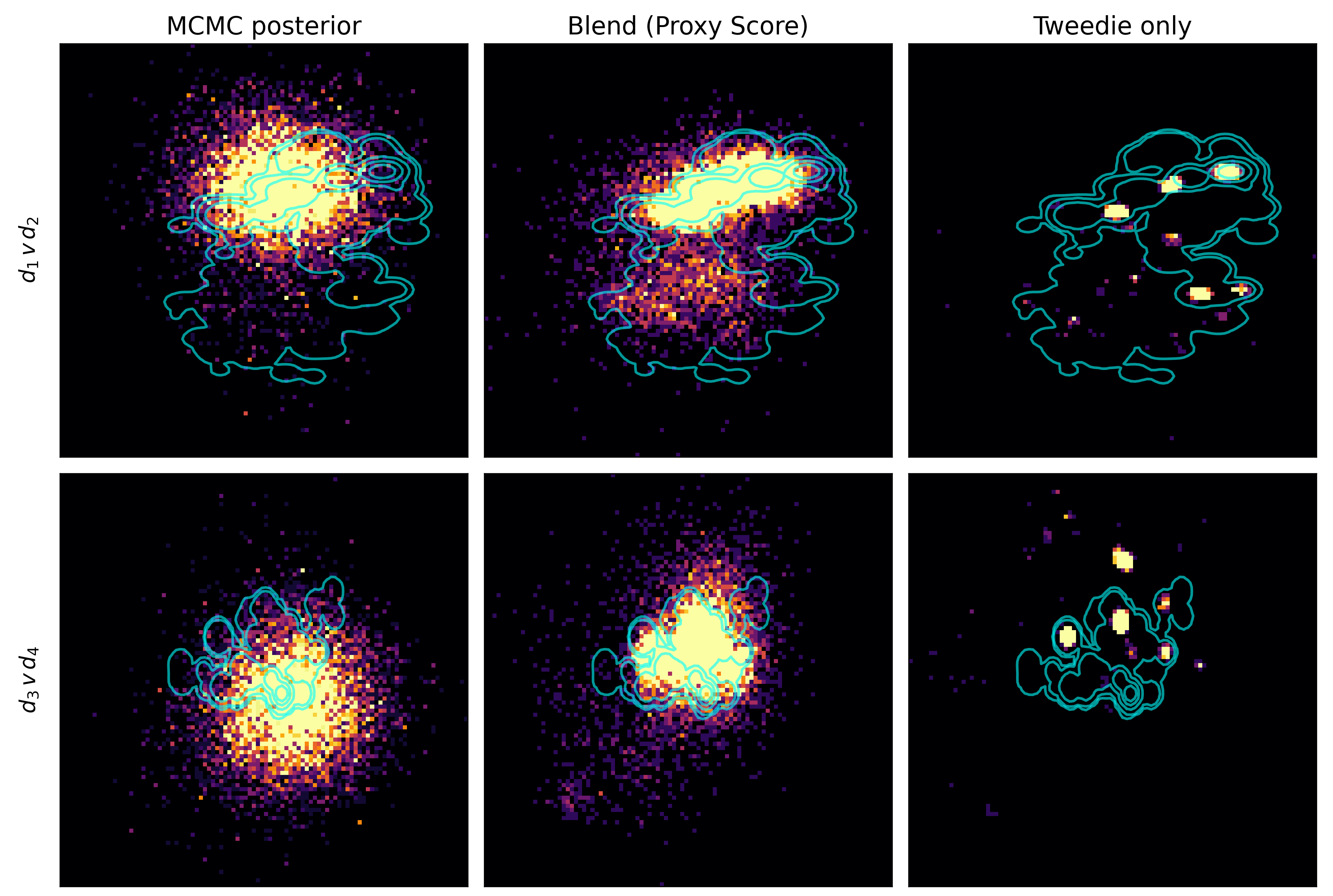}
 \vspace{-.75em}
 \caption{\textbf{Heatmaps of MNIST deblurring posterior PCA coefficients.} 
 Comparison of posterior samples in the PCA plane \((d_1,d_2)\) against the ``gold standard'' IS density contours (cyan). 
 The \textsc{Blend} estimator faithfully fills the non-linear posterior level sets, effectively utilizing the implicit nonparametric prior. 
 This contrasts with Tweedie, which collapses onto a sparse set of reference samples (overfitting), and the parametric GMM assumption used in the MALA baseline, which lacks the expressivity to resolve these complex manifold structures.}
 \label{fig:mnist_hist}
\end{figure}

Additional MNIST posterior sampling visualizations (including a multi panel sample comparison) are provided in
\cref{app:suppres:post}.


\section{Discussion}

Our primary contributions in this work are advancements to the \emph{statistical estimation of score fields} for flow/diffusion models. Rather than proposing a new reverse solver or sampler architecture, we improve the \emph{statistical accuracy of the pointwise score estimates} themselves. We demonstrate that the blended score estimator $\sBLND$ \cref{eq:snis_blended_score} yields a lower-variance estimate by exploiting the \emph{negative correlation} between two complementary score estimators: Tweedie and TSI.
In practice, this blend act as a natural \emph{multiscale decomposition of transport}. At larger diffusion times, the OU kernel is broad, so the Tweedie conditional expectation averages over many reference samples and yields a \emph{stable, coarse grained score field} that helps coordinate \emph{global mass placement}. Conversely, at smaller diffusion times (when ESS remains adequate), the TSI term leverages clean score information (or a local proxy) at nearby references to produce a \emph{locally coherent gradient field} that follows the manifold geometry. The variance-minimizing blend weight $\lambda^*(y,t)$ acts as an automatic gate, smoothly interchanging between these two regimes to minimize estimator error. The result is improved computational efficiency and improved sampling fidelity, driven by lower variance estimation of the underlying geometry.

A practical interpretation of TSI is that it performs \emph{local data augmentation} by propagating neighborhood curvature. While standard estimators view reference samples as Dirac masses, TSI uses local Gaussian score proxies (\textsc{Diag} or \textsc{LR{+}D}; see \cref{sec:learned_proxy}) to model the \emph{shape} of the distribution around each point. This bridges the gap between discrete training samples, making the model behave as thought it had larger Effective Sample Size (ESS) in the local neighborhood of the query.
This helps explain the intermediate regime identified in our regime study (\cref{sec:regime_study}): the estimator shows the largest gains in regimes where the diffusion noise scale is sufficient to overlap these local curvature proxies, allowing the blend to reconstruct the manifold geometry even when the finite reference set is sparse. By explicitly modeling these local gradients, TSI helps to reduce \emph{finite-reference discretizations artifacts} (e.g., sample memorization or local fragmentation) that arise when the score field is reconstructed from a discrete reference set.

Our investigation into inverse problems reveals that the Tweedie estimator's tendency to collapse is exacerbated in posterior sampling. As shown in the regime sweep in ~\cref{fig:regime_sweep}, this performance gap widens as the observation noise decreases (increasing likelihood sharpness) and as the latent dimension increases. This pathology arises from a ``Double Jeopardy'' variance trap that the blended score estimator $\sBLND$ \cref{eq:snis_blended_score} escapes.
At large diffusion times, the dominant failure mode is weight collapse: the self normalized importance sampling (SNIS) weights for the posterior differ from the prior weights by the likelihood term,
\(
\tilde{w}_i^{\text{post}} \propto p_{t|0}(y \mid x_0^{(i)}) \cdot \mathcal{L}(x_0^{(i)}).
\)

\noindent In high dimensional inverse problems, $\mathcal{L}(x)$ concentrates mass on a thin manifold. When using a fixed reference set from a diffuse prior, the ESS degrades rapidly. In the limit ($\text{ESS} \to 1$), the estimator becomes dominated by the single reference particle $x_0^{(k)}$ maximizing the likelihood kernel product. Consequently, the score degenerates to $\sTWD \approx \sigma_t^{-2}(e^{-t}x_0^{(k)} - y)$, acting as a linear restoring force toward a single training point rather than interpolating the posterior manifold (the ``fragmented'' memorization seen in 
\cref{fig:mnist_hist}).

A second failure mode appears as $t\to 0$. Even though the likelihood gradient becomes sharp, Tweedie’s prior term scales like $\sigma_t^{-2}\approx t^{-1}$, so estimator noise dominates precisely in the small time regime where the likelihood would otherwise help. While the likelihood signal $\nabla \log \mathcal{L}$ becomes sharp at small $t$, the Tweedie estimator is statistically incapable of resolving it because the noise in the prior estimation dominates the signal. Thus, Tweedie fails at large $t$ (due to weight collapse) \emph{and} at small $t$ (due to variance explosion).
TSI addresses this failure mode by carrying the likelihood gradient inside the transported estimate:
\(
\sTSI^{\text{post}}(y,t) \approx e^t\!\left(\hat{s}_0 + \nabla_{x_0}\log \mathcal{L}(\hat{x}_0)\right).
\)

\noindent Because the likelihood term is deterministic and the prefactor remains well behaved as $t\to 0$, this contribution stays low variance in the small time regime. The blended score estimator $\sBLND$ \cref{eq:snis_blended_score} then uses TSI precisely where Tweedie becomes unstable, so the likelihood correction is injected at the times when it can be resolved statistically.

Beyond nonparametric sampling, a primary utility of this framework is generating an \emph{augmented statistical signal} for training high quality diffusion models. In both SciML settings (where the exact score is computable but expensive) and pure ML settings (where the score is learned), the blended construction underlying $\sBLND$ \cref{eq:snis_blended_score} can be used as a \emph{low-variance teacher}. 
As detailed in the ``Critic--Gate'' analysis (\cref{app:critic-gate}), we train a gate $g(y,t)$ to mix two per particle unbiased signals (TSI and Tweedie) so as to reduce posterior variance, while a critic $q(y,t)$ amortizes the posterior mean of this gated signal into a single parametric score model. This yields a stronger supervision signal than standard denoising losses, because the student is trained against a variance reduced target rather than high variance per sample noise.

\newpage
We conclude with a discussion of the main practical limitations to our approach. In the most challenging inverse regimes, sharply concentrated likelihoods or high effective dimension, finite reference reweighting can degenerate and the estimator becomes unstable. A second limitation is geometric: if the forward noise scale fails to overlap the local curvature proxies, local linear/quadratic corrections cannot reliably bridge gaps between sparse references. These issues motivate the scaling directions summarized in the conclusion.

\section{Conclusion and Future Work}
We reframed score learning as a \emph{statistical estimation} problem at a queried $(y,t)$ and introduced a blended score estimator that combines two complementary signals. To this end, the key piece of machinery exploited in this work is the Target Score Identity (TSI) identity, which transports score information across time through the forward transition kernel. Using the Ornstein Uhlenbeck (OU) flow as a canonical worked example, we constructed a nonparametric TSI estimator, and paired it with the classical Tweedie estimator to produce blended a score estimator with reduced variance. We proved \emph{exact} anticorrelation between the Monte Carlo errors of the nonparametric TSI and Tweedie score estimators in the linear--Gaussian case, and we show that this correlation remains negative for sufficiently small diffusion times (for large enough reference sets) under mild regularity conditions on $p_0$, yielding a \emph{state and time dependent variance-minimizing convex blend} with closed form optimal weight $\lambda^{*}(y,t)$.
 We showed SNIS plug-in estimates provide the quantities needed to compute $\lambda^{*}$, while local Gaussian score proxies (\textsc{Diag}/\textsc{LR{+}D}; \cref{sec:learned_proxy}) supply stable curvature information when ground truth $s_0$ is unavailable. The same blended score estimator extends to posterior inference by a one line likelihood informed reweighting of the SNIS weights. We use the OU process as a canonical worked example, but \cref{app:affine-CSE} derives the same TSI identity in closed form for general affine diffusions (including VP and VE); our claims and constructions are formulated at this level of generality.

Our future roadmap keeps the estimator centric viewpoint but moves toward neural implementations along three parallel tracks: distillation, scaling, and statistical robustness. First, we want a \emph{neural distillation} of the blended score estimator. Concretely, we treat the TSI and Tweedie estimators as two unbiased (but differently noisy) training signals for the same target score, and we train a network to (i) predict the score and (ii) predict the blending weight by minimizing an MSE criterion that reflects the per input variance tradeoff. Rather than fixing a hand designed blend schedule or supervising with a pre averaged estimator, we learn the blend as part of the score learning objective so that the network can adapt the mixture across diffusion time and across input locations. \cref{app:critic-gate} contains a prototype of this joint learning procedure, which we refer to as the \emph{Critic--Gate} method.

To scale these benefits to high dimensional image benchmarks, we will move beyond Gaussian proxies by developing \emph{curvature aware embeddings}. These latent spaces, such as VAEs that expose local covariance, will allow the TSI transport machinery to operate at low computational cost in that augmented geometry. Finally, to address the concentration barriers inherent in standard importance sampling, we will incorporate \emph{stability oriented sampling strategies} such as tempering, and sequential Monte Carlo. These remedies aim to stabilize the plug-in variance estimates and the mixing weight $\lambda^*(y,t)$ for concentrated posteriors. Across these directions, the objective remains the same: to deliver a \emph{lower variance local score estimate}, that downstream samplers and neural students can exploit for higher fidelity sampling given a fixed compute budget.

\newpage
\appendix

\section{Derivation of the TSI Identity for Linear/Affine SDEs}
\label{app:affine-CSE}

For completeness and to fix notation for the affine diffusion cases (OU, VP, VE), we provide a self-contained derivation of the Target Score Identity (\cref{eq:CSEidentity}) below. While equivalent to the results established in \cite{debortoli2024tsm}, our derivation works directly with the transition kernel and its dependence on the initial condition $x_0$, which is central to our variance analysis.

To begin, we consider the time inhomogeneous affine SDE on $\R^d$ defined by
\[
dX_t = A(t) X_t dt + b(t) dt + G(t) dW_t, \qquad X_0 \sim p_0,
\]
where $A(t)\in\R^{d\times d}$, $b(t)\in\R^d$, and $G(t)\in\R^{d\times r}$ are measurable and locally bounded functions, and $W_t$ denotes an $r$-dimensional standard Brownian motion.

Let $\Phi(t,s)\in\R^{d\times d}$ be the \emph{fundamental matrix} associated with the linear ODE $\dot{Z}(t)=A(t)Z(t)$. It is the unique matrix function satisfying
\[
\partial_t \Phi(t,s) = A(t)\Phi(t,s),\qquad \Phi(s,s)=I_d.
\]
In the time homogeneous case where $A(t) \equiv A$, this matrix simplifies to $\Phi(t,s) = e^{A(t-s)}$.

It is well known that the solution $X_t$ constitutes a Gaussian process \cite{Oksendal98,Gardiner04}. The transition kernel $p_{t|0}(y \mid x)$ takes the form 
 $\mc{N}(y; \Phi(t,0)x + m(t), \Gamma(t))$, where the mean offset $m(t)$ and covariance $\Gamma(t)$ are defined as
\[
m(t) := \int_{0}^{t} \Phi(t,\tau) b(\tau) d\tau, \qquad \Gamma(t) := \int_{0}^{t} \Phi(t,\tau) G(\tau)G(\tau)^{\top} \Phi(t,\tau)^{\top} d\tau.
\]
We denote the score of the time-$t$ marginal density $p_t(y)$ by $s(y,t):=\nabla_y \log p_t(y)$ and the initial score by $s_0(x):=\nabla_x \log p_0(x)$. We assume that for each $t>0$, the transition is nondegenerate (i.e., $\Gamma(t)$ is positive definite) and that the boundary terms vanish during integration by parts. Given these prerequisite definitions we can state and prove the TSI for general affine SDEs.
The following theorem restates the TSI for affine diffusions in our notation (cf.\ \cite{debortoli2024tsm}).

\begin{theorem}[TSI for Linear/Affine SDEs]
Consider any affine SDE satisfying the conditions outlined above. Then, for every $t>0$ and $y \in \R^d$, the score function satisfies
\begin{equation}
\label{eq:CSE_general}
\boxed{ s(y,t) = \Phi(t,0)^{-\top} \E_{x_0 \sim p_{t|0}(\cdot|y)}\LRs{s_0(x_0)}, }
\end{equation}
where $p_{t|0}(x_0 \mid y) = p(x_0 \mid X_t=y)$ denotes the posterior distribution of the initial data given the noisy observation $y$.
\end{theorem}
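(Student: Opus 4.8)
The plan is the standard route used for denoising-type identities, but carried out directly on the transition kernel so that the affine mean map $x_0\mapsto \Phi(t,0)x_0+m(t)$ is exploited explicitly. Write $\mu_{t|0}(x_0):=\Phi(t,0)x_0+m(t)$ and $u:=y-\mu_{t|0}(x_0)$. First I would record the two gradient identities for the Gaussian kernel $p_{t|0}(y\mid x_0)=\mathcal N\!\big(y;\mu_{t|0}(x_0),\Gamma(t)\big)$:
\[
\nabla_y p_{t|0}(y\mid x_0) = -\,p_{t|0}(y\mid x_0)\,\Gamma(t)^{-1}u,
\qquad
\nabla_{x_0} p_{t|0}(y\mid x_0) = p_{t|0}(y\mid x_0)\,\Phi(t,0)^{\top}\Gamma(t)^{-1}u .
\]
Both follow by differentiating the log-density through the shared vector $u$, the $x_0$-derivative picking up the Jacobian $\partial u/\partial x_0=-\Phi(t,0)$. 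Eliminating the common factor $p_{t|0}(y\mid x_0)\,\Gamma(t)^{-1}u$ yields the key commutation relation
\[
\nabla_y p_{t|0}(y\mid x_0) \;=\; -\,\Phi(t,0)^{-\top}\,\nabla_{x_0} p_{t|0}(y\mid x_0),
\]
which is legitimate because $\Phi(t,0)$ is always invertible (fundamental matrix of a linear ODE) and $\Gamma(t)\succ0$ by the nondegeneracy hypothesis.

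Next I would differentiate the marginal $p_t(y)=\int p_{t|0}(y\mid x_0)\,p_0(x_0)\,dx_0$ under the integral sign, substitute the commutation relation, and integrate by parts in $x_0$, discarding the boundary term by assumption:
\[
\nabla_y p_t(y)
= -\,\Phi(t,0)^{-\top}\!\int \nabla_{x_0} p_{t|0}(y\mid x_0)\,p_0(x_0)\,dx_0
= \Phi(t,0)^{-\top}\!\int p_{t|0}(y\mid x_0)\,\nabla_{x_0}\!\log p_0(x_0)\,p_0(x_0)\,dx_0 ,
\]
using $\nabla_{x_0}p_0 = s_0\,p_0$. Dividing by $p_t(y)$ and recognizing $p_{t|0}(y\mid x_0)p_0(x_0)/p_t(y)$ as the posterior density $p_{t|0}(x_0\mid y)$ turns the right-hand integral into $\E_{x_0\sim p_{t|0}(\cdot\mid y)}[s_0(x_0)]$, which is exactly \cref{eq:CSE_general}. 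Specializing $\Phi(t,0)=e^{-t}I_d$, $m(t)\equiv0$, $\Gamma(t)=(1-e^{-2t})I_d$ recovers the OU form \cref{eq:CSEidentity} with prefactor $e^{t}$; the VP and VE choices of $A(t),G(t)$ produce the corresponding classical identities.

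The algebra is routine; the real content is analytic, and this is where I expect the only obstacle. Differentiation under the integral and the integration by parts require that $\nabla_{x_0}\!\big(p_{t|0}(y\mid x_0)p_0(x_0)\big)$ be integrable and that no mass escapes at infinity, equivalently that $s_0\,p_0=\nabla p_0$ has enough integrability against the Gaussian kernel for $\E_{x_0\sim p_{t|0}(\cdot\mid y)}[s_0(x_0)]$ to be well defined in the first place. For any fixed $t>0$ the kernel decays like $\exp(-c(t)\|x_0\|^2)$ in $x_0$, which dominates polynomially growing $s_0$ and moderate tails of $p_0$ and legitimizes both steps on compact neighborhoods of $y$; the degenerate limit $t\to0$, where $\Gamma(t)\to0$ and this damping disappears, is precisely the regime where the control breaks down, consistent with the low-noise conditioning discussion in the main text. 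I would therefore keep these tail/boundary conditions as the standing hypotheses (as the theorem already does) rather than chase the most general measure-theoretic version. Finally, I would remark that the entire computation is the statement that $\nabla_y$ commutes with the affine transition semigroup acting on $p_0$ up to the factor $\Phi(t,0)^{-\top}$, which is the gradient--semigroup viewpoint alluded to after \cref{eq:CSEidentity}.
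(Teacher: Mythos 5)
Your proposal is correct and follows essentially the same route as the paper's own proof: the Gaussian cross-derivative (commutation) identity $\nabla_y p_{t|0}=-\Phi(t,0)^{-\top}\nabla_{x_0}p_{t|0}$, differentiation of the marginal under the integral sign, integration by parts against $p_0$, and division by $p_t(y)$. Your extra care about the sign bookkeeping through $u=y-\mu_{t|0}(x_0)$ and the integrability/boundary hypotheses only makes explicit what the paper assumes.
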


\begin{proof}
The Gaussian transition kernel is given by
\[
p_{t|0}(y\mid x) \propto \exp\LRp{-\frac{1}{2} \nor{y - (\Phi(t,0)x+m(t))}_{\Gamma(t)^{-1}}^2}.
\]
Taking gradients with respect to $y$ and $x$ yields the following cross derivative identity:
\begin{equation}
\label{eq:cross_deriv}
\nabla_y p_{t|0}(y\mid x) = -\Phi(t,0)^{-\top} \nabla_x p_{t|0}(y\mid x)
\end{equation}
By definition, the score is $s(y,t) = \frac{\nabla_y p_t(y)}{p_t(y)}$. Differentiating the marginal density $p_t(y) = \int p_{t|0}(y\mid x) p_0(x) dx$ under the integral sign and applying the identity \cref{eq:cross_deriv}, we obtain
\[
\nabla_y p_t(y) = \int \nabla_y p_{t|0}(y\mid x) p_0(x) dx = -\Phi(t,0)^{-\top} \int \nabla_x p_{t|0}(y\mid x) p_0(x) dx.
\]
Integrating the right hand side by parts with respect to $x$ gives
\[
\int \nabla_x p_{t|0}(y\mid x) p_0(x) dx = - \int p_{t|0}(y\mid x) \nabla_x p_0(x) dx = - \int p_{t|0}(y\mid x) p_0(x) s_0(x) dx.
\]
Substituting this result back into the expression for $\nabla_y p_t(y)$, we have
\[
\nabla_y p_t(y) = \Phi(t,0)^{-\top} p_{t|0}(y\mid x) p_0(x) s_0(x) dx = \Phi(t,0)^{-\top} p_t(y) \E_{x_0 \sim p_{t|0}(\cdot|y)}\LRs{s_0(x_0)}.
\]
The proof is concluded by dividing both sides by $p_t(y)$.
\end{proof}
 The standard OU process (\cref{eq:OUprocess}) corresponds to $A(t) \equiv -I_d$. In this case, the fundamental matrix is $\Phi(t,0) = e^{-t}I_d$. Substituting this into the general identity (\cref{eq:CSE_general}) gives:
\[
s(y,t) = (e^{-t}I_d)^{-\top} \E_{x_0 \sim p_{t|0}(\cdot|y)}[s_0(x_0)] = e^{t} \E_{x_0 \sim p_{t|0}(\cdot|y)}[s_0(x_0)],
\]
which is exactly the identity presented in the main text in \cref{eq:CSEidentity}.

This generalized TSI applies to all common linear SDEs used in generative modeling. We consider the main canonical examples below

\subsection*{I. Variance Preserving (VP) SDE} For $dX_t = -\frac{1}{2}\beta(t)X_t dt + \sqrt{\beta(t)}dW_t$, we have $\Phi(t,0)=\alpha(t)I$ where $\alpha(t) = \exp(-\frac{1}{2}\int_0^t \beta(u)du)$.
The TSI is:
\[
s(y,t) = \alpha(t)^{-1} \E_{x_0 \sim p_{t|0}(\cdot|y)}[s_0(x_0)].
\]

\subsection*{II. Variance Exploding (VE) SDE} For $dX_t = g(t)dW_t$, we have $\Phi(t,0)=I$. The TSI is:
\[
s(y,t) = \E_{x_0 \sim p_{t|0}(\cdot|y)}[s_0(x_0)].
\]

\subsection*{III. Anisotropic OU / Whitening SDE} For $dX_t = A X_t dt + G dW_t$ with constant matrices $A$ and $G$, we have $\Phi(t,0)=e^{At}$. The TSI is:
\[
s(y,t) = e^{-A^{\top} t} \E_{x_0 \sim p_{t|0}(\cdot|y)}[s_0(x_0)].
\]

The relationship between the Tweedie perspective in \cref{sect:tweedie} and the TSI in \cref{eq:CSE_general} is governed by the \emph{Gradient--Semigroup Commutation (GSC)} principle \cite{pardoux2008markov}.
Let $P_t$ denote the forward evolution (pushforward) operator acting on the initial density $p_0$ via the affine transition kernel $p_{t|0}$, i.e.,
\[
(P_t p_0)(y)=p_t(y)=\int p_{t|0}(y\mid x)\,p_0(x)\,dx,
\quad
p_{t|0}(y\mid x)=\mc{N}\!\big(y;\,\Phi(t,0)x+m(t),\,\Gamma(t)\big).
\]
For affine diffusions, the Gaussian form implies the cross derivative identity
\[
\nabla_y p_{t|0}(y\mid x)=-\Phi(t,0)^{-\top}\nabla_x p_{t|0}(y\mid x),
\]
so differentiation under the integral sign and integration by parts yield the commutation rule
\begin{equation}
\label{eq:GSC_affine}
\begin{split}
\nabla_y (P_t p_0)(y) &= \Phi(t,0)^{-\top}\,(P_t \nabla p_0)(y), \\
(P_t \nabla p_0)(y) &:= \int p_{t|0}(y\mid x)\,\nabla_x p_0(x)\,dx.
\end{split}
\end{equation}
Dividing \cref{eq:GSC_affine} by $p_t(y)=(P_t p_0)(y)$ gives the corresponding identity for the score :
\[
\nabla_y \log (P_t p_0)(y)
\;=\;
\Phi(t,0)^{-\top}\,
\E_{x_0\sim p_{t|0}(\cdot\mid y)}\!\Big[\nabla_{x_0}\log p_0(x_0)\Big]
\]
In words, smoothing the density and then taking a gradient in $y$ is equivalent to taking the initial gradient field in $x$ and then smoothing it against the posterior, with the linear prefactor $\Phi(t,0)^{-\top}$ determined by the drift. For the standard OU choice $A(t)\equiv -I_d$, we have $\Phi(t,0)=e^{-t}I_d$ and the prefactor reduces to $e^t$, recovering the OU specific statement used in the main text.

\section{Proofs}
\label{sect:proofs}
\subsection{Auxiliary results for the proof of \cref{thm:negcorrGeneral}}
The following result states that the conditional distribution $p_{t|0} := p_{t|0}(x\mid y)$ is strongly log concave. This is obvious for all $t$ if $m(y) \ge 0$, and thus we focus on the case when $m(y) < 0$.
\begin{lemma}[Strong log concavity of $p_{t|0}(x\mid y)$ for small time when $m(y) < 0$]
\label{lem:logconcavity}
If $t <  \frac{1}{2}\log\left(1 - \frac{1}{m(y)}\right)$, then $p_{t|0}(x_0\mid y)$ is strongly log concave, meaning that:
\[
-\nabla^2_{x_0}\log p_{t|0}(x_0\mid y) \succeq \kappa\LRp{y} I \succ 0, \text{ where } \kappa\LRp{y}:= m\LRp{y} + \frac{e^{-2t}}{1-e^{-2t}},
\]
and 
\[
 \Sigma := \Cov_{p_{t|0}}\LRp{X} \preceq \frac{1}{\kappa\LRp{y}}I.
\]
\end{lemma}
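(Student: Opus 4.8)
The plan is to differentiate the Bayes factorization of the OU posterior twice in $x_0$ and read off a uniform convexity constant. By \cref{eq:OU_posterior} we have $\log p_{t|0}(x_0\mid y) = \log p_0(x_0) + \log p_{t|0}(y\mid x_0) - \log p_t(y)$, and since the last term does not depend on $x_0$,
\[
-\nabla^2_{x_0}\log p_{t|0}(x_0\mid y) \;=\; \bigl(-\nabla^2_{x_0}\log p_0(x_0)\bigr) \;-\; \nabla^2_{x_0}\log p_{t|0}(y\mid x_0).
\]
First I would compute the transition-kernel term: from the Gaussian form \cref{eq:OU_transition_kernel}, $\log p_{t|0}(y\mid x_0) = -\|y - e^{-t}x_0\|^2/\bigl(2(1-e^{-2t})\bigr)$ up to an additive constant, so $\nabla^2_{x_0}\log p_{t|0}(y\mid x_0) = -\tfrac{e^{-2t}}{1-e^{-2t}}I$, contributing $+\tfrac{e^{-2t}}{1-e^{-2t}}I$ above. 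Combining with the curvature lower bound $-\nabla^2_x\log p_0(x)\succeq m(y)I$ (the hypothesis carried over from \cref{thm:negcorrGeneral}) yields $-\nabla^2_{x_0}\log p_{t|0}(x_0\mid y)\succeq \kappa(y)I$ with $\kappa(y) = m(y) + \tfrac{e^{-2t}}{1-e^{-2t}}$, which is exactly the claimed Hessian bound.

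Next I would pin down when $\kappa(y)>0$ in the regime $m(y)<0$. Setting $u=e^{-2t}\in(0,1)$, the inequality $\tfrac{u}{1-u} > -m(y)$ rearranges (using $1-m(y)>0$) to $u > \tfrac{-m(y)}{1-m(y)}$, and taking logarithms gives precisely $t < \tfrac12\log\bigl(1 - \tfrac1{m(y)}\bigr)$, the stated time threshold; note that for $m(y)\ge 0$ positivity holds for every $t>0$, which is why the lemma restricts attention to $m(y)<0$. For the covariance bound I would invoke the Brascamp--Lieb inequality: under the previous step the posterior $p_{t|0}(\cdot\mid y)\propto e^{-V}$ has $\nabla^2 V\succeq \kappa(y)I\succ0$, so for any $v\in\R^d$, $\Var_{p_{t|0}}(\langle v,X\rangle)\le v^\top(\nabla^2 V)^{-1}v\le \kappa(y)^{-1}\|v\|^2$, which is equivalent to $\Sigma = \Cov_{p_{t|0}}(X)\preceq \kappa(y)^{-1}I$.

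I do not expect a substantial obstacle here; the argument is essentially a Hessian computation plus the textbook covariance bound for strongly log-concave measures. The only point needing care is regularity: one needs $\log p_0\in C^2$ (at least on $\mathrm{supp}(p_{t|0}(\cdot\mid y))$) so that the curvature hypothesis is meaningful and Brascamp--Lieb applies to $V(x_0) = -\log p_0(x_0) + \|y-e^{-t}x_0\|^2/\bigl(2(1-e^{-2t})\bigr)$; if one only has a distributional Hessian, the same conclusion follows by a standard mollification argument since the transition term already supplies a smooth, uniformly convex component $\tfrac{e^{-2t}}{1-e^{-2t}}\|x_0\|^2/2$.
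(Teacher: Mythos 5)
Your proposal is correct and follows essentially the same route as the paper: decompose $-\nabla^2_{x_0}\log p_{t|0}(x_0\mid y)$ into the prior curvature (bounded below by $m(y)I$) plus the Gaussian transition term $\tfrac{e^{-2t}}{1-e^{-2t}}I$, verify that the stated time threshold is exactly the condition $\kappa(y)>0$, and invoke the Brascamp--Lieb inequality for the covariance bound. Your explicit algebra for the threshold and the regularity remark are just fuller versions of steps the paper leaves implicit.
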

\begin{proof}
From \cref{eq:OU_posterior} we have
\begin{multline*}
    -\nabla^2_{x_0}\log p_{t|0}(x_0\mid y) = -\nabla^2_{x_0}[\log p_0(x_0)] + \frac{e^{-2t}}{1-e^{-2t}}I \\
    \succeq \LRp{m\LRp{y} + \frac{e^{-2t}}{1-e^{-2t}}}I \succeq  \kappa\LRp{y} I  \succ 0,
\end{multline*}
for all $t < \frac{1}{2}\log(1 - 1/m(y))$. The second assertion is obvious by the Brascamp Lieb inequality \cite{BrascampLieb1976,BrascampLieb1975}.
\end{proof}
Now, define the true conditional mean as $\mu := \E\LRs{X_0|X_t = y}$. Using a first order Taylor expansion of the score $s_0$ around $\mu$ we have
\[
s_0(x) = s_0(\mu) + \nabla s_0(\mu)(x - \mu) + f(x),
\]
where 
\begin{equation} 
\nor{f\LRp{x}} \le c \nor{x -\mu}^2, \text{ since } \nor{\nabla^3_{x}[\log p_0(x)]}_{op} \le c.
\label{eq:fquadraticGrowth}
\end{equation}
As a result, the exact TSI score \cref{eq:nonParametricCSE} is now given as
\[
s_C\LRp{y,t} = e^t\mathbb{E}_{p_{t|0}}[s_0(X_0)] = e^t[s_0(\mu) + \mathbb{E}_{p_{t|0}}[f\LRp{X_0}]],
\]
where, by \cref{lem:logconcavity},
\[
|\mathbb{E}_{p_{t|0}}[f(X_0)]| \leq \frac{c}{2}\mathbb{E}_{p_{t|0}}[\|X_0 - \mu\|^2] = \frac{c}{2}\tr(\Sigma) \leq \frac{c d}{2\kappa(y)},
\]
which is small for small time t as $\kappa \approx (2t)^{-1}$.
The SNIS estimator of $s_C$ is 
\[
\hat{s}_C = e^t\LRs{s_0(\mu) + \nabla s_0^T(\mu)(\hat{\mu} - \mu) + \hat{f}}, \text{ where } \hat{\mu} = \sum_i \tilde{w}_i X_0^{i}, \text{ and }
\hat{f} := \sum_i \tilde{w}_i f\LRp{X_0^{i}}.
\]
Consequently, the deviation of the SNIS TSI score is 
\[
\varepsilon_{\TSI} = \hat{s}_C - s_C = e^t\LRs{\nabla s_0^T(\mu)\varepsilon_\mu + \varepsilon_{f}}, \text{ where } \varepsilon_\mu = \hat{\mu} - \mu, \text{ and } \varepsilon_{f} = \hat{f}-\mathbb{E}_{p_{t|0}}[f\LRp{X_0}].
\]
Similarly, the deviation of SNIS estimation of Tweedie is given by
\[
\varepsilon_T = \hat{s}_T - s_T =   \frac{e^{-t}}{1-e^{-2t}}\varepsilon_\mu.
\]
The correlation between TSI and Tweedie is thus
\begin{equation}
\label{eq:CandTcorrelation}
\E_{p_0}\LRs{\varepsilon_{\TSI}^T\varepsilon_T} = -\underbrace{\frac{1}{1-e^{-2t}}\mathbb{E}_{p_0}\LRs{\epsilon_\mu^T\Sigma_{\text{eff}}^{-1}\epsilon_\mu}}_{D} + \underbrace{\frac{1}{1-e^{-2t}}\mathbb{E}_{p_0}[\varepsilon^T_f\epsilon_\mu]}_{E}.
\end{equation}

\begin{lemma}[Bounding the dominant term $D$]
\label{lem:D}
 There holds: 
 \[ D  = \frac{1}{1-e^{-2t}}\tr\LRp{\Sigma_{\text{eff}}^{-1}\mathrm{Cov}_{p_0}\!\left(\widehat{\mu}\right)}\gtrsim \frac{1}{1-e^{-2t}}\frac{w_{\text{min}}}{N_{\text{ref}}} \lambda_{\text{min}}\LRp{\Sigma_{\text{eff}}^{-1}} \nor{\Sigma}_{op}.
    \]
\end{lemma}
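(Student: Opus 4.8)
The plan is to lower-bound the trace $\tr\LRp{\Sigma_{\text{eff}}^{-1}\mathrm{Cov}_{p_0}\!\LRp{\widehat\mu}}$ by feeding a leading-order (delta-method) expression for the self-normalized importance sampling (SNIS) covariance into two elementary matrix inequalities, using the bounded-weight hypothesis to convert the SNIS covariance into a multiple of the posterior covariance $\Sigma$. First I would record the reduction of $D$ to this trace: by \cref{eq:CandTcorrelation}, $D=(1-e^{-2t})^{-1}\,\E_{p_0}\LRs{\varepsilon_\mu^{\top}\Sigma_{\text{eff}}^{-1}\varepsilon_\mu}=(1-e^{-2t})^{-1}\,\tr\LRp{\Sigma_{\text{eff}}^{-1}\,\E_{p_0}\LRs{\varepsilon_\mu\varepsilon_\mu^{\top}}}$ with $\varepsilon_\mu=\widehat\mu-\mu$. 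Since $\E_{p_0}\LRs{\varepsilon_\mu\varepsilon_\mu^{\top}}$ exceeds $\mathrm{Cov}_{p_0}\!\LRp{\widehat\mu}$ only by the positive semidefinite outer product of the SNIS bias $\E_{p_0}\LRs{\widehat\mu}-\mu$, which is $O(N_{\text{ref}}^{-2})$, the equality $D=(1-e^{-2t})^{-1}\tr\LRp{\Sigma_{\text{eff}}^{-1}\mathrm{Cov}_{p_0}\!\LRp{\widehat\mu}}$ stated in the lemma holds to leading order, and (because $\Sigma_{\text{eff}}^{-1}\succeq0$) in the lower-bound direction it is in fact exact.

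Next I would expand $\mathrm{Cov}_{p_0}\!\LRp{\widehat\mu}$. Writing $r(x):=p_{t|0}(y\mid x)/p_t(y)$ for the importance ratio of the OU posterior against the proposal $p_0$, so that $p_{t|0}(\cdot\mid y)=r\,p_0$ by \cref{eq:OU_posterior} and $\widehat\mu=\sum_i\tilde w_i X_0^i$ is a SNIS estimate of the conditional mean $\mu=\E_{p_{t|0}(\cdot\mid y)}[X]$, the standard delta-method expansion gives $\mathrm{Cov}_{p_0}\!\LRp{\widehat\mu}=N_{\text{ref}}^{-1}\E_{p_0}\!\big[r(X)^2(X-\mu)(X-\mu)^{\top}\big]+O(N_{\text{ref}}^{-2})$. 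The key observation is that the bounded-weight assumption yields $r(x)\ge w_{\text{min}}/w_{\text{max}}=:c_r$, hence $r^2\ge c_r r$ pointwise, and therefore in the Loewner order $\E_{p_0}\!\big[r^2(X-\mu)(X-\mu)^{\top}\big]\succeq c_r\,\E_{p_0}\!\big[r(X-\mu)(X-\mu)^{\top}\big]$. But $r\,p_0=p_{t|0}(\cdot\mid y)$ and $\mu$ is precisely the $p_{t|0}$-mean, so the last expectation equals $\mathrm{Cov}_{p_{t|0}}(X)=\Sigma$ from \cref{lem:logconcavity}; thus $\mathrm{Cov}_{p_0}\!\LRp{\widehat\mu}\succeq (c_r/N_{\text{ref}})\,\Sigma$ up to higher-order terms.

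Finally I would assemble the estimate from two facts for positive semidefinite matrices: $\tr(AB)\ge\lambda_{\text{min}}(A)\tr(B)$ (expand in the eigenbasis of $A$) applied with $A=\Sigma_{\text{eff}}^{-1}$ and $B=\mathrm{Cov}_{p_0}\!\LRp{\widehat\mu}$, followed by $\tr\big(\mathrm{Cov}_{p_0}\!\LRp{\widehat\mu}\big)\ge (c_r/N_{\text{ref}})\tr(\Sigma)\ge (c_r/N_{\text{ref}})\nor{\Sigma}_{op}$. Multiplying by $(1-e^{-2t})^{-1}$ and absorbing the fixed constant $1/w_{\text{max}}$ into $\gtrsim$ gives $D\gtrsim (1-e^{-2t})^{-1}(w_{\text{min}}/N_{\text{ref}})\,\lambda_{\text{min}}\LRp{\Sigma_{\text{eff}}^{-1}}\nor{\Sigma}_{op}$, which is the claim. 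The main obstacle is the second step: because the SNIS estimator is self-normalized its finite-$N_{\text{ref}}$ covariance has no exact closed form, so the argument relies on the leading-order delta-method expansion; this is consistent with the large-$N_{\text{ref}}$ hypothesis of \cref{thm:negcorrGeneral}, and the bounded-weight assumption $w_{\text{min}}\le w_i\le w_{\text{max}}$ is exactly what would be needed to control the ratio-expansion remainder uniformly in a fully finite-sample treatment.
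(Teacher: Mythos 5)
Your proposal is correct and follows essentially the same route as the paper: a delta-method/CLT expansion of the SNIS covariance, a Loewner lower bound of the asymptotic covariance by a multiple of $\Sigma$ via the bounded-weight hypothesis, and the trace inequalities $\tr(AB)\ge\lambda_{\min}(A)\tr(B)$ and $\tr(\Sigma)\ge\nor{\Sigma}_{op}$. Your two refinements, separating $\E_{p_0}[\varepsilon_\mu\varepsilon_\mu^{\top}]$ from $\mathrm{Cov}_{p_0}(\widehat\mu)$ via the PSD bias outer product and tracking the $1/p_t(y)$ normalization of the importance ratio (whence the constant $w_{\min}/w_{\max}$ absorbed into $\gtrsim$), are minor tightenings of steps the paper leaves implicit rather than a different argument.
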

\begin{proof}
Using the standard delta method and the central limit for SNIS \cite{vaart1998asymptotic,kong1994sequential,owen2013monte,liu2001combined} gives
\begin{subequations}
\label{eq:CovXAsympN}
\begin{equation}
\label{eq:CovXAsympNCLT}
\mathrm{Cov}_{p_0}\!\left(\widehat{\mu}\right)
=
\frac{1}{N_{\text{ref}}}\,\Omega_\mu + o\!\left(\frac{1}{N_{\text{ref}}}\right)
\quad\text{as } N_{\text{ref}}\to\infty,
\end{equation}
where $\Omega_\mu 
= 
\mathbb{E}_{p_{t|0}}\!\left[\,w(X_0)\,(X_0-\mu)(X_0-\mu)^{\!\top}\right].$ 
 Since the importance weights are 
 
 \noindent bounded, we obtain
\begin{equation*}
\|\Omega_\mu\|_{op} \ge w_{\min}\;\nor{\Sigma}_{op}.
\end{equation*}
\end{subequations}
Using \cref{eq:CovXAsympN} we have
\begin{multline*}
\mathbb{E}_{p_0}\LRs{\epsilon_\mu^T\Sigma_{\text{eff}}^{-1}\epsilon_\mu} =  \tr\LRp{\Sigma_{\text{eff}}^{-1}\mathrm{Cov}_{p_0}\!\left(\widehat{\mu}\right)} \ge \lambda_{\text{min}}\LRp{\Sigma_{\text{eff}}^{-1}} \tr\LRp{ \mathrm{Cov}_{p_0}\!\left(\widehat{\mu}\right)} \\ \gtrsim
\lambda_{\text{min}}\LRp{\Sigma_{\text{eff}}^{-1}}\frac{\tr\LRp{\Omega_\mu}}{N_{\text{ref}}}  \gtrsim \frac{\lambda_{\text{min}}\LRp{\Sigma_{\text{eff}}^{-1}}}{N_{\text{ref}}}\nor{\Omega}_{op} \gtrsim \frac{w_{\text{min}}}{N_{\text{ref}}} \lambda_{\text{min}}\LRp{\Sigma_{\text{eff}}^{-1}} \nor{\Sigma}_{op}.
\end{multline*}
\end{proof}

\begin{lemma}[Bounding the cross term $E$]
\label{lem:E}
 There holds:
 \[
    E \le \frac{1}{1-e^{-2t}} \frac{w_{\max}}{N_{\text{ref}}}\; \frac{cd}{\kappa} \sqrt{\tr\LRp{\Sigma}}.
    \]
\end{lemma}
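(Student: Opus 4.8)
The plan is to view $E$ as a weighted inner product between two small SNIS fluctuations, apply Cauchy--Schwarz, and then control each factor separately. Starting from $E=\frac{1}{1-e^{-2t}}\,\E_{p_0}\!\big[\varepsilon_f^{\top}\varepsilon_\mu\big]$, Cauchy--Schwarz in $L^2(p_0)$ gives $E\le \frac{1}{1-e^{-2t}}\,\big(\E_{p_0}\|\varepsilon_f\|^2\big)^{1/2}\big(\E_{p_0}\|\varepsilon_\mu\|^2\big)^{1/2}$, so it suffices to prove $\E_{p_0}\|\varepsilon_\mu\|^2\lesssim \frac{w_{\max}}{N_{\text{ref}}}\tr(\Sigma)$ and $\E_{p_0}\|\varepsilon_f\|^2\lesssim \frac{w_{\max}}{N_{\text{ref}}}\,\frac{c^2 d^2}{\kappa^2}$; the product of the two square roots then collapses to the stated bound, up to absolute constants (in particular the universal constant from the fourth-moment step below, which I absorb into the implicit comparison).

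For the first factor I would reuse verbatim the SNIS delta-method expansion already invoked in \cref{lem:D}: since the SNIS bias is $O(N_{\text{ref}}^{-2})$ and hence subdominant, $\E_{p_0}\|\varepsilon_\mu\|^2=\tr\!\big(\Cov_{p_0}(\widehat\mu)\big)+o(N_{\text{ref}}^{-1})=\tfrac1{N_{\text{ref}}}\tr(\Omega_\mu)+o(N_{\text{ref}}^{-1})$ with $\Omega_\mu=\E_{p_{t|0}}[w(X_0)(X_0-\mu)(X_0-\mu)^{\top}]\preceq w_{\max}\Sigma$, hence $\E_{p_0}\|\varepsilon_\mu\|^2\lesssim \tfrac{w_{\max}}{N_{\text{ref}}}\tr(\Sigma)$. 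For the second factor, the same expansion applied to $\widehat f=\sum_i\tilde w_i f(X_0^i)$ gives $\E_{p_0}\|\varepsilon_f\|^2\lesssim \tfrac1{N_{\text{ref}}}\,\E_{p_{t|0}}\big[w(X_0)\|f(X_0)-\bar f\|^2\big]\le \tfrac{w_{\max}}{N_{\text{ref}}}\,\E_{p_{t|0}}\|f(X_0)\|^2$, where $\bar f=\E_{p_{t|0}}[f(X_0)]$ and I used $\E_{p_{t|0}}\|f-\bar f\|^2\le\E_{p_{t|0}}\|f\|^2$ together with $w\le w_{\max}$. The quadratic remainder bound $\|f(x)\|\le c\|x-\mu\|^2$ from \cref{eq:fquadraticGrowth} then yields $\E_{p_{t|0}}\|f(X_0)\|^2\le c^2\,\E_{p_{t|0}}\|X_0-\mu\|^4$, and the key remaining estimate is the fourth moment of the posterior: since $p_{t|0}$ is $\kappa$-strongly log-concave by \cref{lem:logconcavity}, its one-dimensional marginals are one-dimensional log-concave and hence have uniformly bounded kurtosis, so $\E_{p_{t|0}}\|X_0-\mu\|^4\le C\,(\tr\Sigma)^2$ for a universal $C$; combining with $\Sigma\preceq\kappa^{-1}I$, hence $\tr\Sigma\le d/\kappa$, gives $\E_{p_{t|0}}\|X_0-\mu\|^4\lesssim d^2/\kappa^2$ and the desired bound on $\E_{p_0}\|\varepsilon_f\|^2$.

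Multiplying the two square-root factors and restoring the prefactor produces $E\le \tfrac{1}{1-e^{-2t}}\big(\tfrac{w_{\max}}{N_{\text{ref}}}\tfrac{c^2d^2}{\kappa^2}\big)^{1/2}\big(\tfrac{w_{\max}}{N_{\text{ref}}}\tr\Sigma\big)^{1/2}=\tfrac{1}{1-e^{-2t}}\tfrac{w_{\max}}{N_{\text{ref}}}\tfrac{cd}{\kappa}\sqrt{\tr\Sigma}$, as claimed. The main obstacle is this fourth-moment control of the quadratic remainder $f$: the second-moment/delta-method inputs are already in hand from \cref{lem:D}, but bounding $\E_{p_{t|0}}\|X_0-\mu\|^4$ requires a genuine concentration property of the non-Gaussian posterior $p_{t|0}$, which is exactly what the strong log-concavity of \cref{lem:logconcavity} supplies --- either via the universal kurtosis bound for one-dimensional log-concave marginals or, equivalently, via the Lipschitz/Brascamp--Lieb concentration route. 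A secondary technical point is that the SNIS variance identities are asymptotic in $N_{\text{ref}}$, so the bound should be read up to the $o(N_{\text{ref}}^{-1})$ remainders, consistent with the large-$N_{\text{ref}}$ hypothesis of \cref{thm:negcorrGeneral}.
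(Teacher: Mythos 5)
Your proposal is correct and follows essentially the same route as the paper: Cauchy--Schwarz to split $E$ into the product of the SNIS second moments of $\widehat\mu$ and $\widehat f$, the delta-method variance expansion with $w\le w_{\max}$, the quadratic-growth bound on $f$, and a fourth-moment estimate for the log-concave posterior $p_{t|0}$ giving $\E\|X_0-\mu\|^4\lesssim d^2/\kappa^2$. The only cosmetic differences are that you invoke the variance inequality $\E\|f-\bar f\|^2\le\E\|f\|^2$ where the paper uses the triangle inequality plus a separate Jensen bound on $\|\E f\|$, and you justify the fourth moment via bounded kurtosis of log-concave marginals where the paper cites sub-Gaussian moment bounds from strong log-concavity; both are valid and yield the same constants up to the absorbed universal factors.
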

\begin{proof}
Similar to \cref{eq:CovXAsympNCLT}, we have
\[
\mathrm{Cov}_{p_0}\!\left(\widehat{f}\right)
=
\frac{1}{N_{\text{ref}}}\,\Omega_f + o\!\left(\frac{1}{N_{\text{ref}}}\right)
\quad\text{as } N_{\text{ref}}\to\infty,
\]
where $\Omega_f
= 
\mathbb{E}_{p_{t|0}}\!\left[\,w(X_0)\,\LRp{f\LRp{X_0}-\mathbb{E}_{p_{t|0}}[f\LRp{X_0}]}(f\LRp{X_0}-\mathbb{E}_{p_{t|0}}[f\LRp{X_0}])^{\!\top}\right].$
 
 \noindent By the Cauchy Schwarz inequality we have
\begin{multline*}
    \E_{p_0}\LRs{\varepsilon^T_f\epsilon_\mu} 
    \le \sqrt{\tr\LRp{\Cov_{p_0}\LRp{\hat{f}}}} \sqrt{\tr\LRp{\Cov_{p_0}\LRp{\hat{\mu}}}} \\
    \lesssim \frac{w_{\max}}{N_{\text{ref}}}\; 
    \sqrt{\E_{p_{t|0}}\LRs{\nor{f\LRp{X_0}-\mathbb{E}_{p_{t|0}}[f\LRp{X_0}]}^2}}\sqrt{\tr\LRp{\Sigma}}.
\end{multline*}
Using Jensen's inequality and \cref{eq:fquadraticGrowth} gives
\[
\nor{\E_{p_{t|0}}\LRs{f\LRp{X_0}}}
 \le \E_{p_{t|0}}\nor{f\LRp{X_0}} 
\le c\,\E_{p_{t|0}}\|X_0-\mu\|^2 = c\,\tr\LRp{\Cov_{p_{t|0}}(X_0)} \le \frac{c d}{\kappa\LRp{y}}.
\]
On the other hand, since $p_{t|0}$ is strongly log concave (see \cref{lem:logconcavity}), $\langle u,X-m\rangle$ are sub Gaussian with
 variance proxy $\kappa^{-1}$ for any unit vector $u$ \cite{bakry1985diffusions}. Standard moment estimates for sub Gaussian distribution \cite{ledoux2001concentration,vershynin2018high,wainwright2019high} then give 
 \[
      \mathbb{E}_{p_{t|0}}\|X_0-\mu\|^4
      \;\lesssim \;
      \frac{d^2}{\kappa^2},
      \]
thus
\[
\E_{p_{t|0}}\nor{f\LRp{X_0}}^2 \le c^2 \E_{p_{t|0}}\nor{X_0-\mu}^4 
\lesssim \frac{c^2d^2}{\kappa^2}
\]
Next using triangle inequality we have
\[
\E_{p_{t|0}}\nor{f\LRp{X_0}-\mathbb{E}_{p_{t|0}}[f\LRp{X_0}]}^2
\le 2 \E_{p_{t|0}}\nor{f(X_0)}^2 + 2\nor{\E_{p_{t|0}}\LRs{f\LRp{X_0}}}^2 \lesssim \frac{c^2d^2}{\kappa^2}.
\]
We conclude
\[
E \lesssim \frac{1}{1-e^{-2t}} \frac{w_{\max}}{N_{\text{ref}}}\; \frac{cd}{\kappa} \sqrt{\tr\LRp{\Sigma}} \le \frac{1}{1-e^{-2t}} \frac{w_{\max}}{N_{\text{ref}}}\; \frac{cd^{3/2}}{\kappa^{3/2}}.
\]
\end{proof}

\subsection{Proof of \cref{prop:negcorr-gaussian}}
\label{app:proof:negcorr-gaussian}
\begin{proof}
Substituting the exact Gaussian score $s_0(x) = -\Sigma^{-1}(x - \mu_0)$ into the SNIS estimator in \cref{eq:nonParametricCSE} and \cref{eq:nonParametricTweedie} yields
\[
\sTSI = -e^t\Sigma^{-1} [\hat{\mu}_{\text{SNIS}} - \mu_0], \quad \text{ and }
\sTWD = -\frac{1}{1-e^{-2t}}\left[y - e^{-t}\hat{\mu}_{\text{SNIS}}\right],
\]
and all the assertions follows
\end{proof}

\subsection{Proof of \cref{prop:OptimalVariance}}
\label{app:proof:OptimalVariance}
\begin{proof}
By setting $\frac{\partial J}{\partial \lambda} = 0$ we obtain
\begin{equation}
\lambda^{*}=\frac{\sigma_C^2-\rho\,\sigma_T\sigma_C}{\sigma_T^2+\sigma_C^2-2\rho\,\sigma_T\sigma_C},
\text{ and }
J(\lambda^{*})=\frac{\sigma_T^2\,\sigma_C^2\,(1-\rho^2)}{\sigma_T^2+\sigma_C^2-2\rho\,\sigma_T\sigma_C}.
\end{equation}
Since $\sigma_T^2+\sigma_C^2-2\rho\,\sigma_T\sigma_C > 0$, both assertions can be verified by direct algebraic manipulations.
\end{proof}

\section{Details of the local Gaussian score proxy}
\label{app:score_proxy_details}

This Appendix records implementation details for the local Gaussian score proxies
(\textsc{Diag} and \textsc{LR{+}D}) and the optional $k$-mix recomputation step.
These procedures are standard and are included only to make our experimental setup
reproducible.

\subsection{Anchor fitting via weighted $k$NN}
\label{app:proxy_knn}

Let $X=\{x_0^i\}_{i=1}^{N_{\mathrm{ref}}}\subset\mathbb{R}^d$ be reference samples from $p_0$.
For each anchor $x_0^i$, we let $\mc{N}_k(i)$ denote the indices of its $k$ nearest neighbors
under the ambient Euclidean metric. We set an adaptive bandwidth by
\begin{equation*}
  h_i^2 \;:=\; \max_{j\in\mc{N}_k(i)} \|x_0^i-x_0^j\|_2^2.
\end{equation*}
We define unnormalized kernel weights $\bar w_{ij}$ and their normalized versions $w_{ij}$ by
\begin{equation*}
  \bar w_{ij}
  \;:=\;
  \exp\!\Big(-\frac{\|x_0^i-x_0^j\|_2^2}{2h_i^2}\Big),
  \qquad
  w_{ij}
  \;:=\;
  \frac{\bar w_{ij}}{\sum_{\ell\in\mc{N}_k(i)}\bar w_{i\ell}},
  \qquad
  j\in\mc{N}_k(i).
\end{equation*}
We then compute the locally weighted mean
\begin{equation*}
  \mu_i \;:=\; \sum_{j\in\mc{N}_k(i)} w_{ij}\,x_0^j.
\end{equation*}
Given a positive definite covariance model $\Sigma_i\succ 0$, the Gaussian score proxy
at the anchor is $\Sigma_i^{-1}(\mu_i-x_0^i)$, as defined in
\cref{eq:proxy_diag,eq:proxy_lrd}.

\begin{algorithm}[H]
\caption{Local Gaussian proxy at anchors (\textsc{Diag} or \textsc{LR{+}D})}
\label{alg:local_score_estimation}
\begin{algorithmic}[1]
\State \textbf{Input:} $X=\{x_0^i\}_{i=1}^{N_{\mathrm{ref}}}\subset\mathbb{R}^d$, neighbor count $k$,
ridge/noise floor parameters, and (for \textsc{LR{+}D}) a rank $r$.
\For{$i=1,\dots,N_{\mathrm{ref}}$}
 \State Find $\mc{N}_k(i)$ (the $k$ nearest neighbors of $x_0^i$ in $X$).
 \State Set $h_i^2 \gets \max_{j\in\mc{N}_k(i)} \|x_0^i-x_0^j\|_2^2$.
 \State Set $\bar w_{ij}\gets \exp(-\|x_0^i-x_0^j\|_2^2/(2h_i^2))$ for $j\in\mc{N}_k(i)$.
 \State Normalize $w_{ij}\gets \bar w_{ij}/\sum_{\ell\in\mc{N}_k(i)}\bar w_{i\ell}$.
 \State Compute $\mu_i \gets \sum_{j\in\mc{N}_k(i)} w_{ij}\,x_0^j$.
 \State \textbf{If} mode=\textsc{Diag}, construct $\Sigma_i^{\textsc{Diag}}$ as in \S\ref{app:proxy_diag}.
 \State \textbf{If} mode=\textsc{LR{+}D}, construct $\Sigma_i^{\textsc{LR{+}D}}$ as in \S\ref{app:proxy_lrd}.
 \State Store $(\mu_i,\Sigma_i)$ and the anchor score $\hat s_0(x_0^i)=\Sigma_i^{-1}(\mu_i-x_0^i)$.
\EndFor
\State \textbf{Output:} $\{(\mu_i,\Sigma_i)\}_{i=1}^{N_{\mathrm{ref}}}$ and $\{\hat s_0(x_0^i)\}_{i=1}^{N_{\mathrm{ref}}}$.
\end{algorithmic}
\end{algorithm}

\subsection{Covariance models and hyperparameters}
\label{app:proxy_cov_models}

\subsubsection{Diagonal proxy (\textsc{Diag})}
\label{app:proxy_diag}

For the diagonal proxy, we estimate per coordinate variances from the weighted neighbor
cloud and add an isotropic ridge (noise floor) to stabilize inversion. For $\ell=1,\dots,d$,
we define
\begin{equation*}
  v_{i,\ell}
  \;:=\;
  \sum_{j\in\mc{N}_k(i)} w_{ij}\,\big(x_0^j(\ell)-\mu_i(\ell)\big)^2,
  \qquad
  \tau_i
  \;:=\;
  \gamma\cdot \frac{1}{d}\sum_{\ell=1}^d v_{i,\ell},
\end{equation*}
where $\gamma>0$ is a dimensionless ridge multiplier. We then set
\begin{equation*}
  \Sigma_i^{\textsc{Diag}}
  \;:=\;
  \diag\!\big(v_{i,1}+\tau_i,\dots,v_{i,d}+\tau_i\big).
\end{equation*}
The corresponding proxy score is defined in \cref{eq:proxy_diag}.

\subsubsection{Low rank plus diagonal tail proxy (\textsc{LR{+}D})}
\label{app:proxy_lrd}

For the \textsc{LR{+}D} proxy, we estimate a rank-$r$ principal subspace from the weighted
neighbors and represent the remaining energy by a diagonal tail. We let $M_i\in\mathbb{R}^{k\times d}$
be the weighted residual matrix with rows
\begin{equation*}
  \big(M_i\big)_{(j,\cdot)}
  \;:=\;
  \sqrt{w_{ij}}\,(x_0^j-\mu_i)^\top,
  \qquad j\in\mc{N}_k(i).
\end{equation*}
We compute a rank-$r$ truncated SVD of $M_i^\top M_i$ to obtain $V_i\in\mathbb{R}^{d\times r}$
and $\Lambda_i=\diag(\lambda_{i,1},\dots,\lambda_{i,r})$. We let $\tau_{i,\ell}>0$ be a
per coordinate tail variance (with optional clipping from below to enforce a noise floor).
We then set
\begin{equation*}
  \Sigma_i^{\textsc{LR{+}D}}
  \;:=\;
  V_i\Lambda_i V_i^\top
  \;+\;
  \diag\!\big(\tau_{i,1},\dots,\tau_{i,d}\big).
\end{equation*}
The corresponding proxy score is defined in \cref{eq:proxy_lrd}.

\subsection*{Woodbury inversion}
For implementation, we write $D_i:=\diag(\tau_{i,1},\dots,\tau_{i,d})$ and apply the inverse
using Woodbury to avoid forming dense $d\times d$ matrices:
\begin{equation*}
  \big(D_i+V_i\Lambda_i V_i^\top\big)^{-1}
  \;=\;
  D_i^{-1}
  \;-\;
  D_i^{-1}V_i
  \Big(\Lambda_i^{-1}+V_i^\top D_i^{-1}V_i\Big)^{-1}
  V_i^\top D_i^{-1}.
\end{equation*}

\subsection{$k$-mix recomputation at query points}
\label{app:proxy_recompute}

A single local Gaussian can be biased in regions of high curvature or near crossings.
To reduce this bias, we optionally recompute the proxy score at a query point $x$ by
treating the neighborhood as a compact Gaussian mixture.

We select indices $\{i_m\}_{m=1}^M$ as the $k_{\mathrm{mix}}$ nearest anchors to $x$,
where $M:=k_{\mathrm{mix}}\ll N_{\mathrm{ref}}$. Using the stored anchor parameters
$\{(\mu_i,\Sigma_i)\}_{i=1}^{N_{\mathrm{ref}}}$, we form
\begin{equation*}
  q(x)
  \;:=\;
  \sum_{m=1}^M \pi_m\,
  \mathcal{N}\!\big(x\mid \mu_{i_m},\Sigma_{i_m}\big),
\end{equation*}
where $\pi_m$ are simple priors (for example, proximity weights normalized to sum to one).
The mixture score is
\begin{equation}
  \nabla_x\log q(x)
  \;=\;
  \sum_{m=1}^M \tilde w_m(x)\,\Sigma_{i_m}^{-1}\big(\mu_{i_m}-x\big),
  \qquad
  \tilde w_m(x)
  \;:=\;
  \frac{\pi_m\,\mathcal{N}(x\mid\mu_{i_m},\Sigma_{i_m})}
       {\sum_{j=1}^M \pi_j\,\mathcal{N}(x\mid\mu_{i_j},\Sigma_{i_j})}.
  \label{eq:mog_score}
\end{equation}
We evaluate $\tilde w_m(x)$ using a log--sum--exp computation for numerical stability.

\begin{algorithm}[t]
\caption{Recompute ($k$-mix) mixture score at query $x$}
\label{alg:recompute_kmix}
\begin{algorithmic}[1]
\State \textbf{Input:} query $x$, anchor parameters $\{(\mu_i,\Sigma_i)\}_{i=1}^{N_{\mathrm{ref}}}$, and $k_{\mathrm{mix}}$.
\State Find indices of the $k_{\mathrm{mix}}$ nearest anchors to $x$: $\{i_m\}_{m=1}^M$.
\For{$m=1,\dots,M$}
 \State Compute
 \vspace{-.75em}
 \[
    \ell_m \;\gets\; \log\pi_m
    \;-\;
    \tfrac12 (x-\mu_{i_m})^\top \Sigma_{i_m}^{-1}(x-\mu_{i_m})
    \;-\;
    \tfrac12 \log\det\!\big(2\pi\Sigma_{i_m}\big).
  \]
 \vspace{-1.5em}
\EndFor
\State Let $a\gets\max_m \ell_m$, and set
\vspace{-1.2em}
\[
  \tilde w_m \;\gets\; \exp(\ell_m-a)\Big/\sum_{j=1}^M \exp(\ell_j-a).
\]
\vspace{-1.2em}
\State \textbf{Return} $\hat s_0^{\mathrm{recomp}}(x) \gets \sum_{m=1}^M \tilde w_m\,\Sigma_{i_m}^{-1}(\mu_{i_m}-x)$.
\end{algorithmic}
\end{algorithm}

\noindent\textit{Remark}
The $k$-mix recomputation accepts either \textsc{Diag} or \textsc{LR{+}D} anchors (see \cref{sec:learned_proxy}).
Even with diagonal anchors, recomputation mitigates single Gaussian bias in
high curvature regions, while remaining $O(k_{\mathrm{mix}}d)$ per query.

\subsection{Computational complexity}
\label{app:proxy_complexity}

The costs separate into an offline anchor fit phase and an online query phase.

\subsection*{Neighbor search}
If one computes all $k$NN sets $\{\mc{N}_k(i)\}$ by brute force, the cost is
$O(N_{\mathrm{ref}}^2 d)$ time and $O(N_{\mathrm{ref}}d)$ storage for the data.
In low ambient dimension, tree based methods can reduce this cost, and in higher dimension
approximate $k$NN can be used. Since preprocessing is independent of diffusion time, it is
amortized across all subsequent score evaluations.

\subsection*{Per anchor fitting}
For \textsc{Diag}, computing local moments costs $O(kd)$ time and $O(d)$ memory per anchor,
and applying $(\Sigma_i^{\textsc{Diag}})^{-1}$ is elementwise.
For \textsc{LR{+}D}, estimating the rank-$r$ subspace costs $O(kdr)$ time (or
$O(kd\min\{d,k\})$ with a dense SVD), and storing $V_i$ costs $O(dr)$ memory per anchor.

\subsection*{Query time evaluation}
If one uses the anchor only proxy at an anchor location, no additional cost is incurred
beyond applying $\Sigma_i^{-1}$.
For recomputation at a general query $x$, the cost is dominated by evaluating
$M=k_{\mathrm{mix}}$ components and normalizing mixture weights.
With diagonal anchors this step is $O(k_{\mathrm{mix}}d)$ time.
With \textsc{LR{+}D} anchors, applying the Woodbury inverse yields an effective cost
$O(k_{\mathrm{mix}}dr)$ when $r\ll d$, plus $O(k_{\mathrm{mix}}d)$ for diagonal parts.

\subsection{Asymptotic remarks}
\label{app:proxy_asymptotics}

Under standard smoothness and positivity assumptions and classical $k$NN bandwidth scaling \cite{wasserman2006all}
($k\to\infty$ and $k/N_{\mathrm{ref}}\to 0$), the single component local Gaussian proxy is a
consistent estimator of $s_0(x)$. Standard nonparametric analysis
 \cite{silverman1986density,mack1979multivariate,wasserman2006all} yields
\[
  \Big(\mathbb{E}_{p_0}\,\|\hat s_0(x)-s_0(x)\|_2^2\Big)^{1/2}
  \;=\;
  \mathcal{O}\!\big(N_{\mathrm{ref}}^{-\frac{2}{d+4}}\big)
  \qquad\text{for}\qquad
  k \asymp N_{\mathrm{ref}}^{\frac{4}{d+4}},\footnote{We use $a_N \asymp b_N$ to denote that
  $a_N$ and $b_N$ are of the same asymptotic order.}
\]
up to curvature dependent constants and the chosen covariance structure.
The \textsc{LR{+}D} choice (\cref{sec:learned_proxy}) reduces bias in anisotropic neighborhoods, and the $k$-mix
recomputation further mitigates single mode bias in regions where mixture components
have non negligible overlap by recovering the mixture score \cref{eq:mog_score}.
Because $\|\hat s_0-s_0\|_2\to 0$ as $N_{\mathrm{ref}}\to\infty$ (provided $k/N_{\mathrm{ref}}\to 0$
 \cite{mack1979multivariate}), the TSI term built from $\hat s_0$ remains consistent at small
diffusion times, and the blended score estimator $\sBLND$ \cref{eq:snis_blended_score} inherits the ground truth score behavior in the
limit $N_{\mathrm{ref}}\to\infty$.

\section{Parametric Distillation via a Critic and Gate Network}
\label{app:critic-gate}

In the main text, we developed a nonparametric, variance, optimal blended score estimator $\sBLND$ \cref{eq:snis_blended_score} \(\hat s_{\textsc{blend}}(y,t)\) that combines the TSI and Tweedie identities. To facilitate deployment without a reference set at test time, we now provide a \emph{parametric distillation} strategy that amortizes this blended score estimator $\sBLND$ \cref{eq:snis_blended_score} into a single neural score model. This Appendix outlines the minimal ingredients: the problem setup, a learning objective derived from a variance decomposition, and the theoretical justification for the training procedure.

\subsection{Setup and Learning Objective}
\label{app:critic-gate:setup}
We begin by defining the forward dynamics. Let \(x_0\sim p_0\), \(\xi\sim\mathcal N(0,I)\), and for the Ornstein-Uhlenbeck process, let us define \(y = e^{-t}x_0 + \sigma_t \xi\) with \(\sigma_t^2 = 1-e^{-2t}\).
We denote the per particle signals defined in ~\cref{sect:tweedie,sect:CSE} as follows:
\[
  a(x_0,t) \;=\; e^{t}\,s_0(x_0),
  \qquad
  b(x_0,y,t) \;=\; -\sigma_t^{-2}\!\left(y-e^{-t}x_0\right).
\]
We introduce a \emph{gate} network \(g(y,t;\psi)\in[0,1]\) which produces a blended per particle signal defined by
\[
  z_g(x_0;y,t)
  \;=\;
  \bigl(1-g(y,t;\psi)\bigr)\,a(x_0,t)
  \;+\;
  g(y,t;\psi)\,b(x_0,y,t).
\]
Additionally, a \emph{critic} network \(q(y,t;\omega)\) is introduced to predict the final score as a function of \((y,t)\) alone. We train the parameters \((\psi,\omega)\) by minimizing the population mean squared error (MSE):
\begin{equation}
\label{eq:critic-pop-loss}
  \mathcal L(\psi,\omega)
  \;=\;
  \mathbb E_{x_0,\xi,t}\!
  \left[
    \left\|\,z_g(x_0;y,t)\;-\;q(y,t;\omega)\,\right\|_2^2
  \right].
\end{equation}

\subsection{Variance Decomposition Analysis}
\label{app:critic-gate:ltv}
To understand the efficacy of this objective, we analyze it by conditioning on a fixed time location \((y,t)\). Let \(\pi(\cdot\mid y,t)\) denote the posterior distribution of \(x_0\) given \((y,t)\). Abbreviating \(z_g=z_g(x_0;y,t)\) and \(q=q(y,t;\omega)\), the law of total variance yields the pointwise decomposition
\begin{equation}
\label{eq:ltv}
  \mathbb E\!\left[\|z_g-q\|_2^2 \,\middle|\, y,t\right]
  \;=\;
  \mathrm{Var}_{\pi}\!\left(z_g\right)
  \;+\;
  \Big{\|}
    \mathbb E_{\pi}\!\left[z_g\right]
    \;-\;
    q
  \Big{\|}_2^2.
\end{equation}
Taking the total expectation over \((y,t)\) reveals that minimizing \cref{eq:critic-pop-loss} enforces two complementary roles simultaneously.

First, regarding the critic, for any fixed gate configuration \(g\), the inner minimum of \cref{eq:ltv} is attained when
\[
q(y,t) \;=\; \mathbb E_{\pi}\!\left[z_g(x_0;y,t)\right].
\]
In other words, the critic learns the \emph{MSE optimal} blended score at \((y,t)\) corresponding to the current gate mixture. 

Second, substituting this optimal \(q\) back into \cref{eq:ltv} leaves the gate with the objective to minimize \(\mathrm{Var}_{\pi}(z_g)\). Consequently, \(g\) is driven to find the \emph{variance-minimizing} blend coefficient at each \((y,t)\), matching the optimal \(\lambda^*\) derived in the nonparametric setting.

\subsection{Relation to the Nonparametric Estimator}
This formulation mirrors the nonparametric approach derived in the main text. If we let \(a=s_{\TSI}\) and \(b=s_{\textsc{twd}}\), the variance of the scalar blend \(z_\lambda=(1-\lambda)a+\lambda b\) is minimized by
\[
\lambda^\ast
=
\dfrac{\mathrm{Var}[a]-\mathrm{Cov}[a,b]}
      {\mathrm{Var}[a]+\mathrm{Var}[b]-2\,\mathrm{Cov}[a,b]},
\]
where the moments are computed under \(\pi(\cdot\mid y,t)\). The nonparametric SNIS plug-in estimator approximates this \(\lambda^\ast(y,t)\) using posterior samples. As shown by the decomposition in \cref{eq:ltv}, the parametric critic, and, gate architecture reproduces this exact population objective: the learned \(g(y,t;\psi)\) amortizes the calculation of \(\lambda^\ast(y,t)\), while \(q(y,t;\omega)\) amortizes the resulting blended score, yielding a direct parametric distillation of the nonparametric rule.

\subsection{Proof of Concept Experiment (48-D GMM)}
\label{app:critic-gate:exp}
As a proof of concept for the functionality of critic-gate score distillation, we evaluate a neural distillation of tweedies identity( baseline Denoising score matching) versus a Critic-Gate distilled score network. We test on a dimension \(d{=}48\) Gaussian mixture with strongly curved, filamentary structure, using
a 15-step reverse OU sampler. The Critic-Gate score network is trained using diagonal covariance proxy scores(\ref{sec:learned_proxy}) learned from data alone. Figure~\ref{fig:critic-gate:qual} shows qualitative projections:
the distilled critic preserves filament geometry more closely than a DSM baseline.
Table~\ref{tab:critic-gate:quant} lists quantitative metrics at 15 steps; our Critic-Gate score distillation outperforms the DSM baselines across all divergence metrics.
\vspace{-1.0em}
\begin{figure}[H]
 \centering
 \includegraphics[width=.85\linewidth]{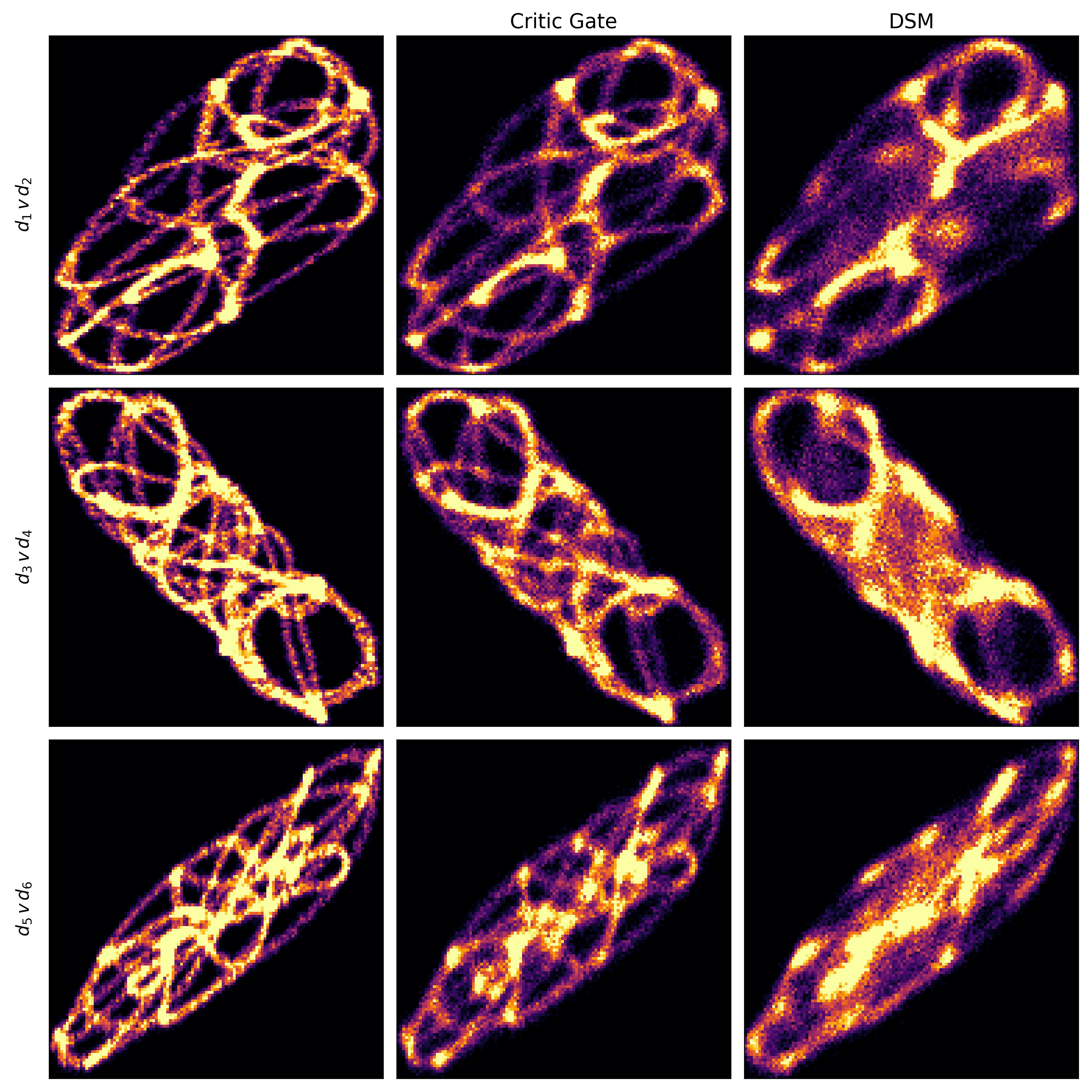}
 \caption{\textbf{Critic, and, Gate distillation on 48-D GMM (15 steps).}
 Qualitative density projections: left column (truth), middle (Critic, Gate), right (DSM).
 The distilled critic, trained by \ref{eq:critic-pop-loss}, recovers thin filamentary setsthat DSM blurs.}
 \label{fig:critic-gate:qual}
\end{figure}
\vspace{-1.0em}
\begin{table}[H]
 \centering
 \caption{\textbf{Critic--Gate v DSM sampling metrics.} 
 Quantitative comparison on the 48-D GMM at 15 sampling steps. 
 The proposed Critic--Gate strategy achieves superior performance across all divergence metrics (MMD, and KSD) compared to standard Denoising Score Matching (DSM).}
 \label{tab:critic-gate:quant}
 \vspace{0.25em}
 \begin{tabular}{lccc}
 \toprule
 Metric & DSM & Critic--Gate (ours) & Floor \\
 \midrule
 MMD@15 $\downarrow$ & $0.03732$ & $0.02507$ & $0.02053$ \\
 KSD@15 $\downarrow$ & $472.7$ & $104.4$ & $15.90$ \\
 \bottomrule
 \end{tabular}
\end{table}

\section{Supplementary Results}
\label{sect:suppres}

This Appendix collects additional plots and experiments deferred from the main text. 
Unless stated otherwise, we use the same reverse time discretization, diagnostic metrics, and evaluation protocol as in the main numerical section; full experimental details are deferred to \cref{sec:repro}.

\subsection{Correlation across time and variance/bias profiles}
\label{app:suppres:cor-var}

Our theory predicts that the Monte Carlo errors of the TSI and Tweedie estimators are negatively correlated (cf.~\cref{sect:negativeCorrelation}). We verify this empirically on the \emph{9D Helix GMM} in \cref{fig:corr_vs_t} by plotting the correlation of the estimator errors as a function of diffusion time $t$:
\[
\varepsilon_T(y,t):=\hat s_{\mathrm{TWD}}(y,t)-s(y,t),\qquad
\varepsilon_{\TSI}(y,t):=\hat s_{\mathrm{TSI}}(y,t)-s(y,t),
\]
\vspace{-1.0em}
\begin{equation*}
\rho(t)=
\frac{\mathbb{E}_{y\sim p_t}\!\big[\langle \varepsilon_T(y,t),\,\varepsilon_{\TSI}(y,t)\rangle\big]}
{\sqrt{\mathbb{E}_{y\sim p_t}\!\|\varepsilon_T(y,t)\|^2}\;\sqrt{\mathbb{E}_{y\sim p_t}\!\|\varepsilon_{\TSI}(y,t)\|^2}}.
\label{eq:correlationCoefficient}
\end{equation*}
We estimate $\rho(t)$ by Monte Carlo over $y\sim p_t$. The \emph{ground truth} curve evaluates the estimators using the exact $s_0$ (and uses the true $s(y,t)$ for error evaluation), while the \emph{proxy} curve replaces $s_0$ by the learned \emph{diagonal} local Gaussian proxy $\hat s_0$ from \cref{sec:learned_proxy} (still comparing to the true $s(y,t)$). We drop time points with low importance sampling quality using the ESS filter from \cref{fn:ess_threshold}. As shown in \cref{fig:corr_vs_t}, $\rho(t)$ is distinctly negative over a broad range of $t$, with a pronounced minimum near $t\approx 10^{-3}$, consistent with the small time anticorrelation predicted in \cref{sect:negativeCorrelation} and sufficient to yield variance cancellation in the blended estimator.
This negative correlation between the estimator errors is preserved when using diagonal proxy score fit to data, albeit weaker especially for larger $t$.
\vspace{-1.0em}
\begin{figure}[H]
 \centering
 \includegraphics[width=.65\linewidth]{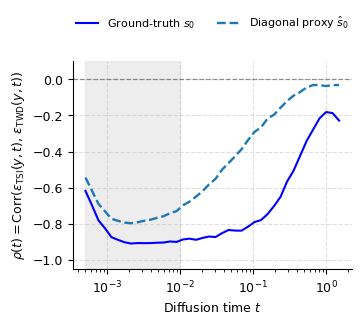}
 \caption{\textbf{Anticorrelation between TSI estimator \cref{eq:nonParametricCSE} and Tweedie estimator \cref{eq:nonParametricTweedie}} on the \textbf{9D Helix GMM}. The grey region highlights the regime where anticorrelation is strongest (near $t\approx 10^{-3}$). The diagonal proxy curve (dashed; \textsc{Diag} from \cref{sec:learned_proxy}) preserves the negative correlation effect that underlies variance reduction in the blended estimator \cref{eq:nonParametricBlend}.}
 \label{fig:corr_vs_t}
\end{figure}

\vspace{-1.0em}
\begin{figure}[H]
\centering
\begin{minipage}{0.48\textwidth}
\centering
\begin{tikzpicture}
\begin{semilogyaxis}[
 width=\textwidth,
 height=0.75\textwidth,
 xlabel={Time $t$},
 ylabel={Variance Time Factor},
 xmin=0, xmax=1.5,
 ymin=0.03, ymax=300,
 grid=both,
 grid style={line width=.1pt, draw=gray!10},
 major grid style={line width=.2pt,draw=gray!30},
 legend pos=north east,
 legend style={fill=white, fill opacity=0.8, draw opacity=1, text opacity=1, font=\small},
 title style={font=\bfseries}
]

\addplot[
 domain=0.01:1.5,
 samples=200,
 color=green!60!black,
 line width=2pt
] {exp(2*x)};
\addlegendentry{TSI: $e^{2t}$}

\addplot[
 domain=0.01:1.5,
 samples=200,
 color=red!70!black,
 line width=2pt
] {exp(-2*x)/(1-exp(-2*x))^2};
\addlegendentry{Tweedie: $\frac{e^{-2t}}{(1-e^{-2t})^2}$}

\addplot[
 dashed,
 line width=1.5pt,
 color=black,
 opacity=0.5
] coordinates {(0.34657, 0.03) (0.34657, 300)};

\node[anchor=west, fill=yellow!30, fill opacity=0.9, text opacity=1, inner sep=3pt, rounded corners, font=\small] 
 at (axis cs:0.38,0.1) {$t^* \approx 0.347$};

\end{semilogyaxis}
\end{tikzpicture}
\end{minipage}
\hfill
\begin{minipage}{0.48\textwidth}
\centering
\begin{tikzpicture}
\begin{semilogyaxis}[
 width=\textwidth,
 height=0.75\textwidth,
 xlabel={Time $t$},
 ylabel={Bias Time Factor},
 xmin=0, xmax=1.5,
 ymin=0.2, ymax=20,
 grid=both,
 grid style={line width=.1pt, draw=gray!10},
 major grid style={line width=.2pt,draw=gray!30},
 legend pos=north east,
 legend style={fill=white, fill opacity=0.8, draw opacity=1, text opacity=1, font=\small},
 title style={font=\bfseries}
]

\addplot[
 domain=0.01:1.5,
 samples=200,
 color=green!60!black,
 line width=2pt
] {exp(x)};
\addlegendentry{TSI: $e^{t}$}

\addplot[
 domain=0.01:1.5,
 samples=200,
 color=red!70!black,
 line width=2pt
] {exp(-x)/(1-exp(-2*x))};
\addlegendentry{Tweedie: $\frac{e^{-t}}{1-e^{-2t}}$}

\addplot[
 dashed,
 line width=1.5pt,
 color=black,
 opacity=0.5
] coordinates {(0.34657, 0.2) (0.34657, 20)};

\node[anchor=west, fill=yellow!30, fill opacity=0.9, text opacity=1, inner sep=3pt, rounded corners, font=\small] 
 at (axis cs:0.38,0.4) {$t^* \approx 0.347$};

\end{semilogyaxis}
\end{tikzpicture}
\end{minipage}
\caption{\textbf{Relative variance and bias (due to SNIS) of the Tweedie and TSI nonparametric score estimators as a function of time $t$}. The former has low variance/bias at large $t$ but diverges at $t=0$, while the latter has low variance/bias at small $t$ but grows exponentially. For both bias and variances, the crossover occurs at the same point as for variance: $t^* = \ln(2)/2 \approx 0.347$.}
\label{fig:variance_profiles}
\end{figure}
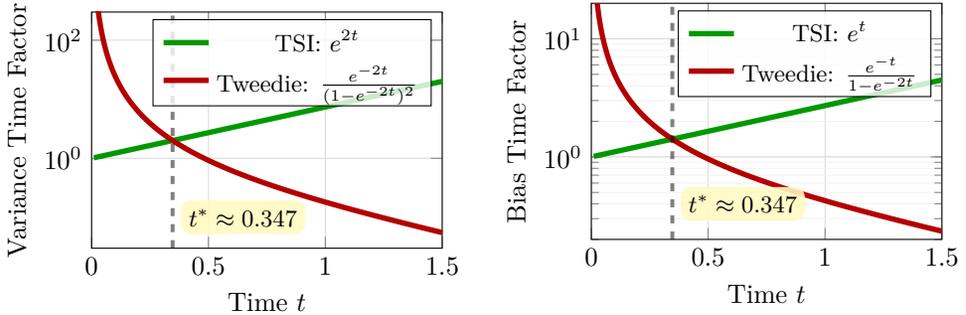
\vspace{-1.0em}

\subsection{Posterior sampling results}
\label{app:suppres:post}

We provide additional posterior sampling diagnostics for synthetic and image inverse problems.

\subsection*{9D Helix GMM with rank-2 likelihood}\label{sec:post-9d}
We consider the $9$D helix Gaussian mixture prior used in the main text, constrained by a rank-$2$ Gaussian likelihood, and visualize samples in PCA planes fitted to posterior reference samples (see \cref{sec:repro} for exact construction).
\cref{fig:post_9d} compares the Tweedie baseline with blended variants.
\vspace{-1.0em}
\begin{figure}[H]
 \centering
 \includegraphics[width=\linewidth]{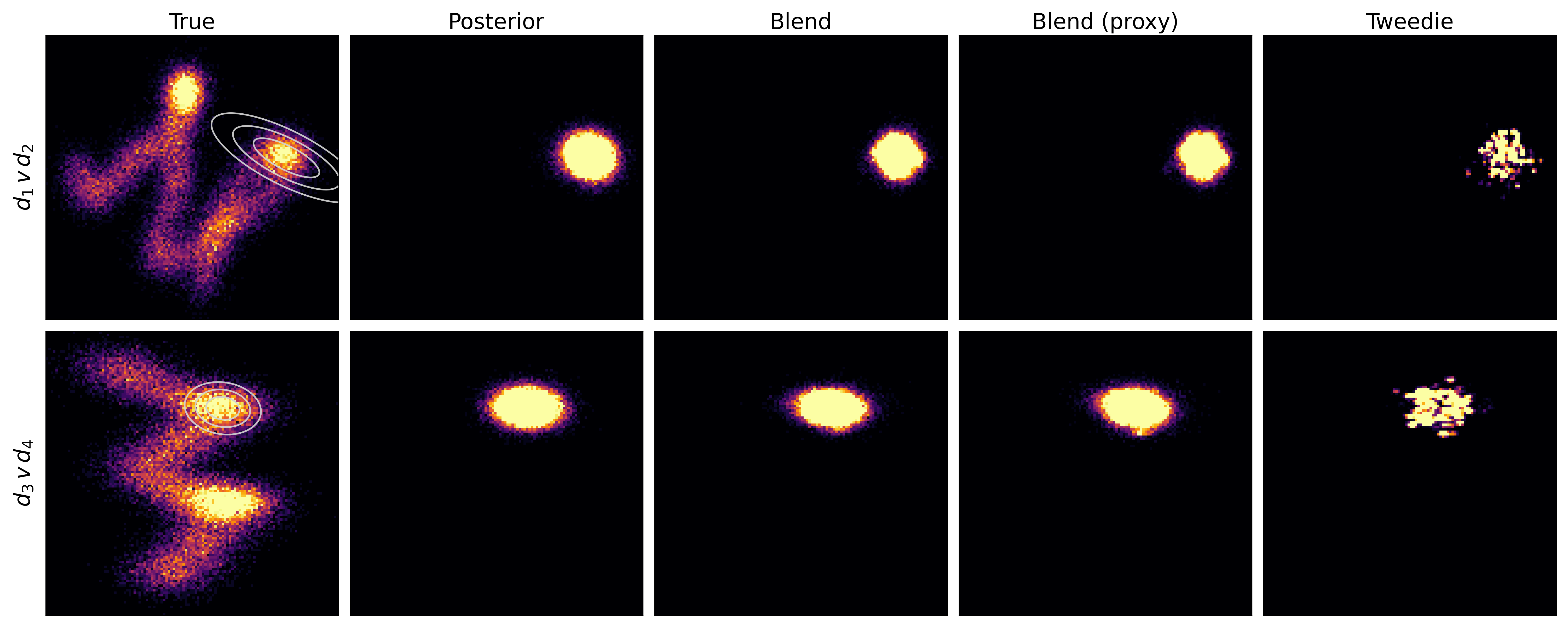}
 \caption{\textbf{Posterior sampling heatmaps on the \textbf{9D Helix GMM} (N=1200).}
 Projected histograms in PCA planes \((d_1,d_2)\) and \((d_3,d_4)\) (principal directions fitted to the \emph{posterior} via importance weighted prior samples).
 \emph{Blend} uses the exact target score; \emph{Blend (proxy)} uses the \textsc{LR{+}D} local Gaussian score proxy from \cref{sec:learned_proxy} fit to the raw data; \emph{Tweedie} is the baseline.
 White contours indicate likelihood level sets.
 Both blends capture the localized posterior manifold, while Tweedie yields fragmented samples because, at small diffusion times, its SNIS estimate becomes dominated by pulls toward a small set of high-weight reference particles (sample memorization), rather than providing a smoothly interpolated local geometric field.
}
 \label{fig:post_9d}
\end{figure}
\vspace{-1.0em}
\noindent In this example, \emph{Blend} and \emph{Blend (proxy)} both approximate the posterior ridge and spread well, while Tweedie tends to fragment/collapse onto a small set of high weight reference particles near the posterior concentration.

\section*{MNIST deblurring panel}\label{subsec:mnist}
We include multi panel posterior sample summaries in \cref{fig:mnist_panel}. These visualize individual posterior samples from the MNIST deblurring problem in \label{subsec:mnist} for all relevant samplers. 

\begin{figure}[H]
 \centering
 \includegraphics[width=.85\linewidth]{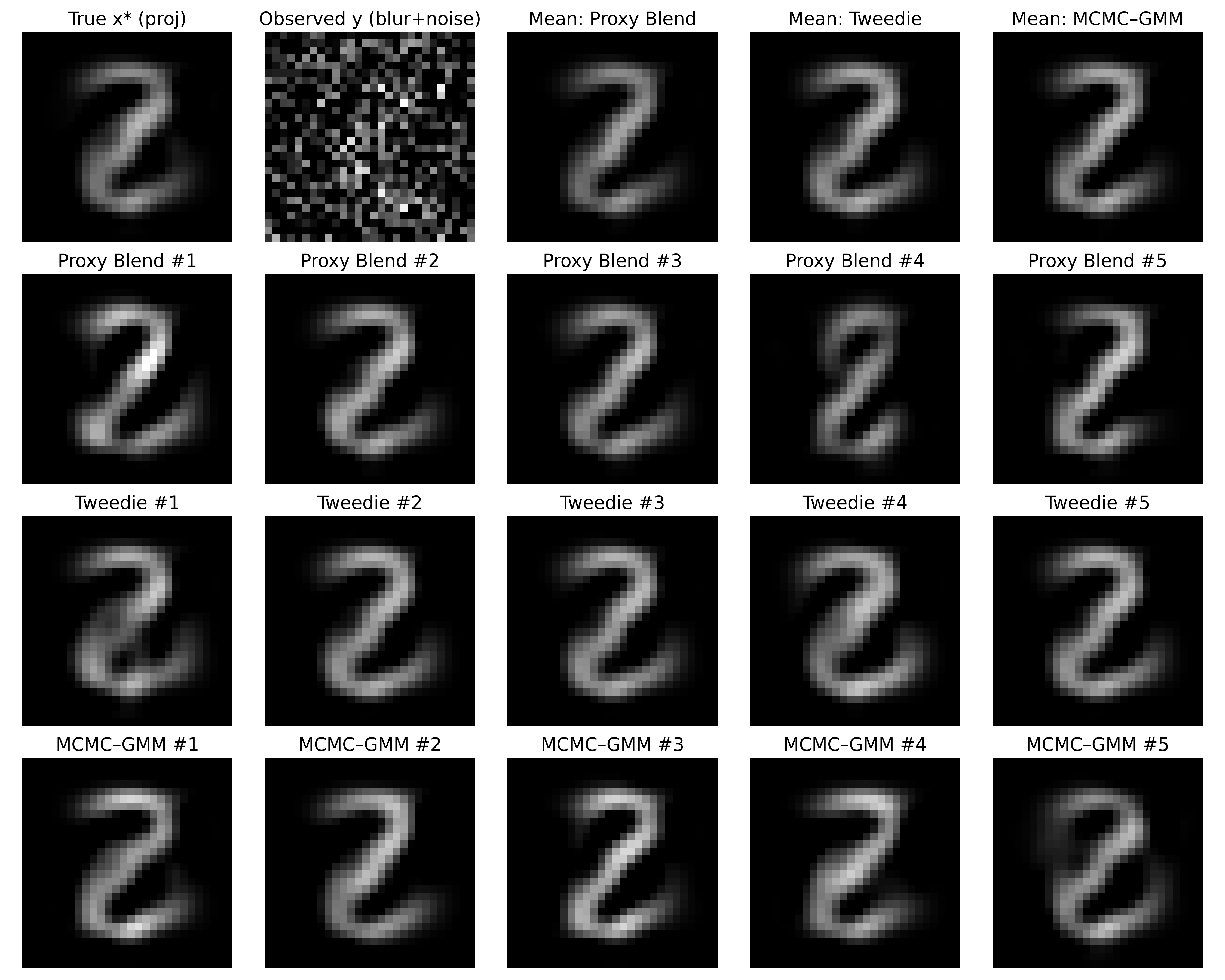}
 \caption{\textbf{MNIST deblurring sample panels.}
 Visual comparison of random posterior samples for MNIST deblurring ($N{=}10{,}000$ references).
 The \textsc{Blend} (Proxy) estimator generates diverse samples that explore the variations in handwriting style allowed by the posterior.
 Pure Tweedie samples exhibit less diversity, tending to revert toward the posterior mean plus visual artifacts.}
 \label{fig:mnist_panel}
\end{figure}
\vspace{-1.0em}
\noindent All samplers produce visually reasonable posterior samples in this run, with \emph{Blend (proxy)} showing the fewest visible artifacts among the displayed panels.

\subsection*{Inverse heat equation}
\label{app:suppres:heat}\label{subsec:heat_inverse}

We report a linear PDE inverse problem in the same posterior sampling framework used for Navier--Stokes (\cref{subsec:ns_inverse}), with identical diagnostics and reverse time integration; only the forward model differs.
Specifically, we infer a log conductivity field $u(x)$ from sparse point observations of the temperature field $\omega(x)$ on $\Omega=(0,1)^2$:
\begin{equation*}
-\nabla\cdot\big(e^{u(x)}\nabla \omega(x)\big)=20,\qquad 
\omega=0\ \text{on }\Gamma_{\mathrm{ext}},\qquad 
n\cdot\big(e^{u(x)}\nabla \omega(x)\big)=0\ \text{on }\Gamma_{\mathrm{root}}.
\end{equation*}
\vspace{-1.2em}
\begin{equation*}
u(x;\alpha)=\sum_{i=1}^{q}\sqrt{\lambda_i}\,\phi_i(x)\,\alpha_i,\qquad \alpha\in\R^{q},\ \ q=15,
\end{equation*}
\vspace{-1.2em}
\begin{equation*}
p_0(\alpha)=\mathcal{N}(0,I),\qquad s_0(\alpha) := \nabla_{\alpha}\log p_0(\alpha) = -\alpha,
\end{equation*}
\vspace{-1.2em}
\begin{equation*}
\mathcal{L}(\alpha) := p(\by\mid \alpha) = \mathcal{N}\!\big(\by;F(\alpha),\sigma_{\mathrm{obs}}^2 I\big),\qquad 
\nabla_{\alpha}\log \mathcal{L}(\alpha) = \frac{1}{\sigma_{\mathrm{obs}}^2}J_F(\alpha)^\top\big(\by - F(\alpha)\big).
\end{equation*}
Representative PCA plane histograms of posterior samples are shown in \cref{fig:pca_histograms}, and reconstructed fields are shown in \cref{fig:field_recon}.
Following the Navier--Stokes table format, we summarize quantitative diagnostics in table  \cref{tab:heat_results}.
\begin{table}[H]
 \centering
 \caption{\textbf{Heat equation quantitative results. } 
 Comparison of posterior sampling accuracy. 
 \textsc{Blend} significantly reduces the kernel score discrepancy (KSD) and distribution mismatch (MMD) compared to the pure Tweedie estimator. 
 The blending mechanism effectively corrects the local geometric errors responsible for Tweedie's poor performance, yielding metrics comparable to the MALA.}
 \label{tab:heat_results}
 \resizebox{\textwidth}{!}{
 \begin{tabular}{lcccccc}
 \toprule
 Method
 & MMD$\to$MALA $\downarrow$
 & $\mathrm{RMSE}_\alpha$ $\downarrow$
 & $\mathrm{RMSE}_{\mathrm{amb}}$ $\downarrow$
 & Fwd Err $\downarrow$
 & KSD $\downarrow$
 & $\widetilde{\mathrm{KL}}$ $\downarrow$ \\
 \midrule
 Tweedie ($\sTWD$)
 & 0.145
 & 0.470
 & 0.111
 & 0.107
 & 16.0
 & 111.2 \\
 Blend Posterior ($\sBLND$)
 & 0.088
 & 0.465
 & 0.106
 & 0.101
 & 1.008
 & 58.4 \\
 MALA (Reference)
 & 0.000
 & 0.457
 & 0.101
 & 0.096
 & 0.883
 & 43.1 \\
 \bottomrule
 \end{tabular}
 }
\end{table}
\vspace{-.5em}
\noindent Visually, \emph{Blend Posterior} matches the MALA reference more closely in the PCA marginals, and this improved agreement is reflected consistently across the reported diagnostics and in field reconstruction plot ; Tweedie exhibits the same kind of fragmentation/collapse seen in the synthetic posterior histograms.
\vspace{-.5em}
\begin{figure}[H]
 \centering
 \includegraphics[width=.8\linewidth]{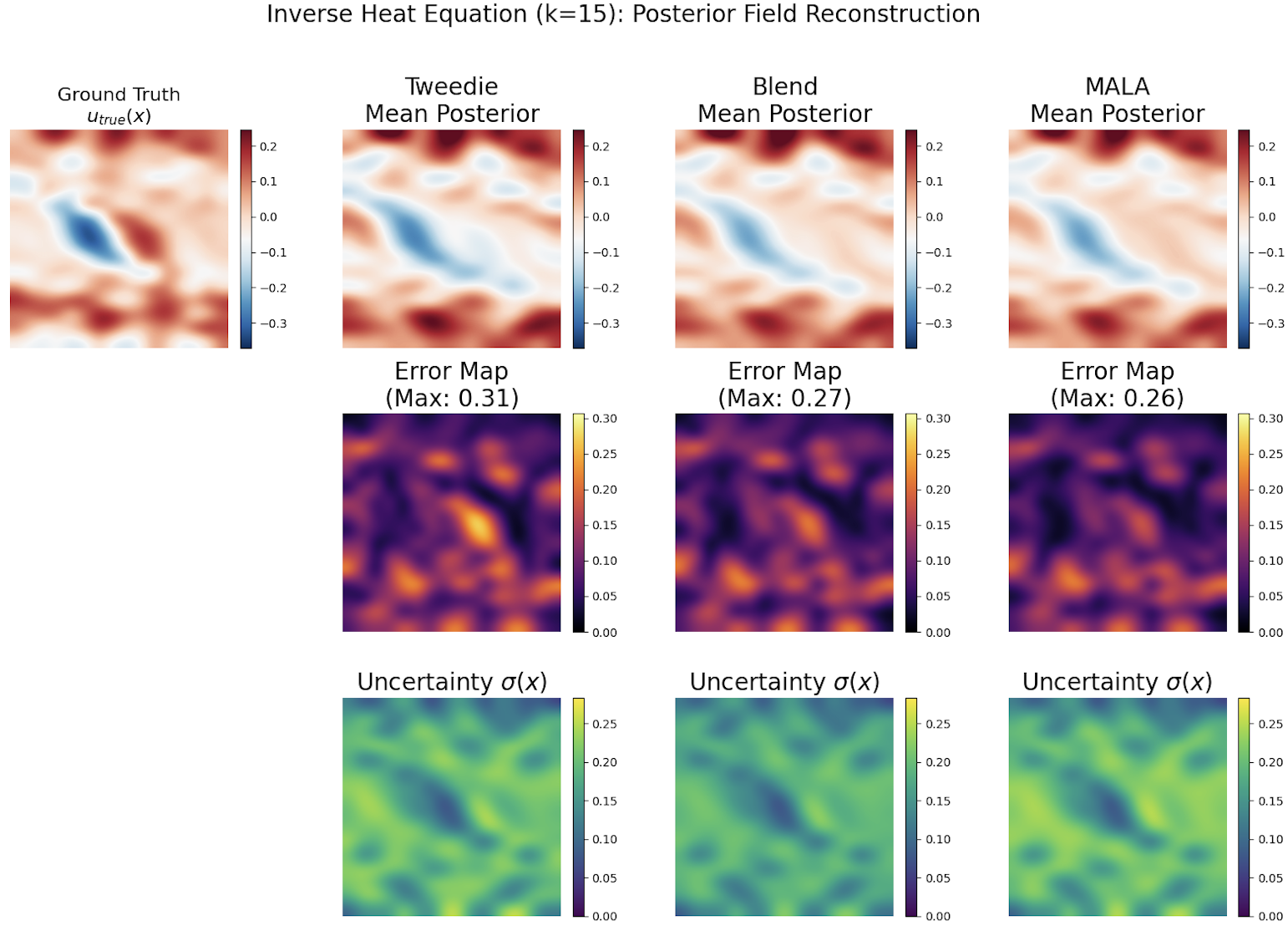}
 \vspace{-1.0em}
 \caption{\textbf{Heat equation conductivity field reconstructions.} 
 Posterior mean reconstruction of the conductivity field $u(x)$. 
 The \textsc{Blend} estimator recovers the spatial structure and intensity of the conductivity anomalies found in the MALA ground truth. 
 In contrast, the Tweedie reconstruction is over-regularized and diffuse, failing to capture the sharp features resolved by the blended score.}
 \label{fig:field_recon}
\end{figure}

\vspace{-1.0em}
\begin{figure}[H]
 \centering
 \includegraphics[width=.75\linewidth]{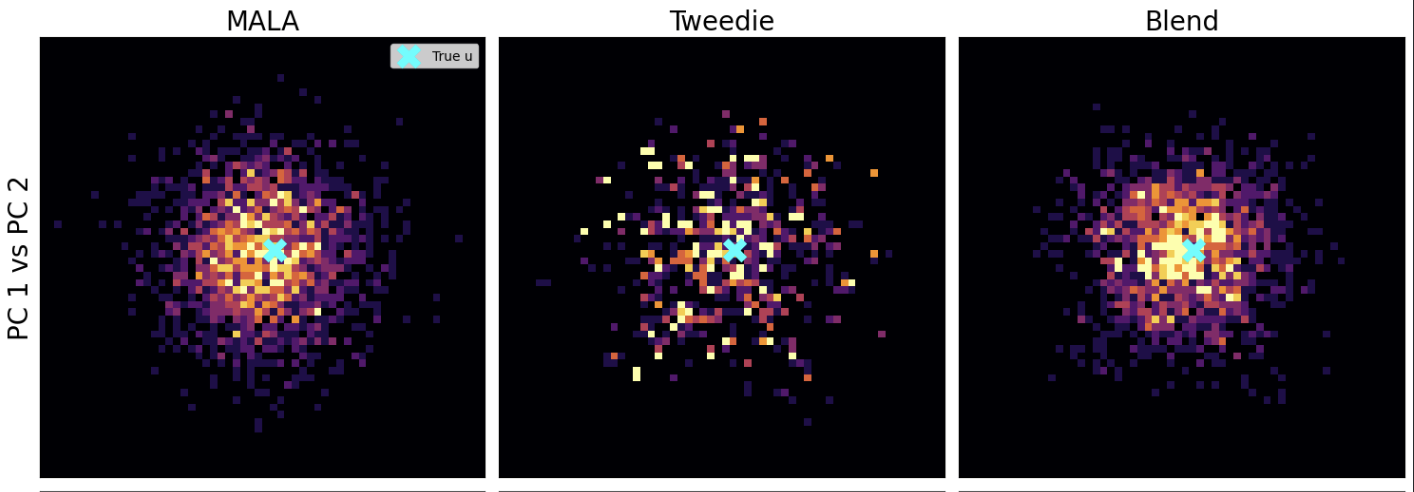}
 \vspace{-.5em}
 \caption{\textbf{Heatmap of Heat equation coefficient posterior histograms.} 
 2D marginal histograms of posterior samples projected onto the leading principal components (Top: PC1 vs PC2; Bottom: PC3 vs PC4). Columns compare MALA (left), Tweedie (center), and Blend (right). 
 The Tweedie estimator exhibits characteristic fragmentation, collapsing mass onto isolated reference samples. 
 \textsc{Blend} restores the continuous posterior geometry, filling the gaps to match the smooth density profiles and ground truth alignment (cyan cross) of the MALA reference.}
 \label{fig:pca_histograms}
\end{figure}

\pagebreak

\section{Reproducibility}
\label{sec:repro}
This Appendix summarizes implementation details and provides a checklist of all experiment
hyperparameters needed to reproduce the figures and tables in the main text.

\subsection{OU corruption, time grid, and reverse time integration}
\section*{Forward OU process}\label{def:ou_forward}
We use the Ornstein--Uhlenbeck (OU) forward dynamics
\[
  dX_t = -X_t\,dt + \sqrt{2}\,dW_t,\qquad X_0\sim p_0,
\]
so that
\[
  X_t \mid X_0=x \sim \mathcal{N}\!\left(e^{-t}x,\ \sigma_t^2 I\right),\qquad
  \sigma_t^2 := 1-e^{-2t}.
\]

\section*{Reverse time sampler}\label{def:reverse_time_sampler}
Given a score estimator $\hat s(y,t)\approx \nabla_y \log p_t(y)$, we integrate the reverse time SDE
We always use the same integrator for Tweedie, TSI, and Blend and across all experiments to ensure comparability:
\[
  dY_t = \bigl(Y_t + 2\hat s(Y_t,t)\bigr)\,dt + \sqrt{2}\,d\bar W_t,
\]
from $t_{\max}$ down to $t_{\min}$, initialized at $Y_{t_{\max}}\sim \mathcal{N}(0,I)$.

\section*{Time grid}\label{def:time_grid}
For all experiments we use a log spaced grid $t_K=t_{\max}>\cdots>t_0=t_{\min}$ with
$t_{\min}=5\times 10^{-4}$ and $t_{\max}=1.5$.

\section*{Heun predictor--corrector}\label{def:heun_pc}
To step from $t_{k+1}$ to $t_k$ with $\delta := t_{k+1}-t_k>0$, define the reverse drift
$f(y,t):= y + 2\hat s(y,t)$ and update
\[
\begin{aligned}
  \textbf{Predictor:}\quad &\tilde y_k = y_{k+1} - \delta f(y_{k+1},t_{k+1}) + \sqrt{2\delta}\,z,\\
  \textbf{Corrector:}\quad &y_k = y_{k+1} - \frac{\delta}{2}\Bigl(f(y_{k+1},t_{k+1}) + f(\tilde y_k,t_k)\Bigr) + \sqrt{2\delta}\,z,
\end{aligned}
\]
where $z\sim \mathcal{N}(0,I_d)$ is drawn once per step and shared between predictor and corrector.
We report results as a function of \emph{number of function evaluations} (NFE), counting each call to $\hat s(\cdot,t)$.

\subsection{SNIS details, median of means, and ESS filtering}
\section*{SNIS for conditional expectations}\label{def:snis_condexp}
For any function $\varphi$, conditional expectations under $p_{t|0}(x_0\mid y)$ are estimated from
a reference set $\{x_0^{(i)}\}_{i=1}^{N_{\mathrm{ref}}}\sim p_0$ via self normalized importance sampling:
\[
  \mathbb{E}_{p_{t|0}(\cdot\mid y)}[\varphi(X_0)]
  \;\approx\;
  \sum_{i=1}^{N_{\mathrm{ref}}}\tilde w_i(y,t)\,\varphi\!\bigl(x_0^{(i)}\bigr),
  \qquad
  \tilde w_i = \frac{\exp(\ell_i-a)}{\sum_j \exp(\ell_j-a)},
\]
where $\ell_i$ are log weights (computed in log space for stability) and $a=\max_i \ell_i$ is the log sum exp shift.
For prior sampling, the weights are proportional to the OU transition density $p(y\mid x_0^{(i)})$.
For posterior sampling, we additionally \emph{tilt} the log weights by $\log p(y_{\mathrm{obs}}\mid x_0^{(i)})$.

\section*{Median of means (MoM)}\label{def:mom}
To reduce sensitivity to heavy tailed importance weights, we compute SNIS estimates over
independent batches (each with its own reference sub sample) and aggregate with a median of means rule \cite{lugosi2019mean}.
Record the number of batches $B$ and the per batch reference size $N_{\mathrm{ref}}^{\mathrm{(batch)}}$
used in each experiment.

\section*{ESS thresholding}\label{def:ess}
We quantify importance weight quality using the effective sample size
\[
  \mathrm{ESS}(y,t) := \frac{1}{\sum_{i=1}^{N_{\mathrm{ref}}}\tilde w_i(y,t)^2}.
\]
Following \cref{fn:ess_threshold} in the main text, time points with $\mathrm{ESS}(y,t) < \tau_{\mathrm{ESS}}$ are discarded, and
we use $\tau_{\mathrm{ESS}} = 0.05\,N_{\mathrm{ref}}$ in all experiments.

\subsection{Local score proxies for $s_0$}
When $s_0(x)=\nabla_x\log p_0(x)$ is unavailable, we approximate it from the reference set using the local Gaussian proxies
from Section~3.5:
\begin{enumerate}
 \item For each anchor $x_0^{(i)}$, compute its $k$ nearest neighbors $N_k(i)$ in the reference set.
 \item Compute a locally weighted mean $\mu_i$ and a structured covariance estimate $\Sigma_i$.
 \item Define the anchor score proxy $\hat s_0(x_0^{(i)}) := \Sigma_i^{-1}(\mu_i - x_0^{(i)})$.
\end{enumerate}
We use two structured families for $\Sigma_i$:
\begin{itemize}
 \item \textbf{Diag:} $\Sigma_i$ is diagonal (per coordinate variance with ridge stabilization).
 \item \textbf{LR{+}D:} $\Sigma_i = V_i V_i^\top + D_i$ with rank $r$ and diagonal $D_i$ (Woodbury inverse at query time).
\end{itemize}

\section*{Parameters to log (proxy)}\label{def:proxy_params_to_log}
Record: $k$ (neighbor count), $r$ (rank for LR{+}D), the ridge/diagonal floor used for stabilization,
and whether the proxy is evaluated at anchors only or recomputed at general queries using a $k_{\mathrm{mix}}$-mixture.


\section*{MALA sampler}\label{def:mala_sampler}
Given a differentiable target density $\pi(\alpha)$ on $\mathbb{R}^q$ (e.g., $\pi(\alpha)\propto p_0(\alpha)\,\mathcal{L}(\alpha)$ in
white box experiments, or an approximate $\pi$ obtained by replacing $p_0$ with a differentiable surrogate prior),
the Metropolis adjusted Langevin algorithm (MALA) uses the proposal
\[
\alpha' \;=\; \alpha \;+\; \frac{h}{2}\,\nabla_\alpha \log \pi(\alpha)\;+\;\sqrt{h}\,z,
\qquad z\sim\mathcal{N}(0,I_q),
\]
where $h>0$ is the stepsize. The proposal density is Gaussian,
\[
q(\alpha'\mid \alpha) = \mathcal{N}\!\Big(\alpha';\ \alpha+\tfrac{h}{2}\nabla\log\pi(\alpha),\ hI_q\Big).
\]
We accept $\alpha'$ with probability
\[
a(\alpha,\alpha') \;=\;
\min\!\left\{1,\ \frac{\pi(\alpha')\,q(\alpha\mid \alpha')}{\pi(\alpha)\,q(\alpha'\mid \alpha)}\right\},
\]
and otherwise retain the current state. In our experiments we run MALA chains for $2{,}000$ iterations with a burn in of
$500$ steps; the retained post burn in states are treated as samples from the reference posterior.

\subsection{Metrics and evaluation protocols}

We provide exact definitions for metrics used in the body of the text.
\section*{MMD}\label{def:mmd}
Given samples 
$X=\{x_i\}_{i=1}^n\sim P$ and $Y=\{y_j\}_{j=1}^m\sim Q$, the (biased) squared MMD is
\[
\mathrm{MMD}^2_k(P,Q) \approx
\frac{1}{n^2}\sum_{i,i'} k(x_i,x_{i'})
+
\frac{1}{m^2}\sum_{j,j'} k(y_j,y_{j'})
-
\frac{2}{nm}\sum_{i,j} k(x_i,y_j).
\]
We use RBF kernels $k_\sigma(x,y)=\exp(-\|x-y\|^2/2\sigma^2)$ with a multiscale bandwidth grid
$\{\sigma_\ell\}_\ell$ obtained from the median heuristic (record multipliers and the subsample size used to estimate the median distance).

\section*{KSD}\label{def:ksd}
For a target score $s(x)=\nabla_x\log\pi(x)$ and a positive definite kernel $k$,
the squared Kernel Stein Discrepancy is $\mathrm{KSD}^2(Q,\pi)=\mathbb{E}_{x,x'\sim Q}[u_s(x,x')]$,
with the Stein kernel
\begin{equation*}
\begin{split}
    u_s(x,x') &= s(x)^\top k(x,x') s(x') + s(x)^\top \nabla_{x'}k(x,x') \\
    &\quad + s(x')^\top \nabla_x k(x,x') + \mathrm{tr}\bigl(\nabla_x\nabla_{x'}k(x,x')\bigr).
\end{split}
\end{equation*}
We use an inverse multiquadric kernel $k(x,y)=(c^2+\|x-y\|^2)^\beta$ with fixed $(c,\beta)$; record $(c,\beta)$ and
whether an unbiased U statistic or V statistic estimator is used.

\section*{Score RMSE}\label{def:score_rmse}
When ground truth scores are available, we report
\[
  \mathrm{RMSE}(\hat s) :=
  \left(\frac{1}{|\mathcal{T}|}\sum_{t\in\mathcal{T}}\mathbb{E}_{y\sim p_t}\bigl[\|\hat s(y,t)-s(y,t)\|^2\bigr]\right)^{1/2},
\]
estimated by Monte Carlo over $y\sim p_t$ on the same $t$-grid.

\section*{MNIST image metrics}\label{def:psnr}
Pixels are scaled to $[0,1]$. For a reconstruction $\hat x$ and ground truth $x$, we report
\[
\mathrm{PSNR}(\hat x,x)=20\log_{10}\!\left(\frac{1}{\sqrt{\mathrm{MSE}(\hat x,x)}}\right).
\]

\section*{Coverage}\label{def:coverage}
Coverage is the fraction of pixels whose ground truth value lies inside the empirical
$90\%$ credible interval computed from posterior samples:
\[
\mathrm{Coverage}(x)\;=\;\frac{1}{d}\sum_{i=1}^d\mathbf{1}\!\left\{x_i\in [Q_{0.05}(\{x_i^{(s)}\}),\,Q_{0.95}(\{x_i^{(s)}\})]\right\}.
\]
We additionally report $\mathbb{E}_{x\sim \pi}[\log p_{\mathrm{KDE}}(x)]$ where $p_{\mathrm{KDE}}$ is a KDE fit to posterior samples;
record the kernel family and bandwidth rule (including any scalar multipliers).


\section*{Coefficient space mean error ($\mathrm{RMSE}_\alpha$)}\label{def:rmse_coeff}
Let $\alpha_\star\in\mathbb{R}^q$ denote the ground truth reduced coordinates used to generate the synthetic observation
(e.g., KL coefficients for Navier--Stokes, PCA coefficients for MNIST). Given posterior samples
$\{\alpha^{(s)}\}_{s=1}^S$, define the posterior mean estimator
\[
\bar\alpha := \frac{1}{S}\sum_{s=1}^S \alpha^{(s)}.
\]
We report the coefficient space mean error
\[
\mathrm{RMSE}_\alpha \;:=\; \frac{\|\bar\alpha-\alpha_\star\|_2}{\sqrt{q}}.
\]

\section*{Ambient space mean error ($\mathrm{RMSE}_{\mathrm{amb}}$)}\label{def:rmse_ambient}
Let $G:\mathbb{R}^q\to\mathbb{R}^d$ denote the deterministic map from reduced coordinates to the ambient object
(field/image), e.g.
\begin{align*}
    \text{Navier--Stokes: } & G(\alpha)=w_0(\cdot;\alpha)\ \text{(discretized on the simulation grid)}, \\
    \text{MNIST: } & G(\alpha)=\mu+U\alpha\in\mathbb{R}^{784}.
\end{align*}
Define the ambient posterior mean estimator
\[
\bar x := \frac{1}{S}\sum_{s=1}^S G(\alpha^{(s)}),\qquad x_\star := G(\alpha_\star).
\]
We report
\[
\mathrm{RMSE}_{\mathrm{amb}} \;:=\; \frac{\|\bar x-x_\star\|_2}{\sqrt{d}}.
\]

\section*{Forward/data fit error (\textnormal{Fwd Err})}\label{def:fwd_err}
Let $F$ denote the forward operator mapping reduced coordinates to the (noise free) observation space, and define
the noiseless observation
\[
y_{\mathrm{clean}} := F(\alpha_\star).
\]
We report the forward relative error of the posterior mean,
\[
\mathrm{FwdErr} \;:=\; \frac{\|F(\bar\alpha)-y_{\mathrm{clean}}\|_2}{\|y_{\mathrm{clean}}\|_2},
\]
which measures how well the inferred posterior mean reproduces the forward map at the sensor/pixel level.

\section*{KL type diagnostic (\texorpdfstring{$\widetilde{\mathrm{KL}}$}{KL tilde})}\label{def:kl_tilde}
In white box settings where the unnormalized posterior density is available up to a normalizer,
\[
p^{\mathrm{post}}(\alpha\mid \by)\ \propto\ p_0(\alpha)\,\mathcal{L}(\alpha),
\]
we report the unnormalized KL form (KL up to an additive constant shared across methods)
\[
\widetilde{\mathrm{KL}}(q)
\;:=\;
-\widehat H(q)\;-\;\mathbb{E}_{\alpha\sim q}\!\big[\log p_0(\alpha)+\log \mathcal{L}(\alpha)\big],
\]
where $\widehat H(q)$ is an entropy estimator for the sampler distribution $q$ (we use a $k$NN entropy estimator \cite{kozachenko1987sample,kraskov2004estimating}),
and the expectation is estimated by Monte Carlo over the generated samples:
\[
\mathbb{E}_{\alpha\sim q}\!\big[\log p_0(\alpha)+\log \mathcal{L}(\alpha)\big]
\;\approx\;
\frac{1}{S}\sum_{s=1}^S \big(\log p_0(\alpha^{(s)})+\log \mathcal{L}(\alpha^{(s)})\big).
\]
For a fixed posterior (fixed data $\by$), $\widetilde{\mathrm{KL}}(q)$ differs from the true $\mathrm{KL}(q\|p^{\mathrm{post}})$
only by an additive constant $-\log Z(\by)$, and hence is comparable across methods on the same inverse problem instance.

\subsection{Experiment specific hyperparameters}
The main text defers concrete numerical settings (e.g., helix parameterization, kernel bandwidth grids, and SNIS batch sizes) to this appendix.Below are the specific hyperparameters, model configurations, and sampling settings used to generate the results in Sections 4.1--4.6.

\section*{9D Helix GMM \cref{sec:prior-9d}}
\textbf{Model:} The prior is a Spectral GMM with $K_{\text{mix}}=64$ components in dimension $d=9$. Means are drawn on a whitened sphere of radius $R=2.0$, with covariance eigenvalues decaying as $\lambda_i \propto i^{-2}$ ($\alpha=1$, ``Helix'' geometry). The forward operator is diagonal, $A=\mathrm{diag}(i^{-1})$, and observations follow $y = Ax + \varepsilon$ with $\varepsilon \sim \mathcal{N}(0,\sigma_{\text{abs}}^2 I)$.
\textbf{Parameters:} We fix $d=9$ and use relative noise $\sigma_{\text{rel}}$ with absolute scaling $\sigma_{\text{abs}} = \sigma_{\text{rel}} \cdot \sqrt{\mathbb{E}\|Ax\|^2}$ (so noise is comparable across dimensions/operators).
\textbf{Sampling:} Heun Predictor Corrector integrator with $K=30$ steps. Time schedule is log spaced from $t_{\max}=2.5$ to $t_{\min}=3 \times 10^{-4}$. Importance weights use a fixed bank of $N_{\text{ref}}=2{,}000$ samples.
\textbf{Metrics:} MMD uses a Gaussian kernel with $\sigma=0.5\sqrt{d/2}$. KSD uses multiscale bandwidths $\sigma \in \{0.1, \dots, 1.0\}$.

\section*{Regime Sweep \cref{sec:regime_study}}
\textbf{Model:} The prior is a Spectral GMM with $K_{\text{mix}}=64$ components. Means are drawn on a whitened sphere of radius $R=2.0$, with covariance eigenvalues decaying as $\lambda_i \propto i^{-2}$ ($\alpha=1$, ``Helix'' geometry). The forward operator is diagonal, $A = \text{diag}(i^{-1})$.
\textbf{Sweep Parameters:} We sweep dimensions $d \in \{3, 6, 12, 24\}$ and relative noise levels $\sigma_{\text{rel}} \in [0.025, 1.0]$ (24 log spaced steps). Absolute noise is scaled as $\sigma_{\text{abs}} = \sigma_{\text{rel}} \cdot \sqrt{\mathbb{E}\|Ax\|^2}$.
\textbf{Sampling:} Heun Predictor Corrector integrator with $K=30$ steps. Time schedule is log spaced from $t_{\max}=2.5$ to $t_{\min}=3 \times 10^{-4}$. Importance weights use a fixed bank of $N_{\text{ref}}=2,000$ samples.
\textbf{Metrics:} MMD uses a Gaussian kernel with $\sigma=0.5\sqrt{d/2}$. KSD uses multiscale bandwidths $\sigma \in \{0.1, \dots, 1.0\}$.

\section*{Navier--Stokes \cref{subsec:ns_inverse}}
\textbf{Model:} 2D Vorticity Stream formulation on a $32 \times 32$ grid with viscosity $\nu=10^{-3}$. The latent variable is the initial vorticity $\omega_0$, parameterized by a KL expansion with $d=24$ ($\mathcal{N}(0, I_{24})$ prior).
\textbf{Observations:} Measurements are taken at final time $T=10$ at $m=100$ random spatial points. Noise level is $\sigma_{\text{obs}}=0.3$. Likelihood gradients are computed via JAX adjoints.
\textbf{Sampling:} Heun integrator with $K=50$ steps and schedule $t \in [1.0, 10^{-3}]$. The reference bank contains $N_{\text{ref}}=20,000$ samples (batch size $1,000$).

\section*{Heat equation Inverse \cref{sect:suppres} }
\textbf{Model:} The domain is a 2D square discretized on a $15 \times 15$ FEM grid ($256$ nodes). The latent parameter is the log conductivity field, parameterized by a KL expansion with dimension $d=15$.
\textbf{Observations:} We observe the temperature field at $m=25$ randomly selected sensors with additive Gaussian noise $\sigma_{\text{obs}}=0.11$.
\textbf{Sampling:} We use the exact prior score for the conditional score model. The sampler runs for $K=50$ steps on a log spaced schedule. The reference set size is $N_{\text{ref}}=20,000$, processed in batches of size $B=4,096$.

\section*{MNIST Deblurring \cref{subsec:mnist}}
\textbf{Model:} The latent space is defined by a PCA projection ($d=15$) fitted on $N=50,000$ training images The prior is modeled as a Gaussian Mixture Model (GMM) with $K_{\text{mix}}=512$ components, fitted via Expectation Maximization (EM) on the latent training data. The forward model is a Gaussian blur with $\sigma_{\text{blur}}=2.6$ pixels and additive noise $\sigma_{\text{obs}}=0.3$.
\textbf{Sampling:} The proxy score is a Local PCA model with rank $r=12$. The sampler runs for $K=20$ steps from $t_{\max}=2.0$ to $t_{\min}=5 \times 10^{-4}$.
\textbf{Validation:} We use $N_{\text{ref}}=10,000$ reference samples for importance sampling and metrics. The ground truth baseline is a MALA sampler running for $3,000$ steps (after $3,000$ warmup steps) with adaptive step size targeting an acceptance rate of $\approx 0.57$.

\subsection{Code Availability}
The source code, configuration scripts, and data generation utilities used to produce the results in this paper are available in the public GitHub repository:
\begin{center}
\url{https://github.com/alduston/CSE_diff}
\end{center}
The repository contains the exact Jupyter notebooks and Python scripts referenced in this reproducibility checklist, allowing for full replication of the regime sweeps, inverse problem solvers, and deblurring experiments.

\section*{Acknowledgments}
We thank Van Hai Nguyen (\texttt{hainguyen@utexas.edu}) for providing code used in the Navier--Stokes and heat-equation examples and for helpful guidance on adapting these solvers for our experiments.

{\footnotesize
\bibliography{refs, references}
}

\bibliographystyle{siam}

\end{document}